\newcommand\DoToC{%
  \startcontents
  \printcontents{}{1}{\hrulefill\vskip0pt}
  \vskip0pt \noindent\hrulefill
  }
\newdimen\arrowsize
\newcommand{\x}{\mathbf{x}}
\newcommand{\z}{\mathbf{z}}
\newcommand{\ind}{\perp\!\!\!\!\perp} 
\def\Eqref#1{Eq.~\ref{#1}}
\theoremstyle{plain}
\newtheorem{definition}{Definition}
\newtheorem{corollary}{Corollary}
\newtheorem{proposition}{Proposition}
\newtheorem{theorem}{Theorem}
\definecolor{ourmethod}{gray}{0.93}
\definecolor{myredcolor}{RGB}{215,48,39}
\definecolor{mygreencolor}{RGB}{26,152,80}
\newcommand{\cmark}{\textcolor{black}{\ding{51}}}
\newcommand{\xmark}{\textcolor{black}{\ding{55}}}
\newcommand{\ourtitle}{Temporally Disentangled Representation Learning}
\newcommand{\ourmeos}{\textbf{\texttt{TDRL}} }
\title{\ourtitle}
\author{%
  Weiran Yao\\
  CMU\\
  \texttt{weiran@cmu.edu} \\
  % examples of more authors
  \And
  Guangyi Chen \\
  CMU \& MBZUAI \\
  \texttt{guangyichen1994@gmail.com} \\
  \And
  Kun Zhang \\
  CMU \& MBZUAI \\
  \texttt{kunz1@cmu.edu} \\
  % \AND
  % Coauthor \\
  % Affiliation \\
  % Address \\
  % \texttt{email} \\
  % \And
  % Coauthor \\
  % Affiliation \\
  % Address \\
  % \texttt{email} \\
  % \And
  % Coauthor \\
  % Affiliation \\
  % Address \\
  % \texttt{email} \\
}
\begin{document}

\maketitle

\begin{abstract}

Recently in the field of unsupervised representation learning, strong identifiability results for disentanglement of causally-related latent variables have been established by exploiting certain side information, such as class labels, in addition to independence. However, most existing work is constrained by functional form assumptions such as independent sources or further with linear transitions, and distribution assumptions such as stationary, exponential family distribution. It is unknown whether the underlying latent variables and their causal relations are identifiable if they have arbitrary, nonparametric causal influences in between.  In this work, we establish the identifiability theories of nonparametric latent causal processes from their nonlinear mixtures under fixed temporal causal influences and analyze how distribution changes can further benefit the disentanglement. We propose \textbf{\texttt{TDRL}}, a principled framework to recover time-delayed latent causal variables and identify their relations from measured sequential data under stationary environments and under different distribution shifts. Specifically, the framework can factorize unknown distribution shifts into transition distribution changes under fixed and time-varying latent causal relations, and under observation changes in observation. Through experiments, we show that time-delayed latent causal influences are reliably identified and that our approach considerably outperforms existing baselines that do not correctly exploit this modular representation of changes. Our code is available at: \url{https://github.com/weirayao/tdrl}.

\end{abstract}

%Mention that  you don’t need the exponential distribution assumption and allow nonparametric causal relationship even in stationary environments? You may first talk about the limitation of the exponential family. Then cite that slow vae paper (if I understand it correctly). Next, you point out that it requires special assumptions on the noise distribution and independent processes. Finally you ask the question whether it’s identifiable if they have arbitrary, nonparametric causal influences in between.

\section{Introduction}
% Current issues of nonstationarity in time series on system identification and forecasting
% Identification = Causal discovery
% Quick adaptation can leverage causal structure
Causal reasoning for time-series data is a fundamental task in numerous fields \cite{berzuini2012causality,ghysels2016testing,friston2009causal}. Most existing work focuses on estimating the temporal causal relations among observed variables. However, in many real-world scenarios, the observed signals (e.g., image pixels in videos) do not have direct causal edges, but are generated by latent temporal processes or confounders that are causally related. Inspired by these scenarios, this work aims to uncover causally-related latent processes and their relations from  observed temporal variables. Estimating latent causal structure from observations, which we assume are unknown (but invertible) nonlinear mixtures of the latent processes, is very challenging. It has been found in \cite{locatello2019challenging,hyvarinen1999nonlinear} that without exploiting an appropriate class of assumptions in estimation, the latent variables are not identifiable in the most general case. As a result, one cannot make causal claims on the recovered relations in the latent space.

Recently, in the field of unsupervised representation learning, strong identifiability results of the latent variables have been established \cite{hyvarinen2016unsupervised,hyvarinen2017nonlinear,hyvarinen2019nonlinear,khemakhem2020variational,sorrenson2020disentanglement} by using certain side information in nonlinear Independent Component Analysis (ICA), such as class labels, in addition to independence. For time-series data, history information is widely used as the side information for the identifiability of latent processes. To establish identifiability, the existing approaches enforce different sets of functional and distributional form assumptions as constraints in estimation; for example, \textbf{(1)} PCL \cite{hyvarinen2017nonlinear}, GCL \cite{hyvarinen2019nonlinear}, HM-NLICA \cite{halva2020hidden} and SlowVAE \cite{klindt2020towards} assume mutually-independent sources in the data generating process. However, this assumption may severely distort the identifiability if the latent variables have time-delayed causal relations in between (i.e., causally-related process); \textbf{(2)} SlowVAE \cite{klindt2020towards} and SNICA \cite{halva2021disentangling} assume linear relations, which may distort the identifiability results if the underlying transitions are nonlinear, and \textbf{(3)} SlowVAE \cite{klindt2020towards} assumes that the process noise is drawn from Laplacian distribution; i-VAE \cite{khemakhem2020variational} assumes that the conditional transition distribution is part of the exponential family. However, in real-world scenarios, one cannot choose a proper set of functional and distributional form assumptions without knowing in advance the parametric forms of the latent temporal processes. Our first step is hence to understand under what conditions the  latent causally processes are identifiable if they have \underline{nonparametric transitions} in between. With the proposed condition, our approach allows recovery of latent temporal \underline{causally-related processes} in \underline{stationary environments} without knowing their parametric forms in advance. 
\begin{wrapfigure}{r}{9cm}
\vspace{-0.25cm}
\centering
\includegraphics[width=\linewidth]{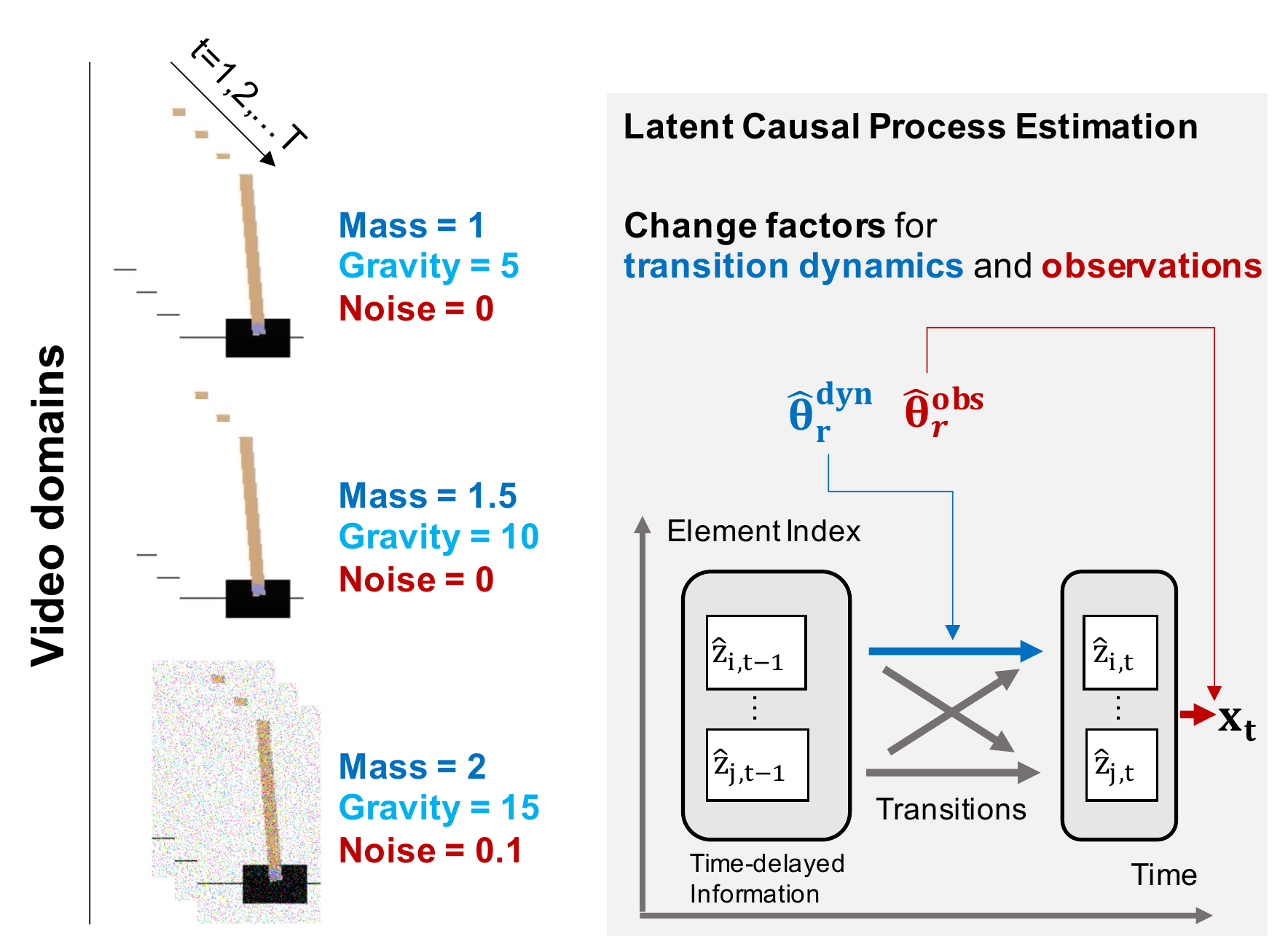}
\caption{\textbf{\texttt{TDRL}}: \underline{T}emporally \underline{D}isentangled \underline{R}epresentation \underline{L}earning. We exploit fixed causal dynamics and distribution changes from changing causal influences and global observation changes to identify the underlying causal processes.  $\hat{z}_{i,t}$ is the estimated latent process. $\boldsymbol{\hat{\theta}}_r^{\text{dyn}}$ is the change factor for \textcolor{blue}{\bf transition dynamics}, i.e., representing mass and gravity in this example. $\boldsymbol{\hat{\theta}}_r^{\text{obs}}$ is the change factor for \textcolor{red}{\bf observation}, i.e.,  noise scale.}
\label{fig:problem}
\vspace{-0.75cm}
\end{wrapfigure}
On the other hand, nonstationarity has greatly improved the identifiability results for learning the latent causal structure \cite{yao2021learning,bengio2019meta, ke2019learning}. For instance, LEAP \cite{yao2021learning} established the identifiability of latent temporal processes, but in limited nonstationary cases, under the condition that the distribution of the noise terms of the latent processes varies across all segments. Our second step is to analyze how distribution shifts benefit our stationary condition and to extend our condition to a general nonstationary case. Accordingly, our approach enables the recovery of latent temporal causal processes in a general nonstationary environment with \underline{time-varying relations} such as changes in the influencing strength or switching some edges off \cite{li2020causal} over time or domains.
% Our approach allows stationary and nonstationary latent processes to co-exist with \underline{partitioned subspaces} in the data generating processes. The identifiability results of our approach are appealing to real-world data in which how the distribution is changed across domains is unknown.
\begin{table}[t]
\vspace{-0.8cm}
\centering
\begin{minipage}[b]{\linewidth}
\centering
\caption{Attributes of nonlinear ICA theories for time-series. A check denotes that a method has an attribute or can be applied to a setting, whereas a cross denotes the opposite. $^\dagger$ indicates our approach.\label{tab:prior}}
%\vspace{-3mm}
\resizebox{\textwidth}{!}{
\begin{tabular}{lccccc}
\toprule
% Reference & Temporal & Causality  & Nonstationarity  & Nonparametric & XX \\
{\bf Theory} &
  \begin{tabular}[c]{@{}c@{}}\bf Time-varying  \\ \bf Relation\end{tabular} &
  \begin{tabular}[c]{@{}c@{}}\bf Causally-related \\ \bf Process\end{tabular} &
  \begin{tabular}[c]{@{}c@{}}\bf Partitioned \\ \bf Subspace\end{tabular} &
  \begin{tabular}[c]{@{}c@{}}\bf Nonparametric\\ \bf Transition\end{tabular}  &
  \begin{tabular}[c]{@{}c@{}}\bf Applicable to \\ \bf Stationary Environment \end{tabular}  
  \\ \hline
{PCL} & \xmark & \xmark & \xmark & \cmark &   \cmark \\
{GCL}  & \cmark & \xmark & \xmark & \cmark &   \cmark  \\
{HM-NLICA} & \xmark & \xmark & \xmark & \xmark &  \xmark \\
{SlowVAE}  & \xmark & \xmark & \xmark & \xmark &  \cmark \\
{SNICA} &\cmark & \cmark & \xmark &\xmark & \xmark \\
{i-VAE} &\cmark & \xmark & \xmark &\xmark & \xmark \\
{LEAP} &\xmark & \cmark & \xmark & \cmark & \xmark \\
\rowcolor{ourmethod} {\ourmeos$^\dagger$} & \cmark & \cmark & \cmark & \cmark & \cmark \\
\bottomrule
\end{tabular}
}
\end{minipage}
\vspace{-0.5cm}
\end{table}

Given the identifiability results, we propose a learning framework, called \ourmeos, to recover nonparametric time-delayed latent causal variables and identify their relations from measured temporal data under stationary environments and under nonstationary environments in which it is unknown in advance how the joint distribution changes across domains (we define it as ``unknown distribution shifts''). For instance,  Fig.~\ref{fig:problem} shows an example of multiple video domains of a physical system under different mass, gravity, and environment rendering settings\footnote{The variables and functions with ``hat''  are estimated by the model; the ones without ``hat''  are ground truth.}. With \ourmeos, the differences across segments are characterized by the learned change factors $\boldsymbol{\hat{\theta}}_r^{\text{dyn}}$ of domain $r$ (note that domain index is given to the model) that encode changes in transition dynamics, and changes in observation or styles modeled by $\boldsymbol{\hat{\theta}}_r^{\text{obs}}$ (we use ``causal dynamics'' and ``latent causal relations/influences'' interchangeably). We then present a generalized time-series data generative model that takes these change factors as arguments for modeling the distribution changes. Specifically, we factorize unknown distribution shifts into transition distribution changes in stationary processes, time-varying latent causal relations, and global changes in observation by constructing partitioned latent subspaces, and propose provable conditions under which nonparametric latent causal processes can be identified from their nonlinear invertible mixtures. We demonstrate through a number of real-world datasets, including video and motion capture data, that time-delayed latent causal influences are reliably identified from observed variables under stationary environments and unknown distribution shifts. Through experiments, we show that our approach considerably outperforms existing baselines that do not correctly leverage this modular representation of changes. 
% \paragraph{Summary of assumptions}

\section{Related Work}
\paragraph{Causal Discovery from Time Series} 
Inferring the causal structure from time-series data is critical to many fields including machine learning~\cite{berzuini2012causality},  econometrics~\cite{ghysels2016testing}, and neuroscience~\cite{friston2009causal}. Most existing work focuses on estimating the temporal causal relations between observed variables. For this task, constraint-based methods~\cite{entner2010causal} apply the conditional independence tests to recover the causal structures, while score-based methods~\cite{murphy2002dynamic,pamfil2020dynotears}
define score functions to guide a search process. Furthermore, 
% the hybrid methods
~\cite{malinsky2018causal,malinsky2019learning} propose to fuse both conditional independence tests and score-based methods. The Granger causality~\cite{granger1969investigating} and its nonlinear variations~\cite{tank2018neural, lowe2020amortized} are also widely used.

\paragraph{Nonlinear ICA for Time Series}
Temporal structure and nonstationarities were recently used to achieve identifiability in nonlinear ICA. Time-contrastive learning (TCL \cite{hyvarinen2016unsupervised}) used the independent sources assumption and leveraged sufficient variability in variance terms of different data segments. Permutation-based contrastive (PCL \cite{hyvarinen2017nonlinear}) proposed a learning framework which discriminates between true independent sources and permuted ones, and identifiable under the uniformly dependent assumption. HM-NLICA \cite{halva2020hidden} combined nonlinear ICA with a Hidden Markov Model (HMM) to automatically model nonstationarity without  manual data segmentation. i-VAE \cite{khemakhem2020variational} introduced VAEs to approximate the true joint distribution over observed and auxiliary nonstationary regimes. Their work assumes that the conditional distribution is within exponential families to achieve the identifiability of the latent space. The most recent literature on nonlinear ICA for time-series includes LEAP \cite{yao2021learning} and (i-)CITRIS \cite{lippe2022citris,lippe2022icitris}. LEAP proposed a nonparametric condition leveraging the nonstationary noise terms. However, all latent processes are changed across contexts and the distribution changes need to be modeled by nonstationary noise and it does not exploit the stationary nonparametric components for identifiability.  Alternatively, CITRIS proposed to use intervention target information for identification of scalar and multidimensional latent causal factors. This approach does not suffer from functional or distributional form constraints, but needs access to active intervention.
% \vspace{-0.2cm}
\section{Problem Formulation} 
% \vspace{-0.2cm}
% \color{gray}
\subsection{Time Series Generative Model}
% Now we start with a fundamental case where we just have stationary time series, which is a regular time series model. We establish the identifiability results in this fundamental case.  Note that the stationary process is not a required component in the latent space, but can actually be thrown away (for even better results). We further consider two cases, i.e., changing causal dynamics, global changes,  where the stationarity assumptions are violated, and show that the violation of stationarity in both ways can even further improve the identifiability results. This way, we can see how we can deal with different problem settings with our approach. Please refer to Lxxx - Lxxx.
\paragraph{Stationary Model} As a \textbf{fundamental} case, we first present a regular, stationary time-series generative process  where the observations $\mathbf{x}_t$ comes from a nonlinear (but invertible) mixing function $\mathbf{g}$ that maps the time-delayed causally-related latent variables $\mathbf{z}_t$ to $\mathbf{x}_t$. The latent variables or processes $\mathbf{z}_t$ have stationary, nonparametric time-delayed causal relations. Let $\tau$ be the time lag:
\begin{equation*}
   \underbrace{ \mathbf{x}_t = \mathbf{g}(\mathbf{z}_t) }_{\text{Nonlinear mixing}}, \quad \underbrace{z_{it} = f_i\left(\{z_{j, t-\tau} \vert z_{j, t-\tau} \in \mathbf{Pa}(z_{it}) \}, \epsilon_{it}  \right)}_{\text{Stationary nonparametric transition}} \; with \underbrace{\epsilon_{it} \sim p_{\epsilon_i}}_{\text{Stationary noise}}.
\end{equation*}
Note that with nonparametric causal transitions, the noise term $\epsilon_{it} \sim p_{\epsilon_i}$ (where $p_{\epsilon_i}$ denotes the distribution of $\epsilon_{it}$) and the time-delayed parents $\mathbf{Pa}(z_{it})$ of $z_{it}$ (i.e., the set of latent factors that directly cause $z_{it}$)  are interacted and transformed in an arbitrarily nonlinear way to generate $z_{it}$. Under stationarity assumptions, the mixing function $\mathbf{g}$, the transition functions $f_i$ and the noise distributions $p_{\epsilon_i}$ are invariant. Finally, we assume that the noise terms are mutually-independent (i.e., spatially and temporally independent), which implies that instantaneous causal influence between latent causal processes is not allowed by the formulation. The stationary time-series model in the fundamental case is used to establish the identifiability results under fixed causal dynamics in Section~\ref{sec:fixed}. 

\paragraph{Nonstationary Model} We further consider two violations of the stationarity assumptions in the fundamental case, which lead to two nonstationary time series models. Let $\mathbf{u}$ denote the  domain or regime index. Suppose there exist $m$ regimes of data, i.e., $u_r$ with $r=1,2,...,m$, with unknown distribution shifts. In practice, the changing parameters of the joint distribution across domains often lie in a low-dimensional manifold \cite{stojanov2019data}. Moreover, if the distribution is causally factorized, the distributions are often changed in a minimal and sparse way \cite{ghassami2018multi}. Based on these assumptions, we introduce the low-dimensional minimal change factor $(\boldsymbol{\theta}_r^{\text{dyn}}, \boldsymbol{\theta}_r^{\text{obs}})$, which was proposed in \cite{huang2021adarl}, to respectively capture distribution shifts in transition functions and observation. The vector $\boldsymbol{\theta}_r = (\boldsymbol{\theta}_r^{\text{dyn}}, \boldsymbol{\theta}_r^{\text{obs}})$ has a constant value in each domain but varies across domains. The formulation of the nonstationary time-series model is in line with \cite{huang2021adarl}. The nonstationary model is used to establish the identifiability results under nonstationary cases in Section~\ref{sec:change}, where we show that the violation of stationarity in both ways can even further improve the identifiability results. We first present the two nonstationary cases. \textbf{(1) Changing Causal Dynamics}. The causal influences between the latent temporal processes are changed across domains in this setting. We model it by adding the transition change factors $\boldsymbol{\theta}_r^{\text{dyn}}$ as inputs to the transition function: $z_{it} = f_i\left(\{z_{j, t-\tau} \vert z_{j, t-\tau} \in \mathbf{Pa}(z_{it}) \}, \boldsymbol{\theta}_r^{\text{dyn}}, \epsilon_{it}  \right)$. \textbf{(2) Global Observation Changes.} The global properties of the  time series (e.g., video styles) are changed across domains in this setting. Our model captures them using latent variables that represent global styles; these latent variables are generated by a bijection $f_i$ that transforms the noise terms $\epsilon_{i,t}$ into the latent with change factor $\boldsymbol{\theta}_r^{\text{obs}}$: $z_{i,t} = f_i\left(\boldsymbol{\theta}_r^{\text{obs}}, \epsilon_{i,t}  \right)$. Finally, we can deal with a more general nonstationary case by combining the three types of latent processes in the latent space in a modular way. \textbf{(3) Modular Distribution Shifts.}
\begin{wrapfigure}{r}{0.6\textwidth}
\begin{equation}
 \small
\vspace{-0.2cm}
\setlength{\abovedisplayskip}{4pt}
\setlength{\belowdisplayskip}{4pt}
\left \{
 \begin{array}{lll}
     z_{s, t}^{\text{fix}} &= f_s\left(\{z_{i, t-\tau} \vert z_{i, t-\tau} \in \mathbf{Pa}(z_{s,t}^{\text{fix}}) \}, \epsilon_{s,t}  \right), \\
     z_{c, t}^{\text{chg}} &= f_c\left(\{z_{i, t-\tau} \vert z_{i, t-\tau} \in \mathbf{Pa}(z_{c,t}^{\text{chg}}) \}, \boldsymbol{\theta}_r^{\text{dyn}}, \epsilon_{c,t}  \right), \\
     z_{o, t}^{\text{obs}} &= f_o\left(\boldsymbol{\theta}_r^{\text{obs}}, \epsilon_{o,t}  \right), \\
    \mathbf{x}_t &= \mathbf{g}(\mathbf{z}_t).
 \end{array}\right.
 \label{eq:model}
\end{equation} 
\end{wrapfigure}
The latent space has three blocks $\mathbf{z}_t = (\mathbf{z}_t^{\text{fix}}, \mathbf{z}_t^{\text{chg}}, \mathbf{z}_t^{\text{obs}})$ where $z_{s,t}^{\text{fix}}$ is the s\textsuperscript{th} component of the fixed dynamics parts, $z_{c,t}^{\text{chg}}$ is the c\textsuperscript{th} component of the changing dynamics parts, and $\mathbf{z}_{o,t}^{\text{obs}}$ is the o\textsuperscript{th} component of the observation changes. The functions $[f_s,f_c,f_o]$  capture fixed and changing transitions and observation changes for each dimension of $\mathbf{z}_t$ in Eq.~\ref{eq:model}.
\subsection{Identifiability of Latent Causal Processes and Time-Delayed Latent Causal Relations}

We define the identifiability of time-delayed latent causal processes in the representation function space in \textbf{Definition 1}. Furthermore, if the estimated latent processes can be identified at least up to permutation and component-wise invertible nonlinearities, the latent causal relations are also immediately identifiable because conditional independence relations fully characterize time-delayed causal relations in a time-delayed causally sufficient system, in which there are no latent causal confounders in the (latent) causal
processes. Note that invertible component-wise transformations on latent causal processes do not change their conditional independence
relations.

\begin{definition}[Identifiable Latent Causal Processes]
Formally let $\{\mathbf{x}_t\}_{t=1}^T$ be a sequence of observed variables generated by the true temporally causal latent processes specified by $(f_i, \boldsymbol{\theta}_r, p({\epsilon_i}), \mathbf{g})$ given in Eq.~\ref{eq:model}. A learned generative model $(\hat{f}_i, \hat{\boldsymbol{\theta}}_r, \hat{p}({\epsilon_i}), \hat{\mathbf{g}})$ is observationally equivalent to $(f_i, \boldsymbol{\theta}_r, p({\epsilon_i}), \mathbf{g})$ if the model distribution $p_{\hat{f}, \hat{\boldsymbol{\theta}}_r, \hat{p}_\epsilon, \hat{\mathbf{g}}}(\{\mathbf{x}_t\}_{t=1}^T)$ matches the data distribution $ p_{f, \boldsymbol{\theta}_r, p_\epsilon, \mathbf{g}}(\{\mathbf{x}_t\}_{t=1}^T)$ everywhere. We say latent causal processes are identifiable if observational equivalence can lead to identifiability of the latent variables up to permutation $\pi$ and component-wise invertible transformation $T$:
\begin{equation}\label{eq:iden}
%\vspace{-0.2cm}
\setlength{\abovedisplayskip}{4pt}
\setlength{\belowdisplayskip}{4pt}
\begin{aligned}
p_{\hat{f}_i, \hat{\boldsymbol{\theta}}_r, \hat{p}_{\epsilon_i}, \hat{\mathbf{g}}}(&\{\mathbf{x}_t\}_{t=1}^T) =  p_{f_i, \boldsymbol{\theta}_r, p_{\epsilon_i}, \mathbf{g}} (\{\mathbf{x}_t\}_{t=1}^T) 
\Rightarrow \hat{\mathbf{g}}(\mathbf{x}_t) = \mathbf{g} \circ \pi \circ T, \quad \forall \mathbf{x}_t \in \mathcal{X},
\end{aligned}
\end{equation}
where $\mathcal{X}$ is the observation space.
\end{definition} 

%\subsection{Distribution Shift Correction}
%
%We first assume that the latent causal dynamics model in Eq.~\ref{eq:model} is identified (we will explain under what conditions the identifiability can be achieved in Sec.~\ref{sec:theory} and how in Sec.~\ref{sec:method}). Then correcting distribution shifts is equivalent to learning the $\boldsymbol{\theta}_r$ embeddings in the new environment. Following the minimal change principle \cite{ghassami2018multi,huang2020causal}, conditional (in-)dependence relations or mechanisms are often changed in a minimal way given the causal graph, i.e, only few number of changes are required to explain the variation in the joint distribution, we encode the change representation $\boldsymbol{\theta}_r$ with low-dimensional embeddings, in order to capture only the domain differences. 

\section{Identifiability Theory}\label{sec:theory} 

We establish the identifiability theory of nonparametric time-delayed latent causal processes under three different types of distribution shifts. W.l.o.g., we consider the latent processes with maximum time lag $L=1$. The extentions to arbitrary time lags are discussed in Appendix~\ref{sec:mlag-ap}. Let $k$ be the element index of the latent space $\mathbf{z}_t$ and the latent size is $n$. In particular, \textbf{(1)} under fixed temporal causal influences, we leverage the distribution changes $p(z_{k,t} \vert \mathbf{z}_{t-1})$ for different values of $\mathbf{z}_{t-1}$; \textbf{(2)} when the underlying causal relations change over time, we exploit the changing causal influences on $p(z_{k,t} \vert \mathbf{z}_{t-1}, u_r)$ under different domain $u_r$, and \textbf{(3)} under global observation changes, the nonstationarity $p(z_{k,t} \vert u_r)$ under different values of $u_r$ is exploited. The proofs are provided in Appendix~\ref{ap:theory}. The comparisons between existing theories are in Appendix~\ref{ap:compar}.

% We illustrate through examples that, even without explicit changes in causal influences, the latent causal processes are generally identifiable if the process noises are not perfectly Gaussian. We show that the identifiability results can further benefit from nonstationarity with weaker identifiability conditions. This identifiability theory extends the prior work \cite{yao2021learning, klindt2020towards, hyvarinen2017nonlinear} by allowing causal relations between latent factors, instead of independent sources, and partitioned subspaces with both fixed and/or changing causal influences. 

\subsection{Identifiability under Fixed Temporal Causal Influence}\label{sec:fixed}
Let $\eta_{kt} \triangleq \log p(z_{k, t} | \mathbf{z}_{t-1})$. Assume that $\eta_{kt}$ is twice differentiable in $z_{k,t}$ and is differentiable in $z_{l,t-1}$, $l=1,2,...,n$. Note that the parents of $z_{k,t}$ may be only a subset of $\mathbf{z}_{t-1}$; if $z_{l,t-1}$ is not a parent of $z_{k,t}$, then $\frac{\partial \eta_{kt}}{\partial z_{l,t-1}} = 0$.
Below we provide a {\it sufficient condition} for the identifiability of $\mathbf{z}_{t}$, followed by a discussion of specific unidentifiable and identifiable cases to illustrate how general it is.
 
\begin{theorem}[Identifiablity under a Fixed Temporal Causal Model] \label{Th1} 
Suppose there exists invertible function $\hat{\mathbf{g}}$  that maps $\mathbf{x}_t$ to $\hat{\mathbf{z}}_t$, i.e., 
\begin{equation} \label{eq:invert}
    \hat{\mathbf{z}}_t = \hat{\mathbf{g}}(\mathbf{x}_t)
\end{equation}
such that the components of $\hat{\mathbf{z}}_t$ are mutually  independent conditional on $\hat{\mathbf{z}}_{t-1}$. 
Let 
% \begin{equation} \label{eq:v}
% \begin{aligned}
% \mathbf{v}_{k,t} &\triangleq \Big(\frac{\partial^2 \eta_{kt}}{\partial z_{k,t} \partial z_{1,t-1}}, \frac{\partial^2 \eta_{kt}}{\partial z_{k,t} \partial z_{2,t-1}}, ..., \frac{\partial^2 \eta_{kt}}{\partial z_{k,t} \partial z_{n,t-1}} \Big)^\intercal \\ \mathring{\mathbf{v}}_{k,t} &\triangleq \Big(\frac{\partial^3 \eta_{kt}}{\partial z_{k,t}^2 \partial z_{1,t-1}}, \frac{\partial^3 \eta_{kt}}{\partial z_{k,t}^2 \partial z_{2,t-1}}, ..., \frac{\partial^3 \eta_{kt}}{\partial z_{k,t}^2 \partial z_{n,t-1}} \Big)^\intercal. 
% \end{aligned}
% \end{equation}

\begin{equation}
\label{eq:v}
\setlength{\abovedisplayskip}{1pt}
\setlength{\belowdisplayskip}{1pt}
\resizebox{.485\hsize}{!}{
$\mathbf{v}_{k,t} \triangleq \Big(\frac{\partial^2 \eta_{kt}}{\partial z_{k,t} \partial z_{1,t-1}}, \frac{\partial^2 \eta_{kt}}{\partial z_{k,t} \partial z_{2,t-1}}, ..., \frac{\partial^2 \eta_{kt}}{\partial z_{k,t} \partial z_{n,t-1}} \Big)^\intercal$},
\resizebox{.485\hsize}{!}{
$\mathring{\mathbf{v}}_{k,t} \triangleq \Big(\frac{\partial^3 \eta_{kt}}{\partial z_{k,t}^2 \partial z_{1,t-1}}, \frac{\partial^3 \eta_{kt}}{\partial z_{k,t}^2 \partial z_{2,t-1}}, ..., \frac{\partial^3 \eta_{kt}}{\partial z_{k,t}^2 \partial z_{n,t-1}} \Big)^\intercal$.}
\end{equation}
If for each value of $\mathbf{z}_t$, $\mathbf{v}_{1,t}, \mathring{\mathbf{v}}_{1,t}, \mathbf{v}_{2,t}, \mathring{\mathbf{v}}_{2,t}, ..., \mathbf{v}_{n,t}, \mathring{\mathbf{v}}_{n,t}$, as 
$2n$ vector functions in $z_{1,t-1}$, $z_{2,t-1}$, ..., $z_{n,t-1}$, are linearly independent, then $\mathbf{z}_{t}$ must be an invertible, component-wise transformation of a permuted version of $\hat{\mathbf{z}}_t$.

\end{theorem}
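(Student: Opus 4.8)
The plan is to exploit the two conditional-independence factorizations together with a change-of-variables identity, and then extract the Jacobian structure of the map between the true and estimated latents by repeated differentiation.

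First I would construct the bijection. Since $\mathbf{g}$ and $\hat{\mathbf{g}}$ are both invertible and the two models are observationally equivalent, there is an invertible map $h$ with $\hat{\mathbf{z}}_t = h(\mathbf{z}_t)$ (namely $h = \hat{\mathbf{g}}\circ\mathbf{g}$), applied identically at every time step. Writing $\hat\eta_{kt} \triangleq \log p(\hat z_{k,t}\mid\hat{\mathbf{z}}_{t-1})$ and using the change of variables for conditional densities together with the generative factorization $p(\mathbf{z}_t\mid\mathbf{z}_{t-1})=\prod_i p(z_{i,t}\mid\mathbf{z}_{t-1})$ and the hypothesized estimated-side factorization $p(\hat{\mathbf{z}}_t\mid\hat{\mathbf{z}}_{t-1})=\prod_k p(\hat z_{k,t}\mid\hat{\mathbf{z}}_{t-1})$, I obtain a master identity $\sum_k \hat\eta_{kt} = \sum_i \eta_{it} + \log|\det J_{h^{-1}}(\hat{\mathbf{z}}_t)|$, in which the Jacobian term depends only on the current latent $\hat{\mathbf{z}}_t$.

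Next comes the differentiation scheme, carried out with $(\mathbf{z}_t,\mathbf{z}_{t-1})$ as independent variables. Differentiating the master identity once with respect to a lagged coordinate $z_{l,t-1}$ annihilates the Jacobian term, since it is independent of $\mathbf{z}_{t-1}$. Differentiating the result with respect to a current coordinate $z_{j,t}$, and using that $\eta_{it}$ (resp. $\hat\eta_{kt}$) depends on the current latent only through its own component $z_{i,t}$ (resp. $\hat z_{k,t}$) — the crux where conditional independence enters — isolates, as a vector identity in $l$, the relation $\mathbf{v}_{j,t} = \sum_k (\partial\hat z_{k,t}/\partial z_{j,t})\,\tilde{\mathbf{v}}_k$, where $\tilde{\mathbf{v}}_k$ is the estimated-side analogue with $l$-th entry $\partial^2\hat\eta_{kt}/(\partial\hat z_{k,t}\partial z_{l,t-1})$. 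A third differentiation with respect to a second current coordinate $z_{i,t}$ yields, for $i=j$, the companion identity for $\mathring{\mathbf{v}}_{j,t}$ (involving $\tilde{\mathring{\mathbf{v}}}_k$ with entries $\partial^3\hat\eta_{kt}/(\partial\hat z_{k,t}^2\partial z_{l,t-1})$), and, for $i\neq j$ — where the left side vanishes identically because $\eta_{jt}$ has no dependence on $z_{i,t}$ — the vanishing linear combination $\mathbf{0} = \sum_k \frac{\partial^2\hat z_{k,t}}{\partial z_{i,t}\partial z_{j,t}}\tilde{\mathbf{v}}_k + \sum_k \frac{\partial\hat z_{k,t}}{\partial z_{i,t}}\frac{\partial\hat z_{k,t}}{\partial z_{j,t}}\tilde{\mathring{\mathbf{v}}}_k$, whose coefficients are constants once $\mathbf{z}_t$ is fixed.

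Finally I would transfer the linear-independence hypothesis. The identities for $\mathbf{v}_{j,t}$ and $\mathring{\mathbf{v}}_{j,t}$ express each of the $2n$ true-side vectors as a fixed linear combination of the $2n$ estimated-side vectors $\{\tilde{\mathbf{v}}_k,\tilde{\mathring{\mathbf{v}}}_k\}$; hence the latter span a space containing the former, so if the true-side vectors are linearly independent as functions of $\mathbf{z}_{t-1}$ (for each fixed $\mathbf{z}_t$), the estimated-side vectors must be linearly independent too. The vanishing combination then forces all of its coefficients to zero, in particular $\frac{\partial\hat z_{k,t}}{\partial z_{i,t}}\frac{\partial\hat z_{k,t}}{\partial z_{j,t}}=0$ for every $k$ and every $i\neq j$. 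Thus each $\hat z_{k,t}$ has nonzero derivative with respect to at most one current latent; since $h$ is invertible its Jacobian is a generalized permutation matrix, so $h$ (and hence $h^{-1}$) is a permutation composed with component-wise invertible maps, which is exactly the claim. The main obstacle I anticipate is the careful chain-rule bookkeeping that keeps the current-coordinate dependence ``diagonal'' on both sides through all three differentiations, together with the independence-transfer step, which is what actually converts the stated hypothesis into the usable statement about the estimated model.
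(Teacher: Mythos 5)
Your proposal is correct and rests on the same core mechanism as the paper's proof: the log-density change-of-variables identity between the two conditional factorizations, vanishing of a cross second derivative due to conditional independence, a further derivative with respect to a lagged coordinate to kill the Jacobian term, and the linear-independence hypothesis forcing the products of off-diagonal Jacobian entries to vanish. The one structural difference is the direction of differentiation. The paper writes $\log p(\hat{\mathbf{z}}_t\mid\hat{\mathbf{z}}_{t-1})=\sum_k\eta_{kt}+\log|\mathbf{H}_t|$ with $\mathbf{H}_t$ the Jacobian of $\mathbf{h}:\hat{\mathbf{z}}_t\mapsto\mathbf{z}_t$, differentiates with respect to $\hat z_{i,t},\hat z_{j,t},z_{l,t-1}$, and invokes the conditional independence of the \emph{estimated} components to zero out the left side; the resulting identity is a linear combination of the \emph{true-side} vectors $\mathbf{v}_{k,t},\mathring{\mathbf{v}}_{k,t}$, so the stated hypothesis applies directly and immediately gives $\mathbf{H}_{kit}\mathbf{H}_{kjt}=0$. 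You differentiate with respect to the true coordinates $z_{i,t},z_{j,t},z_{l,t-1}$, use the conditional independence of the \emph{true} components to zero out that side, and land on a combination of \emph{estimated-side} vectors $\tilde{\mathbf{v}}_k,\tilde{\mathring{\mathbf{v}}}_k$, which forces you to add the transfer lemma (the $2n$ linearly independent true-side vectors lie in the span of the $2n$ estimated-side vectors, hence the latter are also linearly independent). That lemma is sound — I checked that your expressions $\mathbf{v}_{j,t}=\sum_k(\partial\hat z_{k,t}/\partial z_{j,t})\tilde{\mathbf{v}}_k$ and $\mathring{\mathbf{v}}_{j,t}=\sum_k(\partial^2\hat z_{k,t}/\partial z_{j,t}^2)\tilde{\mathbf{v}}_k+\sum_k(\partial\hat z_{k,t}/\partial z_{j,t})^2\tilde{\mathring{\mathbf{v}}}_k$ follow from the chain rule, with coefficients constant once $\mathbf{z}_t$ is fixed — and the conclusion that each row of $J_h$ has a single nonzero entry, hence a generalized permutation Jacobian, matches the paper's. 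The paper's orientation is slightly more economical because it avoids the transfer step; yours buys a cleaner separation between where each of the two independence assumptions (true-side versus estimated-side) is used. Both versions share the same implicitly omitted detail that the permutation is constant across the domain.
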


The linear independence condition in Theorem \ref{Th1} is the core condition to guarantee the identifiability of $\mathbf{z}_t$ from the observed $\mathbf{x}_t$. To make this condition more intuitive, below we consider specific unidentifiable cases, in which there is no temporal dependence in $\mathbf{z}_t$ or the noise terms in $\mathbf{z}_t$ are additive Gaussian, and two identifiable cases, in which $\mathbf{z}_t$ has additive, heterogeneous noise or follows some linear, non-Gaussian temporal process. 
% Roughly speaking, for a randomly chosen conditional density function $p(z_{k,t}\,|\,\mathbf{z}_{t-1})$, the chance for this constraint to hold on its second- and third-order partial derivatives is slim. For illustrative purposes, below we make this claim more precise, by considering a specific unidentifiable case, in which the noise terms in $\mathbf{z}_t$ are additive Gaussian, and two identifiable cases, in which $\mathbf{z}_t$ has additive, heterogeneous noise or follows some linear, non-Gaussian temporal process.

% Let us start with two unidentifiable cases. 
% If all $z_{k,t}$ follow the additive noise model with Gaussian noise terms, i.e., 
% \begin{equation} \label{eq:Gaussian_case}
%     \mathbf{z}_t = \mathbf{q}(\mathbf{z}_{t-1}) + \mathbf{\epsilon}_t,
% \end{equation}
% where $\mathbf{q}$ is a transformation and the components of the Gaussian vector $\mathbf{\epsilon}_t$ are independent and also independent from $\mathbf{z}_{t-1}$. Then  $\frac{\partial^2 \eta_{kt}}{\partial z_{k,t}^2}$ is constant, and $\frac{\partial^3 \eta_{kt}}{\partial z_{k,t}^2 \partial z_{l,t-1}} \equiv 0$, violating the linear independence condition. In the following proposition we give some alternative solutions and verify the unidentifiability in this case.
Let us start with two unidentifiable cases. In case $\texttt{N}1$, $\mathbf{t}_t$ is an independent and identically distributed (i.i.d.) process, i.e., there is no causal influence from any component of $\mathbf{z}_{t-1}$ to any $z_{k,t}$. In this case,  $\mathbf{v}_{k,t}$ and $\mathring{\mathbf{v}}_{k,t}$ (defined in Eq.~\ref{eq:v}) are always $\mathbf{0}$ for $k=1,2,...,n$, since $p(z_{k,t} \,|\, \mathbf{z}_{t-1})$ does not involve $\mathbf{z}_{t-1}$. So the linear independence condition is violated. In fact, this is the regular nonlinear ICA problem with i.i.d. data, and it is well-known that the underlying independent variables are not identifiable \cite{hyvarinen1999nonlinear}. In case $\texttt{N}_2$, all $z_{k,t}$ follow an additive noise model with Gaussian noise terms, i.e., 
\begin{equation} \label{eq:Gaussian_case}
    \mathbf{z}_t = \mathbf{q}(\mathbf{z}_{t-1}) + \mathbf{\epsilon}_t,
\end{equation}
where $\mathbf{q}$ is a transformation and the components of the Gaussian vector $\mathbf{\epsilon}_t$ are independent and also independent from $\mathbf{z}_{t-1}$. Then  $\frac{\partial^2 \eta_{kt}}{\partial z_{k,t}^2}$ is constant, and $\frac{\partial^3 \eta_{kt}}{\partial z_{k,t}^2 \partial z_{l,t-1}} \equiv 0$, violating the linear independence condition. In the following proposition we give some alternative solutions and verify the unidentifiability in this case.

\begin{proposition}[Unidentifiability under Gaussian Noise]

Suppose $\mathbf{x}_t = \mathbf{g}(\mathbf{z}_t)$ was generated by Eq.~\ref{eq:Gaussian_case}, where the components of $\mathbf{\epsilon}_t$ are mutually independent Gaussian and also independent from $\mathbf{z}_{t-1}$. Then any $\hat{\mathbf{z}}_t = \mathbf{D}_1 \mathbf{U} \mathbf{D}_2 \cdot {\mathbf{z}}_t$, where $\mathbf{D}_1$ is an arbitrary non-singular diagonal matrix, $\mathbf{U}$ is an arbitrary orthogonal matrix, and $\mathbf{D}_2$ is a diagonal matrix with $\mathbb{V}ar^{-1/2}(\epsilon_{k,t})$ as its $k^{\text{th}}$  diagonal entry, is a valid solution to satisfy the condition that the components of $\hat{\mathbf{z}}_t$ are mutually independent conditional on $\hat{\mathbf{z}}_{t-1}$.

\end{proposition}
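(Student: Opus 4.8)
The plan is to verify directly that the stated transformation satisfies the conditional-independence requirement invoked in Theorem~\ref{Th1}, namely that the components of $\hat{\mathbf{z}}_t$ are mutually independent given $\hat{\mathbf{z}}_{t-1}$, and then observe that infinitely many such solutions exist. Write $\mathbf{M} \triangleq \mathbf{D}_1 \mathbf{U} \mathbf{D}_2$. First I would note that $\mathbf{M}$ is a product of nonsingular matrices (a nonsingular diagonal, an orthogonal, and the whitening diagonal $\mathbf{D}_2$), hence invertible, so $\hat{\mathbf{z}}_{t-1} = \mathbf{M}\mathbf{z}_{t-1}$ is a bijection of $\mathbf{z}_{t-1}$. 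Consequently conditioning on $\hat{\mathbf{z}}_{t-1}$ is equivalent to conditioning on $\mathbf{z}_{t-1}$, which lets me work entirely with the original generative conditional.

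Next I would compute the relevant conditional law. Under Eq.~\ref{eq:Gaussian_case} with $\boldsymbol{\epsilon}_t$ mutually independent Gaussian and independent of $\mathbf{z}_{t-1}$, the conditional $p(\mathbf{z}_t \mid \mathbf{z}_{t-1})$ is Gaussian with mean $\mathbf{q}(\mathbf{z}_{t-1})$ and diagonal covariance $\boldsymbol{\Sigma} = \mathrm{diag}\big(\mathbb{V}ar(\epsilon_{1,t}),\dots,\mathbb{V}ar(\epsilon_{n,t})\big)$. Since $\hat{\mathbf{z}}_t = \mathbf{M}\mathbf{z}_t$ is a fixed linear map and Gaussianity is preserved under linear transformations, $p(\hat{\mathbf{z}}_t \mid \hat{\mathbf{z}}_{t-1})$ is Gaussian with mean $\mathbf{M}\mathbf{q}(\mathbf{z}_{t-1})$ and covariance $\mathbf{M}\boldsymbol{\Sigma}\mathbf{M}^\top$. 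Because the family is jointly Gaussian, mutual conditional independence of the components of $\hat{\mathbf{z}}_t$ is equivalent to this covariance being diagonal.

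The crux is then a short algebraic cancellation. Using that $\mathbf{D}_2$ has $\mathbb{V}ar^{-1/2}(\epsilon_{k,t})$ as its $k^{\text{th}}$ diagonal entry, so that $\mathbf{D}_2 \boldsymbol{\Sigma}\mathbf{D}_2 = \mathbf{I}$ (a whitening step), together with $\mathbf{U}\mathbf{U}^\top = \mathbf{I}$ and the symmetry of diagonal $\mathbf{D}_1$, I obtain
\[
\mathbf{M}\boldsymbol{\Sigma}\mathbf{M}^\top = \mathbf{D}_1 \mathbf{U} \mathbf{D}_2 \boldsymbol{\Sigma} \mathbf{D}_2 \mathbf{U}^\top \mathbf{D}_1 = \mathbf{D}_1 \mathbf{U} \mathbf{U}^\top \mathbf{D}_1 = \mathbf{D}_1^2,
\]
which is diagonal. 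Hence the conditional covariance is diagonal and the components of $\hat{\mathbf{z}}_t$ are mutually independent given $\hat{\mathbf{z}}_{t-1}$, so $\hat{\mathbf{z}}_t = \mathbf{M}\mathbf{z}_t$ is indeed a valid solution.

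Finally I would draw the unidentifiability conclusion: since $\mathbf{U}$ ranges over \emph{all} orthogonal matrices (and $\mathbf{D}_1$ over all nonsingular diagonals), there is a continuum of distinct valid $\hat{\mathbf{z}}_t$, and for generic $\mathbf{U}$ the map $\mathbf{M}$ genuinely mixes coordinates rather than acting as a permutation composed with a component-wise (here, diagonal) transformation. Thus the conditional-independence criterion cannot pin down $\mathbf{z}_t$ up to the permutation-and-component-wise equivalence of Eq.~\ref{eq:iden}. I do not anticipate a serious obstacle here; the only points requiring care are the change of conditioning variable (justified by invertibility of $\mathbf{M}$) and the covariance bookkeeping leading to the whitening cancellation. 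For completeness one may also note that each such $\hat{\mathbf{z}}_t$ extends to a full alternative generative model reproducing the same observed distribution via $\hat{\mathbf{g}} = \mathbf{g}\circ \mathbf{M}^{-1}$, but this is routine and not needed for the stated claim.
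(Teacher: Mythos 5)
Your proposal is correct and follows essentially the same route as the paper: the paper writes $\hat{\mathbf{z}}_t = \mathbf{M}\mathbf{q}(\mathbf{z}_{t-1}) + \mathbf{M}\boldsymbol{\epsilon}_t$ and asserts that the components of $\mathbf{M}\boldsymbol{\epsilon}_t$ remain mutually independent and independent of the history, which is exactly your whitening computation $\mathbf{M}\boldsymbol{\Sigma}\mathbf{M}^\top = \mathbf{D}_1\mathbf{U}\mathbf{D}_2\boldsymbol{\Sigma}\mathbf{D}_2\mathbf{U}^\top\mathbf{D}_1 = \mathbf{D}_1^2$ combined with Gaussianity. Your version merely makes explicit the two points the paper leaves as "easy to verify" (the diagonal conditional covariance, and that conditioning on $\hat{\mathbf{z}}_{t-1}$ is equivalent to conditioning on $\mathbf{z}_{t-1}$ by invertibility of $\mathbf{M}$), plus the closing remark on why generic $\mathbf{U}$ breaks permutation-plus-component-wise identifiability, which the paper places in the surrounding discussion rather than the proof.
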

Roughly speaking, for a randomly chosen conditional density function $p(z_{k,t}\,|\,\mathbf{z}_{t-1})$ in which $z_{k,t}$ is not independent from $\mathbf{z}_{t-1}$ (i.e., there is temporal dependence in the latent processes) and which does not follow an additive noise model with Gaussian noise, the chance for its specific second- and third-order partial derivatives to be linearly dependent is slim. Now let us consider two cases in which the latent temporally processes $\mathbf{z}_t$ are naturally identifiable. First, consider case $\texttt{Y}1$, where $z_{k,t}$ follows a heterogeneous noise process, in which the noise variance depends on its parents:
\begin{equation} \label{eq:heteo}
    z_{k,t} = q_k(\mathbf{z}_{t-1}) + \frac{1}{b_k(\mathbf{z}_{t-1})}\epsilon_{k,t}.
\end{equation}
Here we assume $\epsilon_{k,t}$ is standard Gaussian and $\epsilon_{1,t}, \epsilon_{2,t}, .., \epsilon_{n,t}$ are mutually independent and independent from $\mathbf{z}_{t-1}$. $\frac{1}{b_k}$, which depends on $\mathbf{z}_{t-1}$, is the standard deviation of the noise in $z_{k,t}$. (For conciseness, we drop the argument of $b_k$ and $q_k$ when there is no confusion.) Note that in this model, if $q_k$ is 0 for all $k=1,2,...,n$, it reduces to a multiplicative noise model. 
The identifiability result of $\mathbf{z}_t$ is established in the following corollary.

\begin{corollary}[Identifiability under Heterogeneous Noise]

Suppose $\mathbf{x}_t = \mathbf{g}(\mathbf{z}_t)$ was generated according to Eq.~\ref{eq:heteo}, and Eq.~\ref{eq:invert} holds true. If  $b_k\cdot \frac{\partial b_k}{\partial \mathbf{z}_{t-1}}$ and $b_k\cdot \frac{\partial b_k}{\partial \mathbf{z}_{t-1}} (z_{k,t} - q_{k}) - b_k^2\cdot \frac{\partial q_{k}}{\partial \mathbf{z}_{t-1}}$, with $k=1,2,...,n$, which are in total $2n$ function vectors in $\mathbf{z}_{t-1}$, 
are linearly independent, then $\mathbf{z}_{t}$ must be an invertible, component-wise transformation of a permuted version of $\hat{\mathbf{z}}_t$.

\end{corollary}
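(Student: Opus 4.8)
The plan is to obtain this corollary as a direct specialization of Theorem~\ref{Th1}. I would substitute the heterogeneous-noise transition Eq.~\ref{eq:heteo} into the definitions in Eq.~\ref{eq:v}, compute the two derivative vectors $\mathbf{v}_{k,t}$ and $\mathring{\mathbf{v}}_{k,t}$ in closed form, and then show that the linear-independence hypothesis of Theorem~\ref{Th1} reduces to the linear independence of the two vectors written in the statement. The key simplification is that Eq.~\ref{eq:heteo} makes $z_{k,t}$ conditionally Gaussian given $\mathbf{z}_{t-1}$, with mean $q_k(\mathbf{z}_{t-1})$ and variance $b_k^{-2}(\mathbf{z}_{t-1})$, so the log-density is explicit,
\begin{equation*}
\eta_{kt} = \log p(z_{k,t}\mid\mathbf{z}_{t-1}) = \log b_k - \tfrac{1}{2}\log 2\pi - \tfrac{1}{2} b_k^2 (z_{k,t}-q_k)^2 ,
\end{equation*}
and everything reduces to differentiating this expression.

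First I would compute $\partial_{z_{k,t}}\eta_{kt} = -b_k^2(z_{k,t}-q_k)$ and $\partial^2_{z_{k,t}}\eta_{kt} = -b_k^2$. The second of these is the crucial point of contrast with the unidentifiable case $\texttt{N}2$ in Eq.~\ref{eq:Gaussian_case}: here $\partial^2_{z_{k,t}}\eta_{kt}$ genuinely depends on $\mathbf{z}_{t-1}$ through $b_k$, so its mixed third derivative need not vanish. Differentiating once more in $z_{l,t-1}$ then yields the two vectors of Eq.~\ref{eq:v},
\begin{equation*}
\mathring{\mathbf{v}}_{k,t} = -2\,b_k\frac{\partial b_k}{\partial \mathbf{z}_{t-1}}, \qquad
\mathbf{v}_{k,t} = -2\,b_k\frac{\partial b_k}{\partial \mathbf{z}_{t-1}}(z_{k,t}-q_k) + b_k^2\frac{\partial q_k}{\partial \mathbf{z}_{t-1}} .
\end{equation*}
Writing $\mathbf{w}^{(1)}_k := b_k\,\partial_{\mathbf{z}_{t-1}} b_k$ and $\mathbf{w}^{(2)}_k := b_k\,\partial_{\mathbf{z}_{t-1}} b_k\,(z_{k,t}-q_k) - b_k^2\,\partial_{\mathbf{z}_{t-1}} q_k$ for the two vectors in the statement, I would record the algebraic link $\mathring{\mathbf{v}}_{k,t} = -2\,\mathbf{w}^{(1)}_k$ and $\mathbf{v}_{k,t} = -\mathbf{w}^{(2)}_k - (z_{k,t}-q_k)\,\mathbf{w}^{(1)}_k$, so that for each $k$ the pair $(\mathbf{v}_{k,t},\mathring{\mathbf{v}}_{k,t})$ is the image of $(\mathbf{w}^{(1)}_k,\mathbf{w}^{(2)}_k)$ under a $2\times2$ matrix of determinant $-2\neq 0$. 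The conclusion would then follow by transporting the linear-independence hypothesis across this change of vectors and invoking Theorem~\ref{Th1}.

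The hard part is exactly this last transport step, and it is subtler than a constant change of basis: although the determinant is the constant $-2$, the transfer matrix contains the entry $z_{k,t}-q_k(\mathbf{z}_{t-1})$, which varies with $\mathbf{z}_{t-1}$, whereas the independence in Theorem~\ref{Th1} is functional independence with coefficients that are constant in $\mathbf{z}_{t-1}$ (for each fixed $\mathbf{z}_t$). To handle this honestly I would expand any hypothetical constant-coefficient relation $\sum_k a_k\mathbf{v}_{k,t}+c_k\mathring{\mathbf{v}}_{k,t}\equiv \mathbf{0}$ in terms of the three fundamental function vectors $b_k\,\partial_{\mathbf{z}_{t-1}} b_k$, $q_k\,b_k\,\partial_{\mathbf{z}_{t-1}} b_k$, and $b_k^2\,\partial_{\mathbf{z}_{t-1}} q_k$, obtaining the identity $\sum_k(-2a_k z_{k,t}-2c_k)\,b_k\partial b_k + 2a_k\,q_k b_k\partial b_k + a_k\,b_k^2\partial q_k \equiv \mathbf{0}$. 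When all $a_k=0$ this collapses to $\sum_k c_k\,\mathbf{w}^{(1)}_k\equiv \mathbf{0}$, which contradicts independence of the $\mathbf{w}$-family since each $\mathbf{w}^{(1)}_k$ belongs to it; so that case is closed. The residual issue is that when some $a_k\neq0$ the leftover $q_k\,b_k\partial b_k$ term is a \emph{non-constant} multiple of $\mathbf{w}^{(1)}_k$ and hence escapes the constant-coefficient hypothesis, so strictly the two conditions coincide only away from a nongeneric, degenerate configuration of $(b_k,q_k)$. I would argue that for heterogeneous transitions the conditions agree generically, and I would sanity-check the correspondence against case $\texttt{N}2$: there $b_k$ is constant, so $\mathbf{w}^{(1)}_k\equiv\mathbf{0}$ and $\mathring{\mathbf{v}}_{k,t}\equiv\mathbf{0}$ and both conditions fail simultaneously, consistent with the preceding unidentifiability proposition.
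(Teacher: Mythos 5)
Your route is exactly the paper's: write down the Gaussian log-density $\eta_{kt}=\log b_k-\tfrac{1}{2}\log 2\pi-\tfrac{1}{2}b_k^2(z_{k,t}-q_k)^2$, differentiate, and feed the resulting $\mathbf{v}_{k,t},\mathring{\mathbf{v}}_{k,t}$ into Theorem~\ref{Th1}. Your derivatives are the correct ones; the paper's own proof records $\frac{\partial^3\eta_{kt}}{\partial z_{k,t}^2\partial z_{l,t-1}}=-b_k\frac{\partial b_k}{\partial z_{l,t-1}}$ and $\frac{\partial^2\eta_{kt}}{\partial z_{k,t}\partial z_{l,t-1}}=-b_k\frac{\partial b_k}{\partial z_{l,t-1}}(z_{k,t}-q_k)+b_k^2\frac{\partial q_k}{\partial z_{l,t-1}}$, i.e.\ it drops the factor $2$ coming from $\partial(b_k^2)/\partial z_{l,t-1}$ in both places, and the corollary's hypothesis is worded to match those expressions. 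With the paper's constants each of $\mathbf{v}_{k,t},\mathring{\mathbf{v}}_{k,t}$ is just $-1$ times a stated vector, which is why its proof can simply say the linear independence ``reduces to'' the stated condition.

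The transport step you single out as the hard part is therefore an artifact of this constant mismatch, and your proposed resolution is not the right one. As you correctly observe, the matrix relating $(\mathbf{w}^{(1)}_k,\mathbf{w}^{(2)}_k)$ to $(\mathring{\mathbf{v}}_{k,t},\mathbf{v}_{k,t})$ has the non-constant entry $z_{k,t}-q_k(\mathbf{z}_{t-1})$, so linear independence does not transfer, and your own expansion shows the leftover term $2a_kq_kb_k\partial b_k$ escapes the constant-coefficient span. A genericity appeal does not close this: already for $n=1$, choosing $b_1^2\partial q_1=2\,b_1\partial b_1(z_{1,t}-q_1)-c\,b_1\partial b_1$ makes the stated pair linearly independent while $\{\mathbf{v}_{1,t},\mathring{\mathbf{v}}_{1,t}\}$ is dependent, so the two conditions are genuinely inequivalent, not merely off a thin set. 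The correct fix is to carry your factor of $2$ into the hypothesis, i.e.\ to read the second vector as $2b_k\frac{\partial b_k}{\partial\mathbf{z}_{t-1}}(z_{k,t}-q_k)-b_k^2\frac{\partial q_k}{\partial\mathbf{z}_{t-1}}$: then $\mathring{\mathbf{v}}_{k,t}$ and $\mathbf{v}_{k,t}$ are exact nonzero constant multiples of the two stated vectors, linear independence of the two families coincides, and Theorem~\ref{Th1} applies with no further argument. In short, your computation is more careful than the paper's, but the last step should be closed by correcting the constant in the statement rather than by a genericity argument.
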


Let us then consider another special case, denoted by $\texttt{Y}2$, with a linear, non-Gaussian temporal model for $\mathbf{z}_t$: the latent processes follow Eq.~\ref{eq:Gaussian_case}, with $\mathbf{q}$ being a linear transformation and $\epsilon_{k,t}$ following a particular class of non-Gaussian distributions. 
The following corollary shows that $\mathbf{z}_t$ is identifiable as long as each $z_{k,t}$ receives causal influences from some components of $\mathbf{z}_{t-1}$.

\begin{corollary}[Identifiability under a Specific Linear, Non-Gaussian Model for Latent Processes]

Suppose $\mathbf{x}_t = \mathbf{g}(\mathbf{z}_t)$ was generated according to Eq.~\ref{eq:Gaussian_case}, in which $\mathbf{q}$ is a linear transformation and for each $z_{k,t}$, there exists at least one $k'$ such that $c_{k,k'} \triangleq \frac{\partial z_{k,t}}{\partial z_{k',t-1}} \neq 0$. Assume the noise term $\epsilon_{k,t}$ follows a zero-mean generalized normal distribution:
\begin{equation}
p(\epsilon_{k,t}) \propto e^{-\lambda |\epsilon_{k,t}|^\beta}\textrm{,~~with positive $\lambda$ and }\beta > 2 \textrm{ and } \beta \neq 3.
\end{equation}
If Eq.~\ref{eq:invert} holds, then $\mathbf{z}_{t}$ must be an invertible, component-wise transformation of permuted  $\hat{\mathbf{z}}_t$.

\end{corollary}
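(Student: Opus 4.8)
The plan is to reduce the corollary to the linear-independence condition of Theorem~\ref{Th1} by specializing the vectors $\mathbf{v}_{k,t}, \mathring{\mathbf{v}}_{k,t}$ of \Eqref{eq:v} to this linear, non-Gaussian transition and then checking that they remain linearly independent. Since $\mathbf{q}$ is linear, the transition is an additive-noise model $z_{k,t} = \sum_{k'} c_{k,k'} z_{k',t-1} + \epsilon_{k,t}$, so the conditional density factors through the residual $r_k \triangleq z_{k,t} - \sum_{k'} c_{k,k'} z_{k',t-1} = \epsilon_{k,t}$, giving $\eta_{kt} = \log p(z_{k,t}\mid\mathbf{z}_{t-1}) = \mathrm{const} - \lambda |r_k|^\beta$. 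First I would compute the partial derivatives in \Eqref{eq:v} by the chain rule, using $\partial r_k/\partial z_{k,t}=1$ and $\partial r_k/\partial z_{l,t-1} = -c_{k,l}$.

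This yields the clean forms $\mathbf{v}_{k,t} = \lambda\beta(\beta-1)\,|r_k|^{\beta-2}\,\mathbf{c}_k$ and $\mathring{\mathbf{v}}_{k,t} = \lambda\beta(\beta-1)(\beta-2)\,|r_k|^{\beta-3}\,\mathrm{sgn}(r_k)\,\mathbf{c}_k$, where $\mathbf{c}_k=(c_{k,1},\dots,c_{k,n})^\intercal$ collects the transition coefficients into $z_{k,t}$. The hypothesis that each $z_{k,t}$ has at least one parent ($c_{k,k'}\neq0$ for some $k'$) guarantees $\mathbf{c}_k\neq\mathbf{0}$, so neither vector is identically zero. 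Two features drive the rest of the argument: for fixed $k$, $|r_k|^{\beta-2}$ is even in $r_k$ whereas $|r_k|^{\beta-3}\mathrm{sgn}(r_k)$ is odd, so $\mathbf{v}_{k,t}$ and $\mathring{\mathbf{v}}_{k,t}$ carry complementary information; and the non-smoothness of both profiles is concentrated on the distinct hyperplanes $\{r_k=0\}$ in $\mathbf{z}_{t-1}$-space.

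To verify linear independence I would assume a null combination $\sum_k(\alpha_k\mathbf{v}_{k,t}+\gamma_k\mathring{\mathbf{v}}_{k,t})\equiv \mathbf{0}$ in $\mathbf{z}_{t-1}$ and show all coefficients vanish. When the $\mathbf{c}_k$ are linearly independent this is immediate: one strips off the $\mathbf{c}_k$ componentwise, reducing to $\alpha_k|r_k|^{\beta-2}+\gamma_k'|r_k|^{\beta-3}\mathrm{sgn}(r_k)\equiv0$ in the now-free coordinate $r_k$, and the even/odd split forces $\alpha_k=\gamma_k=0$. Here $\beta>2$ is exactly what keeps the odd term alive (its prefactor $\beta-2$ is nonzero), which is the sharp contrast with the Gaussian case of \Eqref{eq:Gaussian_case}, where $\mathring{\mathbf{v}}_{k,t}\equiv\mathbf{0}$.

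The hard part is doing without linear independence of the $\mathbf{c}_k$: if several rows are proportional, the associated vectors collapse onto a common direction and independence must come entirely from the scalar profiles $|r_k|^{\beta-2}$ and $|r_k|^{\beta-3}\mathrm{sgn}(r_k)$ evaluated at affinely-related arguments. I would attack this by localizing near a single hyperplane $\{r_{k_0}=0\}$, away from the others, and matching the strength of non-smoothness there: for $2<\beta<3$ the third-order profile blows up like $|r_{k_0}|^{\beta-3}$ while the second-order profile stays bounded, which forces $\gamma_{k_0}=0$ and then $\alpha_{k_0}=0$. This asymmetry is precisely what breaks at $\beta=3$, where the two profiles degenerate to the bounded, piecewise-linear/constant pair $|r_{k_0}|$ and $\mathrm{sgn}(r_{k_0})$ and can satisfy spurious linear relations; hence the exclusion $\beta\neq3$ (together with $\beta>2$ to rule out the Gaussian collapse). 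Making this singularity-matching step rigorous for all admissible coefficient configurations is the main obstacle; once the linear-independence condition of Theorem~\ref{Th1} is in hand, identifiability of $\mathbf{z}_t$ up to permutation and component-wise invertible transformation follows immediately.
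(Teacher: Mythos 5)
Your proposal follows the same route as the paper's proof. The paper likewise differentiates $\eta_{kt}=\mathrm{const}-\lambda|\epsilon_{k,t}|^{\beta}$ with $\epsilon_{k,t}=z_{k,t}-\sum_{k'}c_{k,k'}z_{k',t-1}$, obtains $\mathbf{v}_{k,t}\propto|\epsilon_{k,t}|^{\beta-2}\mathbf{c}_k$ and $\mathring{\mathbf{v}}_{k,t}\propto\mathrm{sgn}(\epsilon_{k,t})\,|\epsilon_{k,t}|^{\beta-3}\mathbf{c}_k$ (your formulas match, up to an inconsequential sign and a typo in the paper), and then feeds the linear-independence hypothesis into Theorem~\ref{Th1}. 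Where you diverge is at the crux. The paper assumes a null combination $\sum_l(\alpha_{l1}\mathbf{v}_{l,t}+\alpha_{l2}\mathring{\mathbf{v}}_{l,t})=0$, reads off the $k'$-th coordinate for some $k'$ with $c_{k,k'}\neq 0$, and simply \emph{asserts} that the $2n$ scalar profiles $|\epsilon_{l,t}|^{\beta-2}$ and $|\epsilon_{l,t}|^{\beta-3}$, $l=1,\dots,n$, are linearly independent ``because of the different arguments involved,'' from which $\alpha_{k1}=\alpha_{k2}=0$ follows for every $k$. You correctly identify that this scalar linear independence is exactly the nontrivial point when several rows $\mathbf{c}_l$ are proportional, so that the arguments $\epsilon_{l,t}$ become affinely related functions of $\mathbf{z}_{t-1}$; the paper does not address that case at all.

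Your proposed repair, however, does not close it either. The singularity-matching argument on the hyperplane $\{\epsilon_{k_0,t}=0\}$ uses the blow-up of $|\epsilon_{k_0,t}|^{\beta-3}$, which only occurs for $2<\beta<3$; the corollary allows any $\beta>2$ with $\beta\neq3$, and for $\beta>3$ the profile is bounded. The case $\beta=4$ is instructive: the profiles degenerate to the polynomials $\epsilon_{k,t}^2$ and $\epsilon_{k,t}$, and if $\mathbf{c}_1$ and $\mathbf{c}_2$ are proportional then $\epsilon_{1,t},\epsilon_{2,t}$ are affine functions of the single scalar $u=\mathbf{c}_2^\intercal\mathbf{z}_{t-1}$, so the four functions $\epsilon_{1,t}^2,\epsilon_{1,t},\epsilon_{2,t}^2,\epsilon_{2,t}$ span at most the three-dimensional space of quadratics in $u$ and are genuinely linearly dependent; more drastically, if $\mathbf{c}_1=\mathbf{c}_2$ then at the value of $\mathbf{z}_t$ with $z_{1,t}=z_{2,t}$ one has $\mathbf{v}_{1,t}\equiv\mathbf{v}_{2,t}$ as functions of $\mathbf{z}_{t-1}$, so the hypothesis of Theorem~\ref{Th1} fails outright. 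In short: same approach and same computations as the paper, with the merit that you isolate the step the paper leaves unjustified, but your localization argument covers only $2<\beta<3$, and a complete proof of the proportional-rows case for general $\beta>3$ would require either an additional non-degeneracy assumption on the rows of $\mathbf{q}$ or a different argument for the scalar independence.
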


\subsection{Further Benefits from Changing Causal Influences}\label{sec:change}

Let $\eta_{kt}(u_r)  \triangleq \log p(z_{k, t} | \mathbf{z}_{t-1}, u_r)$ where $r=1,...,m$. LEAP \cite{yao2021learning} established the identifiability of the latent temporal causal processes $\mathbf{z}_t$ in certain nonstationary cases, under the condition that the noise term in each $z_{k,t}$, relative to its parents in $\mathbf{z}_{t-1}$, changes across $m$ contexts corresponding to $\mathbf{u} = u_1, u_2, ..., u_m$. Here we show that the identifiability result shown in the previous section can further benefit from nonstationarity of the causal model, and that our identifiability condition is generally much weaker than that in \cite{yao2021learning}: we allow changes in the noise term or causal influence on $z_{k,t}$ from its parents in $\mathbf{z}_{t-1}$, and our ``sufficient variability" condition is just a necessary condition for that in \cite{yao2021learning} because of the additional information that one can leverage.
Let $\mathbf{v}_{k,t}(u_r)$ be $\mathbf{v}_{k,t}$, which is defined in Eq.~\ref{eq:v}, in the $u_r$ context. Similarly, Let $\mathring{\mathbf{v}}_{k,t}(u_r)$ be $\mathring{\mathbf{v}}_{k,t}$ in the $u_r$ context. Let 
% $$\mathbf{s}_{k,t} \triangleq \Big( \mathbf{v}_{k,t}(u_1)^\intercal, ..., 
% \mathbf{v}_{k,t}(u_m)^\intercal, 
% \frac{\partial^2 \eta_{kt}({u}_{2})}{\partial z_{k,t}^2 } - 
%  \frac{\partial^2 \eta_{kt}({u}_{1})}{\partial z_{k,t}^2 }, ...,
%  \frac{\partial^2 \eta_{kt}({u}_{m})}{\partial z_{k,t}^2 } - 
%  \frac{\partial^2 \eta_{kt}({u}_{m-1})}{\partial z_{k,t}^2 }
%  \Big)^\intercal,$$
%  $$\mathring{\mathbf{s}}_{k,t} \triangleq \Big( \mathring{\mathbf{v}}_{k,t}(u_1)^\intercal, ..., 
% \mathring{\mathbf{v}}_{k,t}(u_m)^\intercal, 
% \frac{\partial \eta_{kt}({u}_{2})}{\partial z_{k,t} } - 
%  \frac{\partial \eta_{kt}({u}_{1})}{\partial z_{k,t} }, ...,
%  \frac{\partial \eta_{kt}({u}_{m})}{\partial z_{k,t} } - 
%  \frac{\partial \eta_{kt}({u}_{m-1})}{\partial z_{k,t} }
%  \Big)^\intercal.$$
\begin{equation}\label{eq:vns}
\setlength{\abovedisplayskip}{1pt}
\setlength{\belowdisplayskip}{1pt}
\resizebox{.465\hsize}{!}{
$\mathbf{s}_{k,t} \triangleq \Big( \mathbf{v}_{k,t}(u_1)^\intercal, ..., 
\mathbf{v}_{k,t}(u_m)^\intercal, 
\Delta^2_{2}, ...,
 \Delta^2_{m}
 \Big)^\intercal$},
\resizebox{.465\hsize}{!}{
$\mathring{\mathbf{s}}_{k,t} \triangleq \Big( \mathring{\mathbf{v}}_{k,t}(u_1)^\intercal, ..., 
\mathring{\mathbf{v}}_{k,t}(u_m)^\intercal, 
\Delta_{2}, ...,
 \Delta_{m}
 \Big)^\intercal,$}
\end{equation}

%  $$\mathbf{s}_{k,t} \triangleq \Big( \mathbf{v}_{k,t}(u_1)^\intercal, ..., 
% \mathbf{v}_{k,t}(u_m)^\intercal, 
% \Delta^2_{2}, ...,
%  \Delta^2_{m}
%  \Big)^\intercal,$$
%  $$\mathring{\mathbf{s}}_{k,t} \triangleq \Big( \mathring{\mathbf{v}}_{k,t}(u_1)^\intercal, ..., 
% \mathring{\mathbf{v}}_{k,t}(u_m)^\intercal, 
% \Delta_{2}, ...,
%  \Delta_{m}
%  \Big)^\intercal,$$
where  
 %please find a better substitute variable
$\Delta^2_{r}= \frac{\partial^2 \eta_{kt}({u}_{r})}{\partial z_{k,t}^2 } - 
 \frac{\partial^2 \eta_{kt}({u}_{r-1})}{\partial z_{k,t}^2 } $ and $\Delta_{r}= \frac{\partial \eta_{kt}({u}_{r})}{\partial z_{k,t} } - 
 \frac{\partial \eta_{kt}({u}_{r-1})}{\partial z_{k,t} }.$
As provided below, in our case, the identifiablity of $\mathbf{z}_t$ is guaranteed by the linear independence of the whole function vectors $\mathbf{s}_{k,t}$ and $\mathring{\mathbf{s}}_{k,t}$, with $k=1,2,...,n$.
However, the identifiability result in \cite{yao2021learning} relies on the linear independence of only the last $m-1$ components of $\mathbf{s}_{k,t}$ and $\mathring{\mathbf{s}}_{k,t}$ with $k=1,2,...,n$; this linear independence is generally a much stronger and restricted condition.

\begin{theorem}[Identifiability under Changing Causal Dynamics]
Suppose $\mathbf{x}_t = \mathbf{g}(\mathbf{z}_t)$ and that the conditional distribution $p(z_{k,t} \,|\, \mathbf{z}_{t-1})$ may change across $m$ values of the context variable $\mathbf{u}$, denoted by $u_1$, $u_2$, ..., $u_m$. Suppose the components of $\mathbf{z}_t$ are mutually independent conditional on $\mathbf{z}_{t-1}$ in each context. Assume that the components of $\hat{\mathbf{z}}_t$ produced by Eq.~\ref{eq:invert} are also mutually independent conditional on $\hat{\mathbf{z}}_{t-1}$. 
If the $2n$ function vectors $\mathbf{s}_{k,t}$ and $\mathring{\mathbf{s}}_{k,t}$, with $k=1,2,...,n$, are linearly independent, then $\hat{\mathbf{z}}_t$ is a permuted invertible component-wise transformation of $\mathbf{z}_t$. 
\end{theorem}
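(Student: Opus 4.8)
The plan is to mirror the differentiation argument behind Theorem~\ref{Th1} but to run it simultaneously across the $m$ contexts, exploiting the fact that the composite map $\mathbf{h} \triangleq \hat{\mathbf{g}} \circ \mathbf{g}$ relating the true and estimated latents, $\hat{\mathbf{z}}_t = \mathbf{h}(\mathbf{z}_t)$, is the \emph{same} in every context (only the transitions and noise depend on $u_r$). First I would fix a context $u_r$ and write the change-of-variables identity between the two factorized transition densities, $\sum_{k} \eta_{kt}(u_r) = \sum_{i} \hat{\eta}_{it}(u_r) + \log|\det J_{\mathbf h}(\mathbf z_t)|$, where $\hat{\eta}_{it}(u_r) = \log p(\hat z_{i,t}\mid \hat{\mathbf z}_{t-1}, u_r)$. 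The decisive structural fact is that the Jacobian term $\log|\det J_{\mathbf h}(\mathbf z_t)|$ depends only on $\mathbf z_t$ and not on $u_r$.

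Next I would generate two families of equations. (a) Within each context, differentiate the identity with respect to $z_{k,t}$ and then $z_{l,t-1}$, and with respect to $z_{k,t}$ twice and then $z_{l,t-1}$; since the Jacobian term is annihilated by $\partial/\partial z_{l,t-1}$, this reproduces exactly the Theorem~\ref{Th1} relations, yielding the blocks $\mathbf v_{k,t}(u_r)$ and $\mathring{\mathbf v}_{k,t}(u_r)$ on the left and linear combinations of the estimated-model derivatives weighted by the context-shared quantities $\partial h_i/\partial z_{k,t}$, $\partial^2 h_i/\partial z_{k,t}^2$ and $\partial h_j/\partial z_{l,t-1}$ on the right. (b) Across contexts, take the pure derivatives $\partial/\partial z_{k,t}$ and $\partial^2/\partial z_{k,t}^2$ of the identity and subtract the $u_{r-1}$ version from the $u_r$ version; because the Jacobian term is context-invariant it cancels in the difference, leaving the entries $\Delta_r$ and $\Delta^2_r$ on the left. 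This is the new leverage relative to Theorem~\ref{Th1}: differencing across contexts removes the Jacobian obstruction without needing a $z_{l,t-1}$ derivative, so the pure $z_{k,t}$-derivatives become usable and the left-hand sides assemble precisely into the stacked vectors $\mathbf s_{k,t}$ and $\mathring{\mathbf s}_{k,t}$ of \Eqref{eq:vns}.

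I would then assemble all $nm + (m-1)$ scalar equations of each family into the single vector identities $\mathbf s_{k,t} = \mathbf M\, \mathbf a_k$ and $\mathring{\mathbf s}_{k,t} = \mathbf M\, \mathring{\mathbf a}_k$, where $\mathbf M$ is built from the context-shared derivatives of $\mathbf h$ (in particular the invertible Jacobian $J_{\mathbf h}(\mathbf z_{t-1})$ acting on the $z_{l,t-1}$-coordinates) and the coefficient vectors $\mathbf a_k, \mathring{\mathbf a}_k$ collect the estimated-model quantities together with the weights $\partial h_i/\partial z_{k,t}$ and $\partial^2 h_i/\partial z_{k,t}^2$. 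The hypothesis that the $2n$ functions $\{\mathbf s_{k,t}, \mathring{\mathbf s}_{k,t}\}_{k=1}^n$ are linearly independent then forces the coefficient side to carry full information about $\mathbf h$: combining it with the conditional independence of $\hat{\mathbf z}_t$ given $\hat{\mathbf z}_{t-1}$ (the vanishing cross second-derivatives in distinct $\hat z_{i,t}$), I would show that any estimated component $\hat z_{i,t}$ depending on two distinct true components $z_{k,t}, z_{k',t}$ produces a nontrivial linear relation among the $\mathbf s$'s and $\mathring{\mathbf s}$'s, a contradiction; hence for each $i$ the partials $\partial h_i/\partial z_{k,t}$ are nonzero for exactly one $k$.

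Finally, since $\mathbf h$ is a diffeomorphism whose Jacobian at $\mathbf z_t$ has a single nonzero entry in each row (and hence, by invertibility, in each column), $\mathbf h$ must be the composition of a permutation with a component-wise invertible transformation; equivalently $\hat{\mathbf z}_t$ is a permuted, component-wise reparametrization of $\mathbf z_t$, which is the claimed identifiability. I expect the main obstacle to be exactly the penultimate step: rigorously converting the linear-independence hypothesis on the augmented vectors $\mathbf s_{k,t}, \mathring{\mathbf s}_{k,t}$ into the pointwise ``one nonzero partial per row'' conclusion on $J_{\mathbf h}$, since one must argue that the enlarged system (with the extra cross-context $\Delta$-rows) still pins down the Jacobian structure rather than merely constraining a projection of it, and one must control the dependence of all these quantities on the evaluation point $(\mathbf z_t, \mathbf z_{t-1})$ uniformly rather than at a single point.
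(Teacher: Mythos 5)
Your proposal follows essentially the same route as the paper's proof: the same two families of identities (derivatives with respect to the lagged variables within each context, which annihilate the $\log|\det \mathbf{H}_t|$ term, and cross-context differences of the pure current-step derivatives, which cancel it), assembled into the stacked vectors of Eq.~\ref{eq:vns} so that their linear independence forces each row of the Jacobian of the indeterminacy map to have a single nonzero entry, exactly as in Theorem~\ref{Th1}. The obstacle you flag in the penultimate step dissolves under the paper's bookkeeping: instead of differentiating the summed change-of-variables identity in the true coordinates, one differentiates $\log p(\hat{\mathbf{z}}_t \mid \hat{\mathbf{z}}_{t-1}; u_r)$ with respect to a pair of \emph{distinct} estimated coordinates $\hat{z}_{i,t}, \hat{z}_{j,t}$, so conditional independence makes the whole expression identically zero and each resulting equation is already a vanishing linear combination of the $\mathbf{s}_{k,t}, \mathring{\mathbf{s}}_{k,t}$ with coefficients $\mathbf{H}_{kit}\mathbf{H}_{kjt}$ and $\partial \mathbf{H}_{kit}/\partial \hat{z}_{j,t}$, whence $\mathbf{H}_{kit}\mathbf{H}_{kjt}=0$ follows immediately for all $k$ and $i \neq j$.
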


\begin{theorem}[Identifiability under Observation Changes]
Suppose $\mathbf{x}_t = \mathbf{g}(\mathbf{z}_t)$ and that the conditional distribution $p(z_{k,t} \,|\, \mathbf{u})$ may change across $m$ values of the context variable $\mathbf{u}$, denoted by $u_1$, $u_2$, ..., $u_m$. Suppose the components of $\mathbf{z}_t$ are mutually independent conditional on $\mathbf{u}$ in each context. Assume that the components of $\hat{\mathbf{z}}_t$ produced by Eq.~\ref{eq:invert} are also mutually independent conditional on $\hat{\mathbf{z}}_{t-1}$. 
If the $2n$ function vectors $\mathbf{s}_{k,t}$ and $\mathring{\mathbf{s}}_{k,t}$, with $k=1,2,...,n$, are linearly independent, then $\hat{\mathbf{z}}_t$ is a permuted invertible component-wise transformation of $\mathbf{z}_t$.
\end{theorem}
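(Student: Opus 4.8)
The plan is to show that the composition $\mathbf{h} \triangleq \hat{\mathbf{g}}\circ\mathbf{g}$, which is invertible and satisfies $\hat{\mathbf{z}}_t = \mathbf{h}(\mathbf{z}_t)$, reduces to a permutation composed with component-wise invertible maps; by Definition 1 this is exactly the claimed identifiability. In the observation-change regime the latents carry no time-delayed parents, so $\eta_{kt}(u_r) = \log p(z_{k,t}\,|\,u_r)$ depends on $z_{k,t}$ and the domain $u_r$ only. Consequently the $\mathbf{v}_{k,t}(u_r)$ and $\mathring{\mathbf{v}}_{k,t}(u_r)$ blocks of $\mathbf{s}_{k,t}$ and $\mathring{\mathbf{s}}_{k,t}$ (derivatives with respect to the absent $z_{l,t-1}$) vanish identically, and the hypothesis collapses to linear independence, at each fixed $\mathbf{z}_t$, of the $2n$ cross-domain difference vectors $(\Delta^2_2,\dots,\Delta^2_m)^\intercal$ and $(\Delta_2,\dots,\Delta_m)^\intercal$, $k=1,\dots,n$. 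I would take the domain index $\mathbf{u}$ as the conditioning side information for both the true and the estimated components, which is the natural choice since the observation block has no temporal structure. The starting identity is the per-context change of variables $\sum_k \eta_{kt}(u_r) = \sum_j \hat{\eta}_{jt}(u_r) + \log\lvert\det \mathbf{J}_{\mathbf{h}}\rvert$, where $\mathbf{J}_{\mathbf{h}}$ depends on $\mathbf{z}_t$ only and, crucially, not on $u_r$.

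The core of the argument exploits conditional independence on the estimated side: since the components of $\hat{\mathbf{z}}_t$ are mutually independent given $\mathbf{u}$, every off-diagonal entry of the Hessian of $\log\hat{p}(\hat{\mathbf{z}}_t\,|\,u_r)$ in the $\hat{\mathbf{z}}_t$ coordinates vanishes, i.e. $\partial^2 \log\hat{p}(\hat{\mathbf{z}}_t\,|\,u_r)/\partial \hat z_{j,t}\partial\hat z_{j',t} = 0$ for $j\neq j'$. I would rewrite this as a condition on the true model: writing $\log\hat p(\hat{\mathbf{z}}_t\,|\,u_r)=\sum_k\eta_{kt}(u_r)-\log\lvert\det\mathbf{J}_{\mathbf{h}}\rvert$ and letting $W_{kj}\triangleq \partial z_{k,t}/\partial\hat z_{j,t}$ denote the inverse Jacobian, the chain rule produces two families of terms,
\[
\sum_{k}\Big[\tfrac{\partial^2\eta_{kt}(u_r)}{\partial z_{k,t}^2}\,W_{kj}W_{kj'}+\tfrac{\partial\eta_{kt}(u_r)}{\partial z_{k,t}}\,\tfrac{\partial W_{kj}}{\partial\hat z_{j',t}}\Big]=\tfrac{\partial^2\log\lvert\det\mathbf{J}_{\mathbf{h}}\rvert}{\partial\hat z_{j,t}\partial\hat z_{j',t}},\qquad j\neq j'.
\]
The appearance of two independent coefficient families, $W_{kj}W_{kj'}$ multiplying a second-order derivative and $\partial W_{kj}/\partial\hat z_{j',t}$ multiplying a first-order derivative, is exactly why the identifiability condition must control $2n$ rather than $n$ vectors.

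Next I would difference this relation across adjacent contexts $u_r$ and $u_{r-1}$. The entire right-hand side, together with the Jacobian factors $W_{kj}W_{kj'}$ and $\partial W_{kj}/\partial\hat z_{j',t}$, is context-independent (it depends on $\mathbf{h}$ evaluated at the fixed point $\mathbf{z}_t=\mathbf{h}^{-1}(\hat{\mathbf{z}}_t)$), so only the derivatives of $\eta_{kt}$ carry a context label and the differencing leaves precisely $\Delta^2_r$ and $\Delta_r$. Stacking over $r=2,\dots,m$ for a fixed pair $j\neq j'$ then yields $\sum_{k}\big[W_{kj}W_{kj'}\,\mathbf{s}_{k,t}+(\partial W_{kj}/\partial\hat z_{j',t})\,\mathring{\mathbf{s}}_{k,t}\big]=\mathbf{0}$, a vanishing linear combination of the $2n$ vectors $\{\mathbf{s}_{k,t},\mathring{\mathbf{s}}_{k,t}\}$. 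By the assumed linear independence every coefficient must vanish; in particular $W_{kj}W_{kj'}=0$ for all $k$ and all $j\neq j'$, so each row of the inverse Jacobian $\mathbf{W}$ has at most one nonzero entry. Since $\mathbf{W}$ is nonsingular, each row has exactly one nonzero entry, so $\mathbf{W}$ (and hence $\mathbf{h}$) is a generalized permutation map and each $z_{k,t}$ is an invertible function of a single $\hat z_{\sigma(k),t}$, which is the desired conclusion.

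I expect the main obstacle to be the chain-rule bookkeeping in the second step: correctly propagating the two derivatives through the inverse map so that exactly the two coefficient families above survive, and verifying that differencing across contexts annihilates every term that is not tagged by $u_r$ (the log-determinant Hessian and both Jacobian factors). A secondary point requiring care is that the linear-independence hypothesis is pointwise in $\mathbf{z}_t$ over the $2(m-1)$-dimensional context-difference space, which implicitly forces $m\geq n+1$ domains; I would state this dimensional requirement explicitly. The overall scheme parallels the proofs of Theorems 1–3, with differencing across domains $u_r$ playing the role that differentiation with respect to the time-delayed parents $z_{l,t-1}$ plays under (changing) temporal dynamics.
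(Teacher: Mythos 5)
Your proposal is correct and follows essentially the same route as the paper: the paper proves Theorem 3 by observing that, absent temporal dependence, the $\mathbf{v}$-blocks vanish and the argument of Theorem 2 goes through verbatim — i.e., the vanishing cross-Hessian of $\log \hat p(\hat{\mathbf{z}}_t \mid u_r)$, expanded via the change of variables through the indeterminacy map, is differenced across contexts to produce a vanishing linear combination of $\mathbf{s}_{k,t}$ and $\mathring{\mathbf{s}}_{k,t}$ whose coefficients $\mathbf{H}_{kit}\mathbf{H}_{kjt}$ and $\partial \mathbf{H}_{kit}/\partial \hat z_{jt}$ must then all be zero. The only inaccuracy is peripheral: since the $\Delta^2_r$ and $\Delta_r$ blocks each occupy the same last $m-1$ coordinates of their respective vectors, pointwise linear independence of the $2n$ vectors requires $m \geq 2n+1$ contexts, not $m \geq n+1$ as you state.
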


\begin{corollary}[Identifiability under Modular Distribution Shifts] Assume the data generating process in Eq.~\ref{eq:model}. If the three partitioned latent components $\mathbf{z}_t = (\mathbf{z}_t^{\text{fix}}, \mathbf{z}_t^{\text{chg}}, \mathbf{z}_t^{\text{obs}})$ respectively satisfy the conditions in \textbf{Theorem} 1, \textbf{Theorem} 2, and \textbf{Theorem} 3, then $\mathbf{z}_{t}$ must be an invertible, component-wise transformation of a permuted version of $\hat{\mathbf{z}}_t$. 
\end{corollary}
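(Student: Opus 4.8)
The plan is to reduce the corollary to the three component-wise results already established, after first showing that the modular block structure itself is recovered. Writing $h \triangleq \hat{\mathbf{g}} \circ \mathbf{g}$ for the invertible map induced on the latent space by observational equivalence, so that $\hat{\mathbf{z}}_t = h(\mathbf{z}_t)$, the entire claim amounts to proving that $h$ is a permutation composed with a component-wise invertible transformation. I would split this into two stages: (i) \emph{block identifiability}, showing that $h$ cannot entangle the three kinds of latent variables, i.e. it maps the $\text{fix}$, $\text{chg}$ and $\text{obs}$ subspaces onto the corresponding estimated subspaces (up to relabeling of the blocks); and (ii) \emph{within-block identifiability}, applying Theorem~1, Theorem~2 and Theorem~3 separately inside each aligned block.

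The key to stage (i) is that the three blocks carry distinct invariance/variability signatures that must be preserved by any distribution-matching $h$. From Eq.~\ref{eq:model} and the mutual independence of the noise terms across blocks, the true components satisfy: observation components are conditionally independent of $\mathbf{z}_{t-1}$ given the domain index (they carry \emph{no} temporal dependence), fixed-dynamics components have a conditional law that is \emph{invariant} across domains $u_r$, while changing-dynamics components depend on \emph{both} the past and the domain. I would argue that if $h$ mixed an observation coordinate with any dynamic coordinate, the resulting estimated coordinate would inherit temporal dependence on $\hat{\mathbf{z}}_{t-1}$, contradicting the assumed conditional-independence structure of the learned observation block; an analogous argument using domain invariance separates the fixed block from the changing block. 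Together these two binary features (temporal dependence and domain dependence) uniquely tag each block, forcing the Jacobian of $h$ to be block structured and hence each block to be mapped invertibly onto its counterpart.

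With the blocks aligned, stage (ii) is immediate: restricting $h$ to the fixed subspace places us exactly in the setting of Theorem~1, whose linear-independence condition on $\mathbf{v}_{k,t}, \mathring{\mathbf{v}}_{k,t}$ is assumed to hold there, yielding a permuted component-wise transformation; restricting to the changing subspace invokes Theorem~2 with the nonstationary vectors $\mathbf{s}_{k,t}, \mathring{\mathbf{s}}_{k,t}$; and restricting to the observation subspace invokes Theorem~3. Concatenating the three within-block permutations and component-wise nonlinearities produces a global permutation $\pi$ and component-wise invertible $T$ with $\hat{\mathbf{g}}(\mathbf{x}_t) = \mathbf{g} \circ \pi \circ T$, which is the conclusion of Eq.~\ref{eq:iden}.

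I expect stage (i), the block-separation argument, to be the main obstacle. The delicate point is that invoking each theorem inside a block requires the within-block conditional independence and the corresponding log-density derivative conditions to survive after the other two blocks have been marginalized out; this is exactly what the cross-block mutual independence of the noise terms buys us, and I would need to verify carefully that the block-structured Jacobian makes the restricted maps genuinely invertible rather than merely injective onto a lower-dimensional image. A secondary subtlety is the ``boundary'' case in which a true component could in principle be simultaneously domain-invariant and temporally trivial; ruling this out, or absorbing it into one canonical block, is needed so that the three-way tagging is exhaustive and mutually exclusive.
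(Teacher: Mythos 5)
Your stage (ii) matches the spirit of the paper's argument, but your stage (i) — proving block identifiability first via distributional ``signatures'' — is where the proposal has a genuine gap, and it is also a detour the paper's proof never takes. The signature argument (``if $h$ mixed an observation coordinate with a dynamic coordinate, the estimated coordinate would inherit temporal dependence'') is heuristic: nothing rules out cancellation making a mixture of a history-dependent and a history-independent component conditionally independent of the past, and making this rigorous would force you back into exactly the second-/third-derivative machinery of Theorems 1--3. Worse, the corollary's hypotheses only assume the components of $\hat{\mathbf{z}}_t$ are mutually independent conditional on $\hat{\mathbf{z}}_{t-1}$ (and the domain); they do not hand you a designated estimated observation block that is assumed history-free, so the ``contradiction with the assumed conditional-independence structure of the learned observation block'' is appealing to structure not present in the statement. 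A further complication for your within-block reduction: the blocks are not dynamically self-contained — $\mathbf{Pa}(z_{s,t}^{\text{fix}})$ may contain lagged changing-dynamics components — so ``restricting $h$ to the fixed subspace'' does not literally place you in the setting of Theorem 1 on a smaller system; the relevant function vectors still live over all of $\mathbf{z}_{t-1}$.

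The paper's proof shows that the two-stage decomposition is unnecessary. Because the three partitioned subspaces are conditionally independent given the history and the domain index, the joint conditional log-density factorizes as $\log p(\hat{\mathbf{z}}_t \mid \hat{\mathbf{z}}_{t-1}, \mathbf{u}) = \sum_{k=1}^{n}\eta_{kt} + \log|\mathbf{H}_t|$, where each $\eta_{kt}$ is of fixed, changing, or observation type according to its block. Taking the cross derivatives $\partial^2/\partial\hat{z}_{it}\partial\hat{z}_{jt}$, then differentiating in $z_{l,t-1}$ and differencing across domains, yields a single system of linear relations in the full collection of $2n$ function vectors (the $\mathbf{v}$-parts for fixed components, the full $\mathbf{s}$-vectors for changing components, the $\Delta$-parts for observation components). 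The assumed linear independence of that whole collection forces $\mathbf{H}_{kit}\mathbf{H}_{kjt}=0$ for all $k$ and $i\neq j$, so every row of $\mathbf{H}_t$ has a single nonzero entry. Block alignment is then a corollary of the conclusion, not a prerequisite for it. If you want to salvage your outline, replace stage (i) by this direct joint-Jacobian argument; as written, the proposal leaves its hardest step unproved.
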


\section{Our Approach}\label{sec:method}
\subsection{\ourmeos: Temporally Disentangled Representation Learning}
Given our identifiability results, we propose \ourmeos framework to estimate the latent causal dynamics under modular distribution shifts, by extending Sequential Variational Auto-Encoders \cite{li2018disentangled} with tailored modules to model different distribution shifts, and enforcing the conditions
in Sec.~\ref{sec:theory}  as constraints. We give the estimation procedure of the latent causal dynamics model in Eq.~\ref{eq:model}. The model architecture is showcased in Fig.~\ref{fig:arch}. The framework has the following three major components. The implementation details are in Appendix~\ref{ap:arch}. Specifically, we leverage the partitioned estimated latent subspaces $\hat{\mathbf{z}}_t = (\hat{\mathbf{z}}_t^{\text{fix}}, \hat{\mathbf{z}}_t^{\text{chg}}, \hat{\mathbf{z}}_t^{\text{obs}})$ and model their distribution changes in conditional transition priors. We use $[f_s, f_c, f_o]$ to capture causal relations as in Eq.~\ref{eq:model}, where $f_s$ captures fixed causal influences, $f_c$ for changing causal influences, and $f_o$ for observation changes. Accordingly, we learn $[\hat{f}_s^{-1}, \hat{f}_c^{-1}, \hat{f}_o^{-1}]$ to output random process noise from the estimated direct cause (lagged states $\hat{\mathbf{z}}_{\text{Hx}}$) and effect (current states) variables $\hat{\mathbf{z}}_t$. 
\begin{wrapfigure}{r}{8.5cm}
\centering
\includegraphics[width=\linewidth]{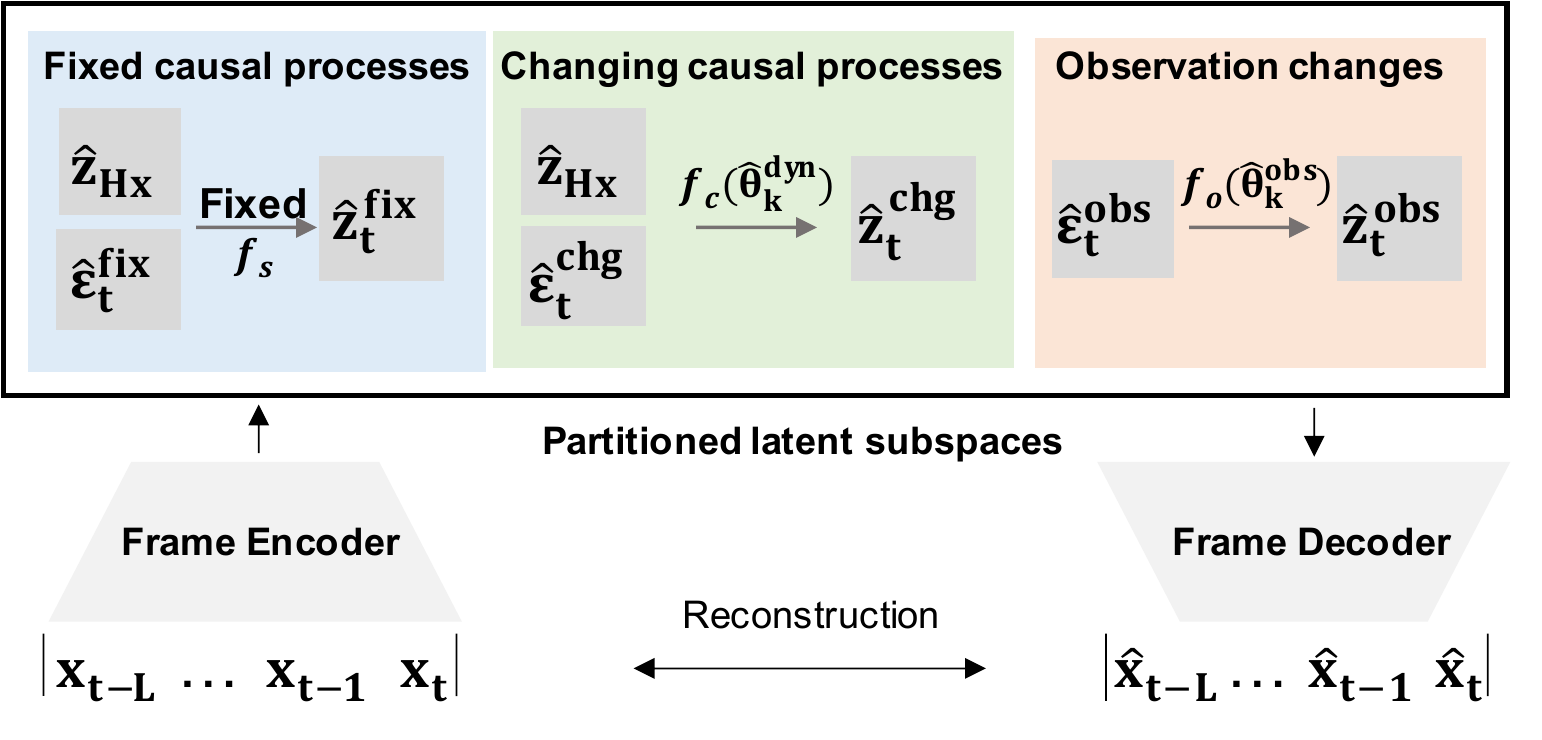}
% \vspace{-0.2cm}
\caption{\ourmeos describes each domain with change factors $(\hat{\boldsymbol{\theta}}_r^{\text{dyn}}, \hat{\boldsymbol{\theta}}_r^{\text{obs}})$ and inserts them into the prior models of the partitioned latent processes. The posteriors of the latent variables are inferred from image frames with variational autoencoder.}
\label{fig:arch}
\vspace{-1cm}
\end{wrapfigure}
\vspace{-0.5cm}
\paragraph{Change Factor Representation}
We learn to embed domain index $\mathbf{u}_r$ into low-dimensional change factors $(\hat{\boldsymbol{\theta}}_r^{\text{dyn}}, \hat{\boldsymbol{\theta}}_r^{\text{obs}})$ in Fig.~\ref{fig:arch} and insert them as external inputs to the (inverse) dynamics function $\hat{f}_c^{-1}(\hat{\boldsymbol{\theta}}_r^{\text{dyn}})$, or the observation bijector $\hat{f}_o^{-1}(\hat{\boldsymbol{\theta}}_r^{\text{obs}})$, respectively. And hence the distribution shifts are captured and utilized in the implementation.
\vspace{-0.25cm}
\paragraph{Modular Prior Network}
We follow standard conditional normalizing flow formulation \cite{dinh2016density,yao2021learning}. Let $\mathbf{z}_{\text{Hx}}$ denote the lagged latent variables up to maximum time lag $L$. In particular, for \textbf{1) fixed causal dynamics processes} $\hat{\mathbf{z}}_t^{\text{fix}}$, their transition priors  are obtained by first learning inverse transition functions $f_s^{-1}$ that take the estimated latent variables and output random noise terms, and applying the change of variables formula to the transformation: $p(\hat{z}_{s,t}^{\text{fix}} \vert \hat{\mathbf{z}}_{\text{Hx}}) = p_{\epsilon_s}\left(\hat{f}_s^{-1}(\hat{z}_{s,t}^{\text{fix}}, \hat{\mathbf{z}}_{\text{Hx}})\right)\Big|\frac{\partial \hat{f}_s^{-1}}{\partial \hat{z}_{s,t}^{\text{fix}}}\Big|$; for \textbf{2) changing causal dynamics}, we evaluate $p(\hat{z}_{c,t}^{\text{chg}} \vert \hat{\mathbf{z}}_{\text{Hx}}, \mathbf{u}_r) = p_{\epsilon_c}\left(\hat{f}_c^{-1}(\hat{z}_{c,t}^{\text{chg}}, \hat{\mathbf{z}}_{\text{Hx}}, \hat{\boldsymbol{\theta}}_r^{\text{dyn}})\right)\Big|\frac{\partial \hat{f}_c^{-1}}{\partial \hat{z}_{c,t}^{\text{chg}}}\Big|$ by learning a holistic inverse dynamics $\hat{f}_c^{-1}$ that takes the estimated change factors for dynamics $\hat{\boldsymbol{\theta}}_r^{\text{dyn}}$ as inputs, and similarly for \textbf{3) global observation changes} $\hat{\mathbf{z}}_t^{\text{obs}}$, we learn to project them to invariant noise terms by $\hat{f}_o^{-1}$ which takes the change factors $\boldsymbol{\theta}_r^{\text{obs}}$ as arguments, and obtains  $p(\hat{z}_{o,t}^{\text{obs}} \vert \mathbf{u}_r) = p_{\epsilon_o}\left(\hat{f}_o^{-1}(\hat{z}_{o,t}^{\text{obs}}, \hat{\boldsymbol{\theta}}_r^{\text{obs}})\right)\Big|\frac{\partial \hat{f}_o^{-1}}{\partial \hat{z}_{o,t}^{\text{obs}}}\Big|$ as the prior. \textbf{Conditional independence of the estimated latent variables} $p(\hat{\mathbf{z}}_t | \hat{\mathbf{z}}_{\text{Hx}})$ is enforced by summing up all estimated component densities when obtaining the joint $p(\mathbf{z}_t \vert \mathbf{z}_{\text{Hx}}, \mathbf{u})$ in Eq.~\ref{eq:np-trans}. Given that the Jacobian is lower-triangular, we can compute its determinant as the product of diagonal terms. The detailed derivations are given in Appendix~\ref{ap:derive}.
% \vspace{-0.5cm}
\begin{equation}\label{eq:np-trans}
\setlength{\abovedisplayskip}{1pt}
\setlength{\belowdisplayskip}{1pt}
\log p\left(\hat{\mathbf{z}}_t \vert \hat{\mathbf{z}}_{\text{Hx}}, \mathbf{u}_r\right) = \underbrace{\sum_{i=1}^n \log p(\hat{\epsilon_i} \vert \mathbf{u}_r)}_{\text{Conditional indepdence}}+ \underbrace{\sum_{i=1}^n \log \Big| \frac{\partial \hat{f}^{-1}_i}{\partial \hat{z}_{it}}\Big|}_{\text{Lower-triangular Jacobian}}  
\end{equation}
%TODO: FINAL CHECK%
\vspace{-0.5cm}
\paragraph{Factorized Inference} We infer the posteriors of each time step $q(\hat{\mathbf{z}}_t \vert \mathbf{x}_t)$ using only the observation at that time step, because in Eq.~\ref{eq:model}, $\mathbf{x}_t$ preserves all the information of the current system states so the joint probability $q(\mathbf{\hat z}_{1:T}|\mathbf{x}_{1:T})$ can be factorized into product of these terms. We approximate the posterior $q(\mathbf{\hat z}_t|\mathbf{x}_t)$ with an isotropic Gaussian with mean and variance  from the inference network.
%\begin{algorithm}[htb]
%   \caption{\textsc{CorrectShift}$(\hat{f}_i, \hat{\boldsymbol{\theta}}_r, \hat{p}({\epsilon_i}), \hat{g})$}
%   \label{alg:correct}
%\begin{algorithmic}
%   \STATE {\bfseries Input:} Data $\{\mathbf{x}_t\}_{t=1}^T$ for a new environment $k^\ast$. 
%   \STATE Initialize change factors $\hat{\boldsymbol{\theta}}_{k^\ast}$ for environment $k^\ast$ and set as model parameters. Freeze other parameters. 
%\WHILE{$\hat{\boldsymbol{\theta}}_{k^\ast}$ does not converge}
%   \STATE $\hat{\boldsymbol{\theta}_{k^\ast}} \leftarrow \hat{\boldsymbol{\theta}_{k^\ast}} + \alpha \nabla_{\hat{\boldsymbol{\theta}_{k^\ast}}} \mathcal{L}_{\text{ELBO}}(\{\mathbf{x}_t\}_{t=1}^T, \hat{\boldsymbol{\theta}_{k^\ast}})$
% \ENDWHILE
%\STATE \textbf{return} the corrected model $(\hat{f}_i, \hat{\boldsymbol{\theta}}_{k^\ast}, \hat{p}({\epsilon_i}), \hat{g})$
% \end{algorithmic}
%\end{algorithm}
% \vspace{-0.4cm}
\paragraph{Optimization} We train \ourmeos using the ELBO objective $\mathcal{L}_{\text{ELBO}} = 
\frac{1}{N} \sum_{i \in N} \mathcal{L}_{\text{Recon}} - \beta \mathcal{L}_{\text{KLD}}$, in which we use mean-squared error (MSE) for the reconstruction likelihood $\mathcal{L}_{\text{Recon}}$.  The KL divergence  $\mathcal{L}_{\text{KLD}}$ is estimated via a sampling approach since with a learned nonparametric modular transition prior, the distribution does not have an explicit form. Specifically, we obtain the log-likelihood of the posterior, evaluate the prior $\log p\left(\hat{\mathbf{z}}_t \vert \hat{\mathbf{z}}_{\text{Hx}}, \mathbf{u}_r\right)$ in \Eqref{eq:np-trans}, and compute their mean difference in the dataset as the KL loss: $\mathcal{L}_{\text{KLD}} = \mathbb{E}_{\mathbf{\hat z}_t \sim q\left(\mathbf{\hat z}_t^{(i)} \vert \mathbf{x}_t^{(i)}\right)} \log q(\mathbf{\hat z}_t|\mathbf{x}_t) - \log p(\hat{\mathbf{z}}_t \vert \hat{\mathbf{z}}_{\text{Hx}}, \mathbf{u}_r)$.
\vspace{-0.3cm}
% \paragraph{Connections to Theory} To ensure identifiability, what we assume in the theoretical analysis for the estimation framework are the distribution shift modeling through change factors and mutual independence of the estimated noise. The first was achieved by modular prior and the second was enforced by Eq.~\ref{eq:np-trans} during the evaluation of  transition priors. The other conditions include assumptions over data properties and do not need to be enforced as constraints in the implementation.
\subsection{Causal Visualization}
% \vspace{-0.5cm}
For visualization purposes, when the underlying latent processes have sparse causal relations, we  fit LassoNet \cite{lemhadri2021lassonet} on the latent processes recovered by \ourmeos to interpret the causal relations. Specifically, we fit LassoNet to predict $\hat{\mathbf{z}}_t$ using the estimated history information $\hat{\mathbf{z}}_{\text{Hx}} = \{\hat{\mathbf{z}}_{t-\tau}\}_{\tau=1}^{L}$ up to maximum time lag $L$.
Note that this postprocessing step is optional – the latent causal relations have already been captured in the learned transition functions in \ourmeos. Also, our identifiability conditions do not rely on the sparsity of causal relations in the latent processes.
%We leverage the low-rank structure or distributional forms of temporal dynamics to identify latent state variables from observations. The problem of RL is often formulated as POMDP $(\mathcal{X}, \mathcal{S}, \mathcal{U},  \mathcal{F}, \mathcal{R}, \mathcal{G})$, where $\mathcal{X}$ is the observation space, $\mathcal{S}$ is the state space, $\mathcal{U}$ is the action space, $\mathcal{F}: \mathcal{S} \times \mathcal{U} \rightarrow \mathcal{S}$ characterises the dynamics of underlying states, $\mathcal{R}: \mathcal{S} \times \mathcal{U} \rightarrow \mathbb{R}$ is the reward function, and $\mathcal{G}: \mathcal{S}  \rightarrow \mathbb{X}$ maps the states to observations as their nonlinear mixtures. 
%
%\subsection{Clustering: State Aggregation}
%
%lalalal
%
%\subsection{Option: Temporal Aggregation}
%\subsection{Estimation of Time-varying Latent Causal Model}
%We aim to answer the following questions:
%\vspace{-2mm}
%\begin{enumerate}[leftmargin=*] \itemsep0em
%
%\item Does LCD-NM reliably learn temporally-causal latent processes from scratch under different distribution changes? 
%
%\item Is history/nonstationary information necessary for the identifiability of latent causal variables? 
%
%\item How do assumptions in previous nonlinear ICA (i.e.,\ independent sources, nonstationary noises) distort the identifiability results if there are time-delayed causal relations between the latent factors, or if the nonstationary cases involve time-varying causal relations?

%\end{enumerate}
% \paragraph{Experiment Settings}
\section{Experiments}
We evaluate the identifiability results of \ourmeos on a number of simulated and real-world time-series datasets. We first introduce the evaluation metrics and baselines. \textbf{(1) Evaluation Metrics.} To evaluate the identifiability of the latent variables, we compute Mean Correlation Coefficient (MCC) on the test dataset. MCC is a standard metric in the ICA literature for continuous variables which measure the identifiability of the learned latent causal processes. MCC is close to 1 when latent variables are  identifiable up to permutation and component-wise invertible transformation in the noiseless case. \textbf{(2) Baselines.} Nonlinear ICA methods are used: \textbf{(1)} BetaVAE \cite{higgins2016beta} which ignores both history and nonstationarity information;  \textbf{(2)} iVAE \cite{khemakhem2020variational} and TCL \cite{hyvarinen2016unsupervised} which leverage \underline{nonstationarity} to establish identifiability but assumes \underline{independent factors}, and \textbf{(3)} SlowVAE \cite{klindt2020towards} and PCL \cite{hyvarinen2017nonlinear} which exploit \underline{temporal constraints} but assume \underline{independent sources} and \underline{stationary processes}, and \textbf{(4)} LEAP \cite{yao2021learning} which assumes \underline{nonstationary}, \underline{causal processes} but only models   \underline{nonstationary noise}. Two other disentangled deep state-space models with nonlinear dynamics models: Kalman VAE (KVAE \cite{fraccaro2017disentangled}) and Deep Variational Bayes Filters (DVBF \cite{karl2016deep}), are also used for comparisons.

\vspace{-0.45cm}
\begin{table}[ht]
\centering
\caption{MCC scores and their standard deviations for the three simulation settings over 3 random seeds. Note: The symbol ``--'' represents that this method is not applicable to this dataset.}
\label{tab:syn-results}
\resizebox{\textwidth}{!}{%
\begin{tabular}{@{}llllllllll@{}}
\toprule
\multirow{2}{*}{\textbf{\begin{tabular}[c]{@{}c@{}}Experiment\\ Settings\end{tabular}}} & \multicolumn{9}{c}{\textbf{Method}}                               \\ \cmidrule(l){2-10} 
                                                                                        & \ourmeos & LEAP & SlowVAE & PCL & i-VAE & TCL & betaVAE & KVAE & DVBF \\ \midrule
Fixed &
  \textbf{0.954 \textpm 0.009} &
  -- &
  0.411 \textpm 0.022 &
  0.516 \textpm 0.043 &
  -- &
  -- &
  0.353 \textpm 0.001 &
  \underline{0.832 \textpm 0.038} &
  0.778 \textpm 0.045
   \\
Changing &
  \textbf{0.958 \textpm 0.017} &
  \underline{0.726 \textpm 0.187} &
  0.511 \textpm 0.062 &
  0.599 \textpm 0.041 &
  0.581 \textpm 0.083 &
  0.399 \textpm 0.021 &
  0.523 \textpm 0.009 &
  \underline{0.711 \textpm 0.062} &
  0.648 \textpm 0.071
   \\
Modular &
  \textbf{0.993 \textpm 0.001} &
  \underline{0.657 \textpm 0.108} &
  0.406 \textpm 0.045 &
  0.564 \textpm 0.049 &
  0.557 \textpm 0.005 &
  0.297 \textpm 0.078 &
  0.433 \textpm 0.045 &
  0.632 \textpm 0.048 &
  0.678 \textpm 0.074
   \\ \bottomrule
\end{tabular}%
}
\end{table}
% \vspace{-0.55cm}
\subsection{Simulated Experiments}
We generate synthetic datasets that satisfy our identifiability conditions in the theorems following the procedures in Appendix~\ref{ap:synthetic}. As in Table~\ref{tab:syn-results}, our framework can recover the latent processes under fixed dynamics (heterogeneous noise model), under changing causal dynamics, and under modular distribution shifts with high MCCs (>0.95). The baselines that do not exploit history (i.e., $\beta$VAE, i-VAE, TCL), with independent source assumptions (SlowVAE, PCL), considers limited nonstationary cases (LEAP) distort the identifiability results. KVAE and DVBF achieve MCCs (0.8) under fixed dynamics but distorts the identifiability under changing dynamics and modular shift setting because they don't model changing causal relations and global observation changes. 

\subsection{Real-world Applications}

% We use two public datasets including modified cartpole \cite{huang2021adarl} and CMU MoCap database. The dataset descriptions are in Appendix~\ref{ap:real}. 

\paragraph{Video Data -- Modified Cartpole Environment}

We evaluate \ourmeos on the modified cartpole \cite{huang2021adarl} video dataset and compare the performances with the baselines. Modified Cartpole is a nonlinear dynamical system with cart positions $x_t$ and pole angles $\theta_t$ as the true state variables. The dataset descriptions are in Appendix \ref{ap:real}. We use 6 source domains with different gravity values $g= \{5, 10, 15, 20, 25, 30\}$. Together with the 2 discrete actions (i.e., left and right), we have 12 segments of data with changing causal dynamics. We fit \ourmeos with two-dimensional change factors $\boldsymbol{\theta}_r^{\text{dyn}}$. We set the latent size $n=8$ and the lag number $L=2$. In Fig.~\ref{fig:cartpole}, the latent causal processes are recovered, as seen from (a) high MCC for the latent causal processes; (b) the latent factors are estimated up to component-wise transformation; (c) \ourmeos outperforms the baselines and (d) the latent traversals confirm the two recovered latent variables correspond to the position and pole angle.

\begin{figure}[ht]
    \centering
    \includegraphics[width=\linewidth]{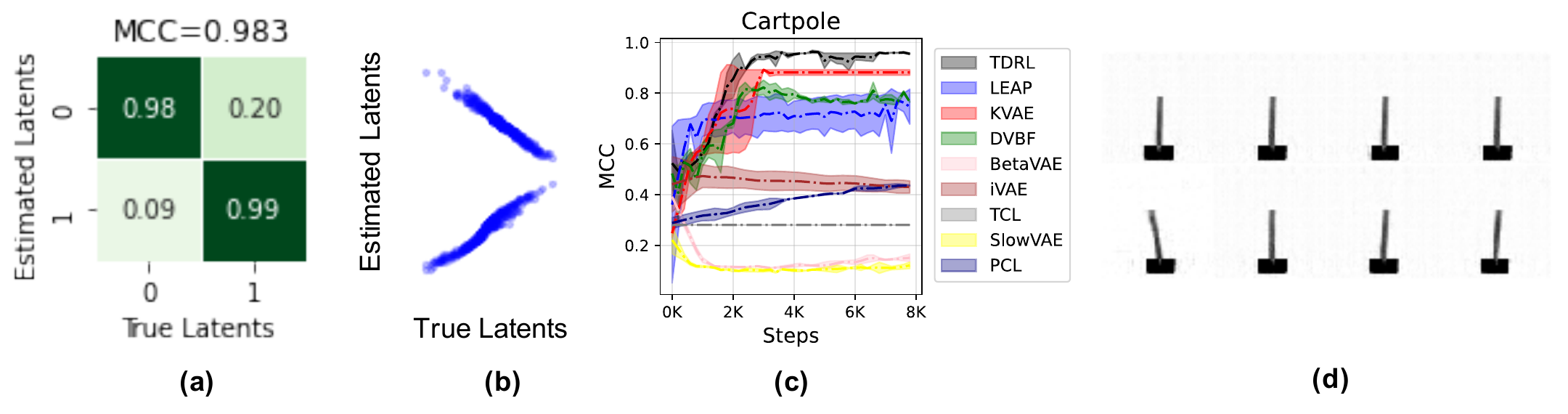}
    \caption{Modified Cartpole results: (a) MCC for causally-related factors; (b) scatterplots between estimated and true factors; (c) baseline comparisons, and (d) latent traversal on a fixed video frame;.  \label{fig:cartpole}}
\end{figure}

\paragraph{Motion Capture Data -- CMU-Mocap}
We experimented with another real-world motion capture dataset (CMU-Mocap). The dataset descriptions are in Appendix \ref{ap:real}. We fit \ourmeos with 11 trials of motion capture data for subject \#8. The 11 trials contain walk cycles with very different styles (e.g., slow walk, stride). We set latent size $n=3$ and lag number $L=2$. The differences between trials are captured through learning the 2-dimensional change factors for each trial. In Fig.~\ref{fig:mocap}(a), the learned change factors group similar walk styles into clusters; in Panel (c), three latent variables (which seem to be pitch, yaw, roll rotations) are found to explain most of the variances of human walk cycles. The learned latent coordinates (Panel b) show smooth cyclic patterns with differences among different walking styles. For the discovered skeleton (Panel d), roll and pitch of walking are found to be causally-related while yaw has independent dynamics. 

\begin{figure}[ht]
    \centering
    \includegraphics[width=\linewidth]{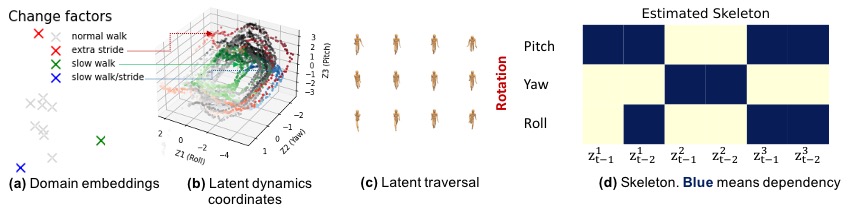}
    \caption{CMU-Mocap results (Subject \#8): (a) learned change factors; (b) latent coordinates dynamics for 11 trials; (c) ) latent traversal by rendering the reconstructed point clouds into the video frame; (d) estimated causal skeleton (blue indicates dependency). \label{fig:mocap}}
\end{figure}
% \vspace{-0.5cm}
% \paragraph{Distribution Correction}
% We take into account both interpolation (where the gravity in the target domain is in the support of that in source domains) with $g = \{15\}$,  and extrapolation (where it is out of the support w.r.t. the source domains) with $g = \{55\}$. 
% \vspace{-0.5cm}

\section{Conclusion}

In this paper, without relying on parametric or distribution assumptions, we established identifiability theories for nonparametric latent causal processes from their observed nonlinear mixtures in stationary environments and under unknown distribution shifts. \textbf{The basic limitation} of this work is that the underlying latent processes are assumed to have no instantaneous causal relations but only time-delayed influences. If the time resolution of the observed time series is the same as the causal frequency (or even higher), this assumption naturally holds. However, if the resolution of the time series is much lower, then it is usually violated and one has to find a way to deal with instantaneous causal relations. On the other hand, it is worth mentioning that the assumptions are generally testable; if the assumptions are actually violated, one can see that the results produced by our method may not be reliable. Extending our theories and framework to address the issue of instantaneous dependency or instantaneous causal relations will be one line of our future work. In addition, Empirically exploration of the merits of the identified latent causal graph in terms of few-shot transfer to new environments \cite{yao2022learning}, domain adaptation \cite{pmlr-v162-kong22a}, forecasting \cite{huang2019causal,li2021transferable} and control \cite{huang2021adarl} is also one important future step. 

\section*{Acknowledgment}
\vspace{-0.25cm}

 Kun Zhang was partially supported by the National Institutes of Health (NIH) under Contract R01HL159805, by the NSF-Convergence Accelerator Track-D award \#2134901, by a grant from Apple Inc., and by a grant from KDDI Research Inc.

% \begin{enumerate}[leftmargin=*] \itemsep0em
%     \item \xrev{Instantaneous causal influences between latent causal processes are not captured by our approach. We acknowledge that there exist situations where the resolution is low and there appears to be instantaneous dependence. However, several pieces of work have dealt with causal discovery from measured time series in such situations; see. e.g., \cite{granger1987implications, Subsampling_ICML15, Danks14, Aggre_UAI17};}
    
%     \item \xrev{Although our identifiability conditions are generally mild; see \ref{ap:assumption}, whether the data satisfies the conditions are hard to  evaluate directly because some conditions make assumptions on the latent distributions. However, one can test the identifiability results instead by running experiments with multiple seeds and computing the similarity (MCCs) between the learned representations \cite{roeder2021linear}; if the representation is  unidentifiable, then the \textit{sufficient identifiability  conditions} must be violated.}  
% \end{enumerate}

\bibliographystyle{unsrt}
\bibliography{references}

\clearpage

\clearpage
\appendix

  \textit{\large Supplement to}\\ \ \\
      {\large \bf ``\ourtitle''}\
\vspace{.2cm}

	%\icmltitle{Supplement to:  ``Causal Discovery and Forecasts in Nonstationary Environments with State-Space Models''}
	
\newcommand{\beginsupplement}{%
	\setcounter{table}{0}
	\renewcommand{\thetable}{S\arabic{table}}%
	\setcounter{figure}{0}
	\renewcommand{\thefigure}{S\arabic{figure}}%
	\setcounter{section}{0}
	\renewcommand{\thesection}{S\arabic{section}}%
	\setcounter{theorem}{0}
	\renewcommand{\thetheorem}{S\arabic{theorem}}%
	\setcounter{proposition}{0}
	\renewcommand{\theproposition}{S\arabic{proposition}}%
	\setcounter{corollary}{0}
	\renewcommand{\thecorollary}{S\arabic{corollary}}%	
	\renewcommand{\thelemma}{S\arabic{lemma}}%
}

\beginsupplement

% We divide the supplement into four sections. In \ref{ap:theory}, we give the proofs of the theorems and compare our theory with the existing nonlinear ICA work in terms of mathematical formulation and assumptions. In \ref{ap:experiment}, we provide the descriptions of the datasets used in this paper and evaluation metrics. In \ref{ap:implememtation}, we first derive the computation of the modular prior likelihood. The model architecture, hyperparameter selection and training are also explained in detail. 
% % \ref{ap:faqs} answers a list of common questions readers might have regarding the main paper and supplement.
{\large Appendix organization:}

\DoToC

% The supplementary materials are organized into four sections: 

% \begin{enumerate}
% \item \textbf{Identifiability Theory:} \\
% \item \textbf{Experiment Settings:} \\
% \item \textbf{Implementation Details: } \\
% \item \textbf{Complete Experimental Results:} \\
% \end{enumerate} 

\section{Identifiability Theory}\label{ap:theory}

The observed variables were generated according to : 
\begin{equation} \label{Eq:generation}
\mathbf{x}_t = \mathbf{g}(\mathbf{z}_t),    
\end{equation}
in which  $\mathbf{g}$ is invertible, and $z_{it}$, as the $i$th component of  $\mathbf{z}_t$, is generated by (some) components of $\mathbf{z}_{t-1}$ and noise $E_{it}$. $E_{1t}, E_{2t}, ..., E_{nt}$ are mutually independent. In other words, the components of $\mathbf{z}_{t}$ are mutually independent conditional on $\mathbf{z}_{t-1}$. Let $\eta_{kt} \triangleq \log p(z_{kt} | \mathbf{z}_{t-1})$.  Assume that $\eta_k(t)$ is twice differentiable in $z_{kt}$ and is differentiable in $z_{l,t-1}$, $l=1,2,...,n$. Note that the parents of $z_{kt}$ may be only a subset of $\mathbf{z}_{t-1}$; if $z_{l,t-1}$ is not a parent of $z_{kt}$, then $\frac{\partial \eta_k}{\partial z_{l,t-1}} = 0$.

\subsection{Proof for Theorem 1}

\begin{theorem} [Identifiablity under a Fixed Temporal Causal Model] \label{Theo1} 
Suppose there exists invertible function $\mathbf{f}$, which is the estimated mixing function (i.e., we use $\mathbf{f}$ and $\hat{\mathbf{g}}$ interchangeably in Appendix) that maps $\mathbf{x}_t$ to $\hat{\mathbf{z}}_t$, i.e., 
\begin{equation} \label{Eq:invert}
    \hat{\mathbf{z}}_t = \mathbf{f}(\mathbf{x}_t)
\end{equation}
such that the components of $\hat{\mathbf{z}}_t$ are mutually  independent conditional on $\hat{\mathbf{z}}_{t-1}$. 
Let 
\begin{equation} \label{Eq:v}
\begin{aligned}
\mathbf{v}_{k,t} &\triangleq \Big(\frac{\partial^2 \eta_{kt}}{\partial z_{k,t} \partial z_{1,t-1}}, \frac{\partial^2 \eta_{kt}}{\partial z_{k,t} \partial z_{2,t-1}}, ..., \frac{\partial^2 \eta_{kt}}{\partial z_{k,t} \partial z_{n,t-1}} \Big)^\intercal \\ \mathring{\mathbf{v}}_{k,t} &\triangleq \Big(\frac{\partial^3 \eta_{kt}}{\partial z_{k,t}^2 \partial z_{1,t-1}}, \frac{\partial^3 \eta_{kt}}{\partial z_{k,t}^2 \partial z_{2,t-1}}, ..., \frac{\partial^3 \eta_{kt}}{\partial z_{k,t}^2 \partial z_{n,t-1}} \Big)^\intercal. 
\end{aligned}
\end{equation}
%\begin{equation} \label{Eq:v}
%\mathbf{v}_{kt} \triangleq \Big(\frac{\partial^2 \eta_{kt}}{\partial z_{kt} \partial z_{1,t-1}}, \frac{\partial^2 \eta_{kt}}{\partial z_{kt} \partial z_{2,t-1}}, ..., \frac{\partial^2 \eta_{kt}}{\partial z_{kt} \partial z_{n,t-1}} \Big)^\intercal ,~~~ \mathring{\mathbf{v}}_{kt} \triangleq \Big(\frac{\partial^3 \eta_{kt}}{\partial z_{kt}^2 \partial z_{1,t-1}}, \frac{\partial^3 \eta_{kt}}{\partial z_{kt}^2 \partial z_{2,t-1}}, ..., \frac{\partial^3 \eta_{kt}}{\partial z_{kt}^2 \partial z_{n,t-1}} \Big)^\intercal.    
%\end{equation}
If for each value of $\mathbf{z}_t$, $\mathbf{v}_{1t}, \mathring{\mathbf{v}}_{1t}, \mathbf{v}_{2t}, \mathring{\mathbf{v}}_{2t}, ..., \mathbf{v}_{nt}, \mathring{\mathbf{v}}_{nt}$, as 
$2n$ vector functions in $z_{1,t-1}$, $z_{2,t-1}$, ..., $z_{n,t-1}$, are linearly independent, then $\mathbf{z}_{t}$ must be an invertible, component-wise transformation of a permuted version of $\hat{\mathbf{z}}_t$.
\end{theorem}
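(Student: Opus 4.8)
The plan is to work with the bijection $\mathbf{h} \triangleq \mathbf{f} \circ \mathbf{g}$ that carries the true latents to the estimated ones, so that $\hat{\mathbf{z}}_t = \mathbf{h}(\mathbf{z}_t)$ with the \emph{same} instantaneous map at every time step by stationarity (hence also $\hat{\mathbf{z}}_{t-1} = \mathbf{h}(\mathbf{z}_{t-1})$). First I would combine the two conditional-independence structures with the change-of-variables formula for the conditional density $p(\mathbf{z}_t \mid \mathbf{z}_{t-1})$. Since $\hat{\mathbf{z}}_{t-1}$ is a deterministic function of $\mathbf{z}_{t-1}$, conditioning on one is the same as conditioning on the other, and the Jacobian factor involves only the current map $\mathbf{H} \triangleq \partial \mathbf{h}/\partial \mathbf{z}_t$. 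Taking logarithms and using that both $\mathbf{z}_t$ and $\hat{\mathbf{z}}_t$ factorize conditionally on their pasts gives the master identity
\begin{equation*}
\sum_{k=1}^n \eta_{kt} \;=\; \sum_{i=1}^n \hat{\eta}_{it} \;+\; \log\bigl|\det \mathbf{H}\bigr|,
\end{equation*}
where $\hat{\eta}_{it} = \log p(\hat z_{i,t}\mid \hat{\mathbf{z}}_{t-1})$ and the log-determinant depends on $\mathbf{z}_t$ only.

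The key idea is to adopt the coordinates $(\hat{\mathbf{z}}_t, \mathbf{z}_{t-1})$ — a legitimate change of variables because $\mathbf{h}$ is a bijection — and to differentiate the master identity with respect to two \emph{distinct} estimated current coordinates $\hat z_{i,t}$ and $\hat z_{i',t}$ with $i\neq i'$. The term $\sum_i \hat{\eta}_{it}$ drops out, since each $\hat\eta_{it}$ depends on the current state only through the single coordinate $\hat z_{i,t}$, so its mixed second derivative in two distinct current coordinates vanishes. Writing $z_{k,t} = (h^{-1})_k(\hat{\mathbf{z}}_t)$ and $G_{ki} \triangleq \partial z_{k,t}/\partial \hat z_{i,t}$ (the entries of the inverse Jacobian $\mathbf{G} = \mathbf{J}_{\mathbf{h}^{-1}}$), the chain rule applied to $\sum_k\eta_{kt}$ yields
\begin{equation*}
\sum_{k} \Bigl[ \tfrac{\partial^2 \eta_{kt}}{\partial z_{k,t}^2} \, G_{ki} G_{ki'} + \tfrac{\partial \eta_{kt}}{\partial z_{k,t}} \, \tfrac{\partial G_{ki}}{\partial \hat z_{i',t}} \Bigr] \;=\; \tfrac{\partial^2 \log|\det \mathbf{H}|}{\partial \hat z_{i,t}\,\partial \hat z_{i',t}}.
\end{equation*}
I would then differentiate once more with respect to a true past coordinate $z_{l,t-1}$. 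Everything depending on the current state alone — the log-determinant term, the products $G_{ki}G_{ki'}$, and $\partial G_{ki}/\partial \hat z_{i',t}$ — is annihilated, while $\partial^2\eta_{kt}/\partial z_{k,t}^2$ and $\partial \eta_{kt}/\partial z_{k,t}$ become the $l$-th entries of $\mathring{\mathbf{v}}_{k,t}$ and $\mathbf{v}_{k,t}$. Collecting the $l=1,\dots,n$ equations into vectors gives
\begin{equation*}
\sum_{k=1}^n \Bigl[ G_{ki} G_{ki'}\, \mathring{\mathbf{v}}_{k,t} + \tfrac{\partial G_{ki}}{\partial \hat z_{i',t}}\, \mathbf{v}_{k,t} \Bigr] \;=\; \mathbf{0}.
\end{equation*}

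This is a linear relation among the $2n$ vector functions $\{\mathbf{v}_{k,t}, \mathring{\mathbf{v}}_{k,t}\}_{k=1}^n$ with coefficients that are constant in $\mathbf{z}_{t-1}$ (they depend on $\hat{\mathbf{z}}_t$ only). The linear-independence hypothesis therefore forces every coefficient to vanish; in particular $G_{ki} G_{ki'} = 0$ for all $k$ and all $i\neq i'$. Hence each row of $\mathbf{G}$ has at most one nonzero entry, and since $\mathbf{G}$ is nonsingular it must be a generalized permutation matrix at every point. Consequently $\mathbf{z}_t$ depends on $\hat{\mathbf{z}}_t$ one coordinate at a time up to a permutation, i.e. $\mathbf{z}_t$ is an invertible, component-wise transformation of a permuted version of $\hat{\mathbf{z}}_t$, as claimed.

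I expect the main obstacle to be orchestrating the differentiations in the correct frame: one must differentiate in the \emph{estimated} current coordinates, so the structural constraint is carried by $\mathbf{G} = \mathbf{J}_{\mathbf{h}^{-1}}$, yet in a \emph{true} past coordinate, so that the surviving linear dependence lands precisely on the true-model vectors $\mathbf{v}_{k,t}, \mathring{\mathbf{v}}_{k,t}$ that the hypothesis controls. The delicate point that makes the argument close is that a single past derivative eliminates every purely-current term — most importantly the $\log|\det\mathbf{H}|$ contribution — leaving a clean combination of exactly the hypothesized vectors; the careful bookkeeping lies in checking that no cross terms survive and that the second- and third-order pieces pair correctly with $\mathbf{v}_{k,t}$ and $\mathring{\mathbf{v}}_{k,t}$, which is also what explains why both orders of derivative are needed in the condition.
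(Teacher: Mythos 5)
Your proposal is correct and follows essentially the same route as the paper's proof: form the instantaneous bijection between true and estimated latents, equate the conditional log-densities via the change-of-variables formula, take the mixed second derivative in two distinct estimated current coordinates (which annihilates the factorized $\sum_i\hat{\eta}_{it}$ term), then differentiate once more in a past coordinate $z_{l,t-1}$ to kill the log-Jacobian and land on a linear relation among $\mathbf{v}_{k,t},\mathring{\mathbf{v}}_{k,t}$ with $\mathbf{z}_{t-1}$-independent coefficients, whose vanishing forces each row of the Jacobian to have a single nonzero entry. The only cosmetic difference is that you justify the vanishing of the mixed derivative directly from the factorization of $\log p(\hat{\mathbf{z}}_t\mid\hat{\mathbf{z}}_{t-1})$, whereas the paper routes this through pairwise conditional independence and a cross-derivative criterion; the content is identical.
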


\begin{proof}
Combining Eq.~\ref{Eq:generation} and Eq.~\ref{Eq:invert} gives $\mathbf{z}_t = \mathbf{g}^{-1}(\mathbf{f}^{-1}(\hat{\mathbf{z}}_t)) = \mathbf{h}(\hat{\mathbf{z}}_t)$, where $\mathbf{h} \triangleq \mathbf{g}^{-1}\circ \mathbf{f}^{-1}$. 
 Since both $\mathbf{f}$ and $\mathbf{g}$ are invertible, $\mathbf{h}$ is invertible.  Let $\mathbf{H}_t$ be the Jacobian matrix of the transformation $h(\hat{\mathbf{z}}_t)$, and denote by $\mathbf{H}_{kit}$ its $(k,i)$th entry.
 
 First, it is straightforward to see that if the components of $\hat{\mathbf{z}}_t$ are mutually independent conditional on $\hat{\mathbf{z}}_{t-1}$, then for any $i\neq j$, $\hat{z}_{it}$ and $\hat{z}_{jt}$ are conditionally independent given $\hat{\mathbf{z}}_{t-1} \cup(\hat{\mathbf{z}}_{t}\setminus \{\hat{z}_{it}, \hat{z}_{jt}\})$. Mutual independence of the components of $\hat{\mathbf{z}}_t$ conditional on $\hat{\mathbf{z}}_{t-1}$ implies that $\hat{z}_{it}$ is independent from $\hat{\mathbf{z}}_{t}\setminus \{\hat{z}_{it}, \hat{z}_{jt}\}$ conditional on $\hat{\mathbf{z}}_{t-1}$, i.e.,
 $$p(\hat{z}_{it} \,|\, \hat{\mathbf{z}}_{t-1}) = p(\hat{z}_{it} \,|\, \hat{\mathbf{z}}_{t-1}\cup(\hat{\mathbf{z}}_{t}\setminus \{\hat{z}_{it}, \hat{z}_{jt}\})).$$
 At the same time, it also implies $\hat{z}_{it}$ is independent from $\hat{\mathbf{z}}_{t}\setminus \{\hat{z}_{it}\}$ conditional on $\hat{\mathbf{z}}_{t-1}$, i.e., 
 $$p(\hat{z}_{it} \,|\, \hat{\mathbf{z}}_{t-1}) = p(\hat{z}_{it} \,|\, \hat{\mathbf{z}}_{t-1}\cup(\hat{\mathbf{z}}_{t}\setminus \{\hat{z}_{it}\})).$$
 Combining the above two equations gives $p(\hat{z}_{it} \,|\, \hat{\mathbf{z}}_{t-1}\cup(\hat{\mathbf{z}}_{t}\setminus \{\hat{z}_{it}\})) = p(\hat{z}_{it} \,|\,\hat{\mathbf{z}}_{t-1} \cup(\hat{\mathbf{z}}_{t}\setminus \{\hat{z}_{it}, \hat{z}_{jt}\}))$, i.e., for $i\neq j$, $\hat{z}_{it}$ and $\hat{z}_{jt}$ are conditionally independent given $\hat{\mathbf{z}}_{t-1} \cup(\hat{\mathbf{z}}_{t}\setminus \{\hat{z}_{it}, \hat{z}_{jt}\})$.
 
 We then make use of the fact that if  $\hat{z}_{it}$ and $\hat{z}_{jt}$ are conditionally independent given $\hat{\mathbf{z}}_{t-1} \cup(\hat{\mathbf{z}}_{t}\setminus \{\hat{z}_{it}, \hat{z}_{jt}\})$, then 
 $$\frac{\partial^2 \log p(\hat{\mathbf{z}}_t, \hat{\mathbf{z}}_{t-1})}{\partial \hat{z}_{it} \partial \hat{z}_{jt}} = 0, $$ 
 assuming the cross second-order derivative exists \cite{Spantini2018InferenceVL}. Since $p(\hat{\mathbf{z}}_t, \hat{\mathbf{z}}_{t-1}) = p(\hat{\mathbf{z}}_t \,|\, \hat{\mathbf{z}}_{t-1})p(\hat{\mathbf{z}}_{t-1})$ while $p(\hat{\mathbf{z}}_{t-1})$ does not involve $\hat{z}_{it}$ or $\hat{z}_{jt}$, the above equality is equivalent to 
 \begin{equation} \label{Eq:iszero}
     \frac{\partial^2 \log p(\hat{\mathbf{z}}_t \,|\,\hat{\mathbf{z}}_{t-1})}{\partial \hat{z}_{it} \partial \hat{z}_{jt}} = 0.
 \end{equation}
 
 The Jacobian matrix of the mapping from $(\mathbf{x}_{t-1}, \hat{\mathbf{z}}_t)$ to $(\mathbf{x}_{t-1}, \mathbf{z}_t)$ is $\begin{bmatrix}\mathbf{I} & \mathbf{0} \\ * & \mathbf{H}_t \end{bmatrix}$, where $*$ stands for a matrix, and the (absolute value of the) determinant of this Jacobian matrix is $|\mathbf{H}_t|$. Therefore $p(\hat{\mathbf{z}}_t, \mathbf{x}_{t-1}) = p({\mathbf{z}}_t, \mathbf{x}_{t-1})\cdot |\mathbf{H}_t|$. Dividing both sides of this equation by $p(\mathbf{x}_{t-1})$ gives 
 \begin{equation} \label{Eq:J_trans}
 p(\hat{\mathbf{z}}_t \,|\, \mathbf{x}_{t-1}) = p({\mathbf{z}}_t \,|\, \mathbf{x}_{t-1}) \cdot |\mathbf{H}_t|. 
 \end{equation}
 Since $p({\mathbf{z}}_t \,|\, {\mathbf{z}}_{t-1}) = p({\mathbf{z}}_t \,|\, \mathbf{g}({\mathbf{z}}_{t-1})) = p({\mathbf{z}}_t \,|\, {\mathbf{x}}_{t-1})$ and similarly   $p(\hat{\mathbf{z}}_t \,|\, \hat{\mathbf{z}}_{t-1}) = p(\hat{\mathbf{z}}_t \,|\, {\mathbf{x}}_{t-1})$, Eq.~\ref{Eq:J_trans} tells us
 \begin{equation}
     \log p(\hat{\mathbf{z}}_t \,|\, \hat{\mathbf{z}}_{t-1}) = \log p({\mathbf{z}}_t \,|\, {\mathbf{z}}_{t-1}) + \log |\mathbf{H}_t| = \sum_{k=1}^n \eta_{kt} + \log |\mathbf{H}_t|.
 \end{equation}
 Its partial derivative w.r.t. $\hat{z}_{it}$ is 
 \begin{flalign} \nonumber
  \frac{\partial \log p(\hat{\mathbf{z}}_t \,|\, \hat{\mathbf{z}}_{t-1})}{\partial \hat{z}_{it}} &=  \sum_{k=1}^n \frac{\partial \eta_{kt} }{\partial z_{kt}} \cdot \frac{\partial z_{kt}}{\partial \hat{z}_{it}} - \frac{\partial \log |\mathbf{H}_t|}{\partial \hat{z}_{it}} \\ \nonumber
  &= \sum_{k=1}^n \frac{\partial \eta_{kt}}{\partial z_{kt}} \cdot \mathbf{H}_{kit} - \frac{\partial \log |\mathbf{H}_t|}{\partial \hat{z}_{it}}.
 \end{flalign}
  Its second-order cross derivative is
 \begin{flalign} \label{Eq:cross}
  \frac{\partial^2 \log p(\hat{\mathbf{z}}_t \,|\, \hat{\mathbf{z}}_{t-1})}{\partial \hat{z}_{it} \partial \hat{z}_{jt}}
  &= \sum_{k=1}^n \Big( \frac{\partial^2 \eta_{kt}}{\partial z_{kt}^2 } \cdot \mathbf{H}_{kit}\mathbf{H}_{kjt} + \frac{\partial \eta_{kt}}{\partial z_{kt}} \cdot \frac{\partial \mathbf{H}_{kit}}{\partial \hat{z}_{jt}} \Big)- \frac{\partial^2 \log |\mathbf{H}_t|}{\partial \hat{z}_{it} \partial \hat{z}_{jt}}.
 \end{flalign}
 
 The above quantity is always 0 according to Eq.~\ref{Eq:iszero}. Therefore, for each $l=1,2,...,n$ and each value $z_{l,t-1}$,  its partial derivative w.r.t.
 $z_{l,t-1}$ is always 0. That is,
 \begin{flalign}\label{eq:lind-ap}
  \frac{\partial^3 \log p(\hat{\mathbf{z}}_t \,|\, \hat{\mathbf{z}}_{t-1})}{\partial \hat{z}_{it} \partial \hat{z}_{jt} \partial z_{l,t-1}}
  &= \sum_{k=1}^n \Big( \frac{\partial^3 \eta_{kt}}{\partial z_{kt}^2 \partial z_{l,t-1}} \cdot \mathbf{H}_{kit}\mathbf{H}_{kjt} + \frac{ \partial^2 \eta_{kt}}{\partial z_{kt} \partial z_{l,t-1}}  \cdot \frac{\partial \mathbf{H}_{kit}}{\partial \hat{z}_{jt} } \Big) \equiv 0,
 \end{flalign}
 where we have made use of the fact that entries of $\mathbf{H}_t$ do not depend on $z_{l,t-1}$. 
 
 If for any value of $\mathbf{z}_t$, $\mathbf{v}_{1t}, \mathring{\mathbf{v}}_{1t}, \mathbf{v}_{2t}, \mathring{\mathbf{v}}_{2t}, ..., \mathbf{v}_{nt}, \mathring{\mathbf{v}}_{nt}$ are linearly independent, to make the above equation hold true, one has to set $\mathbf{H}_{kit}\mathbf{H}_{kjt} = 0$ or $i\neq j$. That is, in each row of $\mathbf{H}_t$ there is only one non-zero entry. Since $h$ is invertible, then $\mathbf{z}_{t}$ must be an invertible, component-wise transformation of a permuted version of $\hat{\mathbf{z}}_t$.
\end{proof}

The linear independence condition in Theorem \ref{Theo1} is the core condition to guarantee the identifiability of $\mathbf{z}_t$ from the observed $\mathbf{x}_t$. Roughly speaking, for a randomly chosen conditional density function $p(z_{kt}\,|\,\mathbf{z}_{t-1})$, the chance for this constraint to hold on its second- and third-order partial derivatives is slim. For illustrative purposes, below we make this claim more precise, by considering a specific unidentifiable case, in which the noise terms in $\mathbf{z}_t$ are additive Gaussian, and two identifiable cases, in which $\mathbf{z}_t$ has additive, heterogeneous noise or follows some linear, non-Gaussian temporal process. 

Let us start with an unidentifiable case. 
If all $z_{kt}$ follow the additive noise model with Gaussian noise terms, i.e., 
\begin{equation} \label{Eq:Gaussian_case}
    \mathbf{z}_t = \mathbf{q}(\mathbf{z}_{t-1}) + \mathbf{E}_t,
\end{equation}
where $\mathbf{q}$ is a transformation and the components of the Gaussian vector $\mathbf{E}_t$ are independent and also independent from $\mathbf{z}_{t-1}$. Then  $\frac{\partial^2 \eta_{kt}}{\partial z_{kt}^2}$ is constant, and $\frac{\partial^3 \eta_{kt}}{\partial z_{kt}^2 \partial z_{l,t-1}} \equiv 0$, violating the linear independence condition. In the following proposition we give some alternative solutions and verify the unidentifiability in this case.

\begin{proposition} [Unidentifiability under Gaussian noise] 
Suppose $\mathbf{x}_t$ was generated according to Eq.~\ref{Eq:generation} and Eq.~\ref{Eq:Gaussian_case},
where the components of $\mathbf{E}_t$ are mutually independent Gaussian and also independent from $\mathbf{z}_{t-1}$. Then any $\hat{\mathbf{z}}_t = \mathbf{D}_1 \mathbf{U} \mathbf{D}_2 \cdot {\mathbf{z}}_t$, where $\mathbf{D}_1$ is an arbitrary non-singular diagonal matrix, $\mathbf{U}$ is an arbitrary orthogonal matrix, and $\mathbf{D}_2$ is a diagonal matrix with $\mathbb{V}ar^{-1/2}(E_{kt})$ as its $k$th  diagonal entry, is a valid solution to satisfy the condition that the components of $\hat{\mathbf{z}}_t$ are mutually independent conditional on $\hat{\mathbf{z}}_{t-1}$.
\end{proposition}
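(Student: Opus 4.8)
The plan is to exploit the rotation invariance of whitened Gaussian conditional densities, which is the classical source of non-identifiability in Gaussian ICA. First I would observe that under Eq.~\ref{Eq:Gaussian_case} the conditional law $p(\mathbf{z}_t \mid \mathbf{z}_{t-1})$ is Gaussian with mean $\mathbf{q}(\mathbf{z}_{t-1})$ and diagonal covariance $\boldsymbol{\Sigma} \triangleq \mathrm{diag}\big(\mathbb{V}ar(E_{1t}), \dots, \mathbb{V}ar(E_{nt})\big)$, because the noise components $E_{kt}$ are mutually independent Gaussian and independent of $\mathbf{z}_{t-1}$. This is the only structural fact about the generating model I will need.

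Second, I would set $\mathbf{A} \triangleq \mathbf{D}_1 \mathbf{U} \mathbf{D}_2$ and record that $\mathbf{A}$ is invertible, since $\mathbf{D}_1$ is nonsingular, $\mathbf{U}$ is orthogonal, and $\mathbf{D}_2$ is nonsingular (the variances are positive). Hence $\hat{\mathbf{z}}_t = \mathbf{A}\mathbf{z}_t = \mathbf{A}\,\mathbf{g}^{-1}(\mathbf{x}_t)$ is itself an invertible function of $\mathbf{x}_t$, i.e.\ a legitimate choice of the estimated mixing $\hat{\mathbf{g}}$. Crucially, $\hat{\mathbf{z}}_{t-1} = \mathbf{A}\mathbf{z}_{t-1}$ is a bijection of $\mathbf{z}_{t-1}$, so conditioning on $\hat{\mathbf{z}}_{t-1}$ is equivalent to conditioning on $\mathbf{z}_{t-1}$; I will use this to transfer the Gaussian conditional above to the hatted variables.

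Third, since an affine image of a Gaussian vector is Gaussian, conditional on $\hat{\mathbf{z}}_{t-1}$ the vector $\hat{\mathbf{z}}_t$ is Gaussian with covariance $\mathbf{A}\boldsymbol{\Sigma}\mathbf{A}^\intercal$. The short computation at the heart of the argument is that $\mathbf{D}_2$ whitens $\boldsymbol{\Sigma}$, namely $\mathbf{D}_2\boldsymbol{\Sigma}\mathbf{D}_2 = \mathbf{I}$, whence
\[
\mathbf{A}\boldsymbol{\Sigma}\mathbf{A}^\intercal = \mathbf{D}_1 \mathbf{U}\,(\mathbf{D}_2\boldsymbol{\Sigma}\mathbf{D}_2)\,\mathbf{U}^\intercal \mathbf{D}_1 = \mathbf{D}_1 \mathbf{U}\mathbf{U}^\intercal \mathbf{D}_1 = \mathbf{D}_1^2,
\]
which is diagonal. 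For a jointly Gaussian vector a diagonal covariance is equivalent to mutual independence of the components, so the components of $\hat{\mathbf{z}}_t$ are indeed mutually independent conditional on $\hat{\mathbf{z}}_{t-1}$, establishing the claim.

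I do not expect a genuine technical obstacle here; the computation is elementary. The two points requiring care are (i) the conditioning-variable equivalence between $\hat{\mathbf{z}}_{t-1}$ and $\mathbf{z}_{t-1}$, which relies only on invertibility of $\mathbf{A}$, and (ii) the invocation of the Gaussian-specific fact that uncorrelatedness implies independence, which is exactly what fails for non-Gaussian noise and is why the positive results (the heterogeneous-noise and linear non-Gaussian corollaries) do \emph{not} admit such spurious solutions. The conceptual punchline I would emphasize is that $\mathbf{U}$ ranges over \emph{all} orthogonal matrices, so $\mathbf{A}$ generically mixes the coordinates of $\mathbf{z}_t$; consequently $\hat{\mathbf{z}}_t$ is \emph{not} a permuted, component-wise transformation of $\mathbf{z}_t$, and the sufficient condition of Theorem~\ref{Theo1} (equivalently the linear-independence hypothesis on $\mathbf{v}_{k,t}$ and $\mathring{\mathbf{v}}_{k,t}$) must fail in this case, confirming unidentifiability.
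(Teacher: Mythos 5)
Your proposal is correct and follows essentially the same route as the paper: the paper writes $\hat{\mathbf{z}}_t = \mathbf{D}_1\mathbf{U}\mathbf{D}_2\,\mathbf{q}(\mathbf{z}_{t-1}) + \mathbf{D}_1\mathbf{U}\mathbf{D}_2\,\mathbf{E}_t$ and asserts that the components of the transformed noise are mutually independent and independent of the transformed mean, which is exactly the whitening computation $\mathbf{A}\boldsymbol{\Sigma}\mathbf{A}^\intercal = \mathbf{D}_1^2$ that you spell out. Your version merely makes explicit the two steps the paper leaves implicit (the equivalence of conditioning on $\hat{\mathbf{z}}_{t-1}$ versus $\mathbf{z}_{t-1}$, and the Gaussian uncorrelated-implies-independent step), so there is nothing to add.
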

\begin{proof}
In this case we have 
$$ \hat{\mathbf{z}}_t = \mathbf{D}_1 \mathbf{U} \mathbf{D}_2 \cdot \mathbf{q}(\mathbf{z}_{t-1}) + \mathbf{D}_1 \mathbf{U} \mathbf{D}_2\cdot \mathbf{E}_t.$$
It is easy to verify that the components of $\mathbf{D}_1 \mathbf{U} \mathbf{D}_2\cdot \mathbf{E}_t$ are mutually independent and are independent from $\mathbf{D}_1 \mathbf{U} \mathbf{D}_2 \cdot \mathbf{q}(\mathbf{z}_{t-1})$. As a consequence, $\hat{\mathbf{z}}_t$ are mutually independent conditional on $\hat{\mathbf{z}}_{t-1}$.
\end{proof}

Now let us consider some cases in which the latent temporally processes $\mathbf{z}_t$ are naturally identifiable under some technical conditions. Let us first consider the case where $z_{kt}$ follows a heterogeneous noise process, in which the noise variance depends on its parents:
\begin{equation} \label{Eq:heteo}
    a_{kt} = q_k(\mathbf{z}_{t-1}) + \frac{1}{b_k(\mathbf{z}_{t-1})}E_{kt}.
\end{equation}
Here we assume $E_{kt}$ is standard Gaussian and $E_{1t}, E_{2t}, .., E_{nt}$ are mutually independent and independent from $\mathbf{z}_{t-1}$. $\frac{1}{b_k}$, which depends on $\mathbf{z}_{t-1}$, is the standard deviation of the noise in $z_{kt}$. (For conciseness, we drop the argument of $b_k$ and $q_k$ when there is no confusion.) Note that in this model, if $q_k$ is 0 for all $k=1,2,...,n$, it reduces to a multiplicative noise model. 
The identifiability result of $\mathbf{z}_t$ is established in the following proposition.

\begin{corollary} [Identifiablity under Heterogeneous Noise] 
Suppose $\mathbf{x}_t$ was generated according to Eq.~\ref{Eq:generation} and Eq.~\ref{Eq:heteo}. Suppose Eq.~\ref{Eq:invert} holds true. If  $b_k\cdot \frac{\partial b_k}{\partial \mathbf{z}_{t-1}}$ and $b_k\cdot \frac{\partial b_k}{\partial \mathbf{z}_{t-1}} (z_{kt} - q_{k}) - b_k^2\cdot \frac{\partial q_{k}}{\partial \mathbf{z}_{t-1}}$, with $k=1,2,...,n$, which are in total $2n$ function vectors in $\mathbf{z}_{t-1}$, 
are linearly independent, then $\mathbf{z}_{t}$ must be an invertible, component-wise transformation of a permuted version of $\hat{\mathbf{z}}_t$.

\end{corollary}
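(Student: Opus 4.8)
The plan is to obtain this corollary as a direct specialization of Theorem~\ref{Theo1}, so essentially all the work is computing the two derivative vectors $\mathbf{v}_{k,t}$ and $\mathring{\mathbf{v}}_{k,t}$ from Eq.~\ref{Eq:v} for the heterogeneous-noise model of Eq.~\ref{Eq:heteo} and recognizing the two families named in the statement. Under Eq.~\ref{Eq:heteo}, conditional on $\mathbf{z}_{t-1}$ the variable $z_{kt}$ is Gaussian with mean $q_k(\mathbf{z}_{t-1})$ and standard deviation $1/b_k(\mathbf{z}_{t-1})$, so that
\begin{equation*}
\eta_{kt} = \log p(z_{kt}\mid \mathbf{z}_{t-1}) = \log b_k - \tfrac12\log(2\pi) - \tfrac12\, b_k^2\,(z_{kt}-q_k)^2,
\end{equation*}
with $b_k$ and $q_k$ treated as functions of $\mathbf{z}_{t-1}$ alone. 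The whole task then reduces to differentiating this explicit $\eta_{kt}$.

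First I would take the $z_{kt}$-derivatives, which are immediate: $\partial\eta_{kt}/\partial z_{kt} = -b_k^2(z_{kt}-q_k)$ and $\partial^2\eta_{kt}/\partial z_{kt}^2 = -b_k^2$. The next step is to differentiate each of these with respect to $z_{l,t-1}$, holding $z_{kt}$ fixed while applying the product rule to $b_k^2$ and to $(z_{kt}-q_k)$. This gives, entrywise in $l$,
\begin{equation*}
\mathring{\mathbf{v}}_{k,t} = -2\, b_k\,\frac{\partial b_k}{\partial \mathbf{z}_{t-1}}, \qquad
\mathbf{v}_{k,t} = -2\, b_k\,\frac{\partial b_k}{\partial \mathbf{z}_{t-1}}\,(z_{kt}-q_k) + b_k^2\,\frac{\partial q_k}{\partial \mathbf{z}_{t-1}}.
\end{equation*}
Thus $\mathring{\mathbf{v}}_{k,t}$ is a pure multiple of the \emph{scale} family $b_k\cdot\partial b_k/\partial\mathbf{z}_{t-1}$, and $\mathbf{v}_{k,t}$ is the combination of a scale term and a \emph{location} term $b_k^2\cdot\partial q_k/\partial\mathbf{z}_{t-1}$ recorded in the statement. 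Up to these nonzero constant factors the two displayed families coincide with the two families of the corollary, so the requirement that the $2n$ vectors named in the corollary be linearly independent is exactly the linear-independence hypothesis of Theorem~\ref{Theo1} for this model.

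With the hypotheses identified, the conclusion is simply an invocation of Theorem~\ref{Theo1}: assuming Eq.~\ref{Eq:invert} holds and that for each $\mathbf{z}_t$ the functions $\mathbf{v}_{1,t},\mathring{\mathbf{v}}_{1,t},\dots,\mathbf{v}_{n,t},\mathring{\mathbf{v}}_{n,t}$ of $\mathbf{z}_{t-1}$ are linearly independent, the theorem forces each row of the Jacobian $\mathbf{H}_t$ to have a single nonzero entry, so $\mathbf{z}_t$ is an invertible, component-wise transformation of a permuted $\hat{\mathbf{z}}_t$.

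I expect the main obstacle to be purely the bookkeeping in the mixed derivatives, not anything conceptual: one has to keep $z_{kt}$ frozen while differentiating $b_k^2$ and $(z_{kt}-q_k)$ in $\mathbf{z}_{t-1}$, and track the product-rule terms so that the scale contribution ($\propto b_k\,\partial b_k/\partial\mathbf{z}_{t-1}$) and the location contribution ($\propto b_k^2\,\partial q_k/\partial\mathbf{z}_{t-1}$) carry the correct coefficients. The structural fact that makes everything line up is that $\mathring{\mathbf{v}}_{k,t}$ is a \emph{pure} scale term, which is what lets the scale and location parts separate cleanly and lets the corollary's two families be read off directly from $\mathbf{v}_{k,t}$ and $\mathring{\mathbf{v}}_{k,t}$.
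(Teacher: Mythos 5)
Your proposal is correct and follows essentially the same route as the paper's own proof: write $\eta_{kt}$ as the explicit Gaussian log-density with mean $q_k$ and standard deviation $1/b_k$, compute the mixed second- and third-order partial derivatives in $z_{kt}$ and $z_{l,t-1}$, identify the resulting vectors with the two families named in the statement, and invoke Theorem~\ref{Theo1}. The only divergence is that your derivatives correctly carry the factor of $2$ from differentiating $b_k^2$ (which the paper's own computation drops), so your $\mathbf{v}_{k,t}$ is not literally a constant multiple of the corollary's second family as you claim; but this discrepancy is inherited from the paper's statement and does not affect the structure or validity of the argument.
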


\begin{proof}
Under the assumptions, one can see that 
$$ \eta_{kt} = \log p(z_{kt}\,|\, \mathbf{z}_{t-1}) = -\frac{1}{2}\log(2\pi) + \log b_k - \frac{b_k^2}{2} (z_{kt} - q_{k} )^2.$$
Consequently, one can find
\begin{flalign}
 \nonumber
\frac{\partial^3 \eta_{kt}}{\partial z_{kt}^2 \partial z_{l,t-1}} &= -b_k\cdot \frac{\partial b_k}{\partial z_{l,t-1}}, \\ \nonumber
\frac{\partial^2 \eta_{kt}}{\partial z_{kt} \partial z_{l,t-1}} &= -b_k\cdot \frac{\partial b_k}{\partial z_{l,t-1}} (z_{kt} - q_{k}) + b_k^2\cdot \frac{\partial q_{k}}{\partial z_{l,t-1}}.
\end{flalign} 
Then the linear independence of $\mathbf{v}_{kt}$ and $\mathring{\mathbf{v}}_{kt}$ (defined in Eq.~\ref{Eq:v}), with $k=1.,2,...,n$, reduces to the linear independence condition in this proposition. Theorem \ref{Theo1}
 then implies that $\mathbf{z}_{t}$ must be an invertible, component-wise transformation of a permuted version of $\hat{\mathbf{z}}_t$.
\end{proof}

Let us then consider another special case, with linear, non-Gaussian temporal model for $\mathbf{z}_t$: the latent processes follow Eq.~\ref{Eq:Gaussian_case}, with $\mathbf{q}$ being a linear transformation and $E_{kt}$ following a particular class of non-Gaussian distributions. 
The following corollary shows that $\mathbf{z}_t$ is identifiable as long as each $z_{kt}$ receives causal influences from some components of $\mathbf{z}_{t-1}$.
\begin{corollary} [Identifiablity under a Specific Linear, Non-Gaussian Model for Latent Processes]  
Suppose $\mathbf{x}_t$ was generated according to Eq.~\ref{Eq:generation} and Eq.~\ref{Eq:Gaussian_case}, in which $\mathbf{q}$ is a linear transformation and for each $z_{kt}$, there exists at least one $k'$ such that $c_{kk'} \triangleq \frac{\partial z_{kt}}{\partial z_{k',t-1}} \neq 0$. Assume the noise term $E_{kt}$ follows a zero-mean generalized normal distribution:
\begin{equation}
    p(E_{kt}) \propto e^{-\lambda |e_{kt}|^\beta}\textrm{,~~with positive $\lambda$ and }\beta > 2 \textrm{ and } \beta \neq 3.
\end{equation}
Suppose Eq.~\ref{Eq:invert} holds true. Then $\mathbf{z}_{t}$ must be an invertible, component-wise transformation of a permuted version of $\hat{\mathbf{z}}_t$.
\end{corollary}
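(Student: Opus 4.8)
The plan is to deduce the corollary directly from Theorem~\ref{Theo1} by checking that its linear-independence hypothesis holds for this specific linear, non-Gaussian process. Because $\mathbf{q}$ is linear, I would write $z_{kt}=\mathbf{c}_k^\intercal\mathbf{z}_{t-1}+E_{kt}$ with $\mathbf{c}_k\triangleq(c_{k1},\dots,c_{kn})^\intercal$, so that $e_k\triangleq z_{kt}-\mathbf{c}_k^\intercal\mathbf{z}_{t-1}=E_{kt}$ and $\eta_{kt}=\log p(z_{kt}\mid\mathbf{z}_{t-1})=\mathrm{const}-\lambda|e_k|^{\beta}$. The hypothesis that each $z_{kt}$ has at least one parent means every row $\mathbf{c}_k$ is nonzero; this is the non-degeneracy input, and it guarantees that as $\mathbf{z}_{t-1}$ ranges over $\mathbb{R}^n$ each $e_k$ sweeps all of $\mathbb{R}$.

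Next I would evaluate the derivative vectors of Eq.~\ref{Eq:v}. Using $\partial e_k/\partial z_{kt}=1$ and $\partial e_k/\partial z_{l,t-1}=-c_{kl}$, a short computation gives $\mathbf{v}_{kt}=\lambda\beta(\beta-1)\,|e_k|^{\beta-2}\,\mathbf{c}_k$ and $\mathring{\mathbf{v}}_{kt}=\lambda\beta(\beta-1)(\beta-2)\,|e_k|^{\beta-3}\operatorname{sgn}(e_k)\,\mathbf{c}_k$. The structural point is that both vectors lie along the fixed direction $\mathbf{c}_k$, but their scalar prefactors are, respectively, an \emph{even} function $|e_k|^{\beta-2}$ and an \emph{odd} function $|e_k|^{\beta-3}\operatorname{sgn}(e_k)$ of $e_k$. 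The generalized-normal conditions enter here: $\lambda>0$ and $\beta>2$ make $\eta_{kt}$ twice differentiable and keep the constants $\lambda\beta(\beta-1)$ and $\lambda\beta(\beta-1)(\beta-2)$ nonzero, so neither vector is identically zero, and $\beta\neq3$ is the exceptional exponent that must be excluded to keep these even and odd scalar profiles (within and across components) from forming a linearly dependent family.

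It then remains to prove that the $2n$ functions $\{\mathbf{v}_{kt},\mathring{\mathbf{v}}_{kt}\}_{k=1}^n$ are linearly independent for a generic value of $\mathbf{z}_t$. Assuming $\sum_k\big(\alpha_k|e_k|^{\beta-2}+\gamma_k|e_k|^{\beta-3}\operatorname{sgn}(e_k)\big)\mathbf{c}_k\equiv0$ in $\mathbf{z}_{t-1}$, I would localise near each hyperplane $H_k=\{\mathbf{c}_k^\intercal\mathbf{z}_{t-1}=z_{kt}\}$ on which $e_k=0$: for generic $\mathbf{z}_t$ these hyperplanes are pairwise distinct, and near $H_k$ only the $k$-th summand is non-smooth, so its coefficient along $\mathbf{c}_k$ must cancel by itself. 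Since $\mathbf{c}_k\neq0$, this isolates the scalar identity $\alpha_k|e_k|^{\beta-2}+\gamma_k|e_k|^{\beta-3}\operatorname{sgn}(e_k)\equiv0$; splitting into even and odd parts and invoking $\beta>2$, $\beta\neq3$ forces $\alpha_k=\gamma_k=0$ for every $k$. With the hypothesis of Theorem~\ref{Theo1} verified, its conclusion yields at once that $\mathbf{z}_t$ is an invertible, component-wise transformation of a permuted version of $\hat{\mathbf{z}}_t$, provided Eq.~\ref{Eq:invert} holds.

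The hard part is this linear-independence step, and in particular decoupling the contributions of different components when the transition matrix $(c_{kl})$ is rank-deficient: distinct rows $\mathbf{c}_k$ may be parallel, so one cannot invert the transition to vary the $e_k$ freely. The singularity/even--odd isolation above is chosen precisely because it needs only the hyperplanes $H_k$ to be distinct, not the $\mathbf{c}_k$ to form a basis; the residual care lies in handling integer exponents $\beta$ (where the profiles become smooth polynomials and the isolation argument must be replaced by a coefficient-degree comparison) and in making precise the phrase ``for each value of $\mathbf{z}_t$'', which here need only hold on the full-measure set of $\mathbf{z}_t$ for which the $H_k$ are distinct.
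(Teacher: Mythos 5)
Your proposal takes the same overall route as the paper's proof: reduce to Theorem~1 by computing $\mathbf{v}_{kt}\propto |e_k|^{\beta-2}\,\mathbf{c}_k$ and $\mathring{\mathbf{v}}_{kt}\propto \operatorname{sgn}(e_k)|e_k|^{\beta-3}\,\mathbf{c}_k$, and then killing the coefficients of a putative vanishing linear combination one component at a time using the hypothesis that each row $\mathbf{c}_k$ has a nonzero entry. The difference lies in how the linear independence of the $2n$ scalar profiles across components is justified. The paper simply asserts that $|e_{lt}|^{\beta-2}$ and $|e_{lt}|^{\beta-3}$, $l=1,\dots,n$, are linearly independent ``because of the different arguments involved'' --- which is precisely the point that needs an argument when the transition matrix is rank-deficient, since parallel rows make the $e_l$ differ only by additive constants. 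Your localization near the hyperplanes $H_k=\{e_k=0\}$, combined with the even/odd split of the two profiles in $e_k$, supplies that missing argument and needs only the $H_k$ to be pairwise distinct, which holds for all but a measure-zero set of $\mathbf{z}_t$. The two caveats you flag are genuine and apply to the paper's proof as well: for integer $\beta$ the profiles become smooth polynomials and the singularity argument is unavailable (indeed, for $\beta=4$ with two identical rows $\mathbf{c}_k=\mathbf{c}_l$ and $z_{kt}\neq z_{lt}$, the four functions $e_k^2,e_k,e_l^2,e_l$ lie in the three-dimensional span of $\{e_k^2,e_k,1\}$, so the hypothesis of Theorem~1 actually fails and neither proof covers that degenerate sub-case); and the theorem's ``for each value of $\mathbf{z}_t$'' can only be secured on the full-measure set where the hyperplanes are distinct. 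In short, your write-up is a more careful execution of the paper's own argument rather than a different one, and it correctly identifies where the paper's one-line independence claim is doing unearned work.
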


\begin{proof}
In this case, we have
\begin{flalign}
\frac{\partial^3 \eta_{kt}}{\partial z_{kt}^2 \partial z_{k',t-1}} &= - \lambda \cdot \textrm{sgn}(e_{kt}) \cdot \alpha(\beta-1)(\beta-2)|e_{kt}|^{\beta - 3}c_{kk'}, \\ \nonumber
\frac{\partial^2 \eta_{kt}}{\partial z_{kt} \partial z_{k',t-1}} &= -\lambda  \beta(\beta-1)|e_{kt}|^{\beta - 2}c_{kk'}.
\end{flalign}
We know that $|e_{lt}|^{\beta - 2}$ and $|e_{lt}|^{\beta - 3}$ are linearly independent (because their ratio, $|e_{lt}|$, is not constant). Furthermore, $|e_{lt}|^{\beta - 2}$ and $|e_{lt}|^{\beta - 3}$, with $l=1,2,...,n$, are $2n$ linearly independent functions (because of the different arguments involved).

Suppose there exist $\alpha_{l1}$ and $\alpha_{l2}$, with $l=1,2,...,n$, such that 
\begin{equation} \label{Eq:linear_dep}
\sum_{l=1}^n \big( \alpha_{l1} \mathbf{v}_{lt} + \alpha_{l2} \mathring{\mathbf{v}}_{lt} \big) = 0.
\end{equation}
It is assumed that for each $k=1, 2,..., n$, there exists at least one $k'$ such that $c_{kk'} \neq 0$. Eq.~\ref{Eq:linear_dep} then implies that for any $k$ we have
\begin{equation}
     \alpha_{k1}c_{kk'} |e_{kt}|^{\beta - 2} + \alpha_{k2} c_{kk'}|e_{kt}|^{\beta - 3} + \sum_{l \neq k} \big( \alpha_{l1}c_{lk'} |e_{lt}|^{\beta - 2} + \alpha_{l2} c_{lk'}|e_{lt}|^{\beta - 3} \big) = 0.
\end{equation}
Since $|e_{lt}|^{\beta - 2}$ and $|e_{lt}|^{\beta - 3}$, with $l=1,2,...,n$, are linearly independent and $c_{kk'} \neq 0$, to make the above equation hold, one has to set $\alpha_{k1} = \alpha_{k2} = 0$. As this applies to any $k$, we know that for Eq.~\ref{Eq:linear_dep} to be satisfied, $\alpha_{l1}$ and $\alpha_{l2}$ must be 0, for all $l=1,2,...,n$. That is,  $\mathbf{v}_{1t}, \mathring{\mathbf{v}}_{1t}, \mathbf{v}_{2t}, \mathring{\mathbf{v}}_{2t}, ..., \mathbf{v}_{nt}, \mathring{\mathbf{v}}_{nt}$ are linearly independent. The linear independence condition in Theorem \ref{Theo1} is satisfied.  Therefore 
$\mathbf{z}_{t}$ must be an invertible, component-wise transformation of a permuted version of $\hat{\mathbf{z}}_t$.
\end{proof}

%%Case 1: Gaussian noise with variance depending on causses

%% Case 2: with non-Gaussian noise

\subsection{Proof for Theorem 2 and 3}
% Yao et al. (2021) established the identfiability of the latent temporal causal processes $\mathbf{z}_t$ in certain nonstationary cases, under the condition that the noise term in each $z_{kt}$, relative to its parents in $\mathbf{z}_{t-1}$, changes across $m$ contexts corresponding to $\mathbf{u} = u_1, u_2, ..., u_m$. Here we show that the identifiability result shown in the previous section can further benefit from nonstationarity of the causal model, and our identifiablity condition is generally much weaker than that in Yao et al. (2021): we allow changes in the noise term or causal influence on $z_{kt}$ from its parents in $\mathbf{z}_{t-1}$, and our ``sufficient variability" condition is just a necessary condition for that in Yao et al. (2021) because of the additional information that one can leverage.

Let $\mathbf{v}_{kt}(u_r)$ be $\mathbf{v}_{kt}$, which is defined in Eq.~\ref{Eq:v}, in the $u_r$ context. Similarly, Let $\mathring{\mathbf{v}}_{kt}(u_r)$ be $\mathring{\mathbf{v}}_{kt}$ in the $u_r$ context. Let 
$$\mathbf{s}_{kt} \triangleq \Big( \mathbf{v}_{kt}(u_1)^\intercal, ..., 
\mathbf{v}_{kt}(u_m)^\intercal, 
\frac{\partial^2 \eta_{kt}({u}_{2})}{\partial z_{kt}^2 } - 
 \frac{\partial^2 \eta_{kt}({u}_{1})}{\partial z_{kt}^2 }, ...,
 \frac{\partial^2 \eta_{kt}({u}_{m})}{\partial z_{kt}^2 } - 
 \frac{\partial^2 \eta_{kt}({u}_{m-1})}{\partial z_{kt}^2 }
 \Big)^\intercal,$$
 $$\mathring{\mathbf{s}}_{kt} \triangleq \Big( \mathring{\mathbf{v}}_{kt}(u_1)^\intercal, ..., 
\mathring{\mathbf{v}}_{kt}(u_m)^\intercal, 
\frac{\partial \eta_{kt}({u}_{2})}{\partial z_{kt} } - 
 \frac{\partial \eta_{kt}({u}_{1})}{\partial z_{kt} }, ...,
 \frac{\partial \eta_{kt}({u}_{m})}{\partial z_{kt} } - 
 \frac{\partial \eta_{kt}({u}_{m-1})}{\partial z_{kt} }
 \Big)^\intercal.$$
As provided below, in our case, the identifiablity of $\mathbf{z}_t$ is guaranteed by the linear independence of the whole function vectors $\mathbf{s}_{kt}$ and $\mathring{\mathbf{s}}_{kt}$, with $k=1,2,...,n$.
However, the identifiability result in Yao et al. (2021) relies on the linear independence of only the last $m-1$ components of $\mathbf{s}_{kt}$ and $\mathring{\mathbf{s}}_{kt}$ with $k=1,2,...,n$; this linear independence is generally a much stronger condition.  

\begin{theorem}[Identifiability under Changing Causal Dynamics]
Suppose the observed processes $\mathbf{x}_t$ was generated by Eq.~\ref{Eq:generation} and that the conditional distribution $p(z_{kt} \,|\, \mathbf{z}_{t-1})$ may change across $m$ values of the context variable $\mathbf{u}$, denoted by $u_1$, $u_2$, ..., $u_m$. Suppose the components of $\mathbf{z}_t$ are mutually independent conditional on $\mathbf{z}_{-1}$ in each context. Assume that the components of $\hat{\mathbf{z}}_t$ produced by Eq.~\ref{Eq:invert} are also mutually independent conditional on $\hat{\mathbf{z}}_{t-1}$. 
If the $2n$ function vectors $\mathbf{s}_{kt}$ and $\mathring{\mathbf{s}}_{kt}$, with $k=1,2,...,n$, are linearly independent, then $\hat{\mathbf{z}}_t$ is a permuted invertible component-wise transformation of $\mathbf{z}_t$. 
\end{theorem}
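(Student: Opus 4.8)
The plan is to run the argument of Theorem~\ref{Theo1} separately in each of the $m$ contexts and then exploit the nonstationarity as a \emph{second} device for removing the log-determinant term. Writing $\mathbf{h}\triangleq\mathbf{g}^{-1}\circ\mathbf{f}^{-1}$ so that $\mathbf{z}_t=\mathbf{h}(\hat{\mathbf{z}}_t)$ with invertible Jacobian $\mathbf{H}_t$, the key new observation is that neither the true mixing $\mathbf{g}$ nor the estimated demixing $\mathbf{f}$ depends on the context: only the transition density changes. Hence $\mathbf{H}_t$, the products $\mathbf{H}_{kit}\mathbf{H}_{kjt}$, the derivatives $\partial\mathbf{H}_{kit}/\partial\hat z_{jt}$, and the quantity $\partial^2\log|\mathbf{H}_t|/\partial\hat z_{it}\partial\hat z_{jt}$ are all \emph{context-invariant}. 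Fixing $\mathbf{z}_t$, these become constants, so the relevant linear independence will again be that of function vectors in $\mathbf{z}_{t-1}$.

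First I would redo, within each context $u_r$, the steps producing Eq.~\ref{Eq:cross}: conditional mutual independence of the components of $\hat{\mathbf{z}}_t$ given $\hat{\mathbf{z}}_{t-1}$ forces the mixed second derivative of $\log p(\hat{\mathbf{z}}_t\mid\hat{\mathbf{z}}_{t-1},u_r)$ to vanish for $i\neq j$ (as in Eq.~\ref{Eq:iszero}), which after the change of variables gives
\[
\sum_{k=1}^n\Big(\tfrac{\partial^2\eta_{kt}(u_r)}{\partial z_{kt}^2}\,\mathbf{H}_{kit}\mathbf{H}_{kjt}+\tfrac{\partial\eta_{kt}(u_r)}{\partial z_{kt}}\,\tfrac{\partial\mathbf{H}_{kit}}{\partial\hat z_{jt}}\Big)=\tfrac{\partial^2\log|\mathbf{H}_t|}{\partial\hat z_{it}\partial\hat z_{jt}}.
\]
I would then kill the context-invariant right-hand side in the two ways it now allows. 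Differentiating w.r.t.\ $z_{l,t-1}$ in each context (exactly as in Eq.~\ref{eq:lind-ap}, using that $\mathbf{H}_t$ is independent of $\mathbf{z}_{t-1}$) gives, for every $r$ and $l$, a homogeneous relation in which the coefficient of $\mathbf{H}_{kit}\mathbf{H}_{kjt}$ is a component of $\mathring{\mathbf{v}}_{kt}(u_r)$ and the coefficient of $\partial\mathbf{H}_{kit}/\partial\hat z_{jt}$ is a component of $\mathbf{v}_{kt}(u_r)$; subtracting consecutive contexts ($r$ versus $r-1$) without differentiating cancels the same right-hand side and produces the extra relations with coefficients $\Delta^2_r$ and $\Delta_r$ multiplying the same two Jacobian quantities.

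Stacking both families over $r=1,\dots,m$ (and $l=1,\dots,n$) assembles a single homogeneous linear system in the $2n$ unknowns $\{\mathbf{H}_{kit}\mathbf{H}_{kjt}\}_k$ and $\{\partial\mathbf{H}_{kit}/\partial\hat z_{jt}\}_k$, whose coefficient vectors are the two families $\mathbf{s}_{kt}$ and $\mathring{\mathbf{s}}_{kt}$ defined above (the within-context blocks $\mathbf{v}_{kt}(u_r),\mathring{\mathbf{v}}_{kt}(u_r)$ together with the across-context difference blocks). By hypothesis these $2n$ vectors are linearly independent, so for each $i\neq j$ all the unknowns vanish; in particular $\mathbf{H}_{kit}\mathbf{H}_{kjt}=0$ for every $k$. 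Thus each row of $\mathbf{H}_t$ has exactly one nonzero entry, and since $\mathbf{h}$ is invertible this means $\mathbf{z}_t$ is a permuted, component-wise invertible transformation of $\hat{\mathbf{z}}_t$, as claimed; Theorem~3 follows by the same template with $\mathbf{z}_{t-1}$ replaced by the context $\mathbf{u}$ in the conditioning.

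I expect the main obstacle to be the bookkeeping in the assembly step: one must check that the differentiated (within-context) relations and the differenced (across-context) relations pair the \emph{same} two Jacobian quantities $\mathbf{H}_{kit}\mathbf{H}_{kjt}$ and $\partial\mathbf{H}_{kit}/\partial\hat z_{jt}$, so that their coefficients genuinely stack into $\mathbf{s}_{kt},\mathring{\mathbf{s}}_{kt}$, and that the $m-1$ differenced constraints carry \emph{new} information rather than being consequences of the within-context ones. This last point underlies the claim that the condition here is strictly weaker than LEAP's, which uses only the differenced (last $m-1$) block; making that comparison precise, and confirming that fixing $\mathbf{z}_t$ legitimately turns the Jacobian terms into constants so that linear independence in $\mathbf{z}_{t-1}$ is the correct notion, are the delicate parts.
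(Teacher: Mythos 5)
Your proposal is correct and follows essentially the same route as the paper's proof: within each context you differentiate the vanishing cross-derivative identity with respect to $z_{l,t-1}$ (yielding the $\mathbf{v}_{kt}(u_r),\mathring{\mathbf{v}}_{kt}(u_r)$ blocks), you difference it across consecutive contexts to cancel the context-invariant $\partial^2\log|\mathbf{H}_t|/\partial\hat z_{it}\partial\hat z_{jt}$ term (yielding the $\Delta_r,\Delta^2_r$ blocks), and you stack both families into one homogeneous system whose linear independence forces $\mathbf{H}_{kit}\mathbf{H}_{kjt}=0$ for $i\neq j$. The bookkeeping issue you flag is real but harmless: the natural coefficient vector of $\mathbf{H}_{kit}\mathbf{H}_{kjt}$ collects $\mathring{\mathbf{v}}_{kt}(u_r)$ with $\Delta^2_r$ and that of $\partial\mathbf{H}_{kit}/\partial\hat z_{jt}$ collects $\mathbf{v}_{kt}(u_r)$ with $\Delta_r$, which is a block-swapped regrouping of the paper's stated $\mathbf{s}_{kt},\mathring{\mathbf{s}}_{kt}$, but the substance of the argument is identical.
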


\begin{proof}
As in the proof of Theorem \ref{Theo1}, because the components of $\hat{\mathbf{z}}_t$ are mutually independent conditional on $\hat{\mathbf{z}}_{t-1}$, we know that 
for $i\neq j$, 
\begin{equation}
 \label{Eq:cross2}
  \frac{\partial^2 \log p(\hat{\mathbf{z}}_t \,|\, \hat{\mathbf{z}}_{t-1}; \mathbf{u})}{\partial \hat{z}_{it} \partial \hat{z}_{jt}}
  = \sum_{k=1}^n \Big( \frac{\partial^2 \eta_{kt}(\mathbf{u})}{\partial z_{kt}^2 } \cdot \mathbf{H}_{kit}\mathbf{H}_{kjt} + \frac{\partial \eta_{kt}(\mathbf{u})}{\partial z_{kt}} \cdot \frac{\partial \mathbf{H}_{kit}}{\partial \hat{z}_{jt}} \Big)- \frac{\partial^2 \log |\mathbf{H}_t|}{\partial \hat{z}_{it} \partial \hat{z}_{jt}} \equiv 0.
\end{equation}
Compared to Eq.~\ref{Eq:cross}, here we allow $p(\hat{\mathbf{z}}_t \,|\, \hat{\mathbf{z}}_{t-1})$ to depend on $\mathbf{u}$.  Since the above equation is always 0, taking its partial derivative w.r.t. $z_{l,t-1}$ gives
 \begin{flalign} \label{Eq:linear_part1}
  \frac{\partial^3 \log p(\hat{\mathbf{z}}_t \,|\, \hat{\mathbf{z}}_{t-1}; \mathbf{u})}{\partial \hat{z}_{it} \partial \hat{z}_{jt} \partial z_{l,t-1}}
  &= \sum_{k=1}^n \Big( \frac{\partial^3 \eta_{kt}(\mathbf{u})}{\partial z_{kt}^2 \partial z_{l,t-1}} \cdot \mathbf{H}_{kit}\mathbf{H}_{kjt} + \frac{ \partial^2 \eta_{kt}( \mathbf{u})}{\partial z_{kt} \partial z_{l,t-1}}  \cdot \frac{\partial \mathbf{H}_{kit}}{\partial \hat{z}_{jt} } \Big) \equiv 0.
 \end{flalign}
Similarly, Using different values for $\mathbf{u}$ in Eq.~\ref{Eq:cross2} take the difference of this equation across them gives
\begin{flalign}
 \nonumber 
  &\frac{\partial^2 \log p(\hat{\mathbf{z}}_t \,|\, \hat{\mathbf{z}}_{t-1}; {u}_{r+1})}{\partial \hat{z}_{it} \partial \hat{z}_{jt}} - \frac{\partial^2 \log p(\hat{\mathbf{z}}_t \,|\, \hat{\mathbf{z}}_{t-1}; {u}_{r+1})}{\partial \hat{z}_{it} \partial \hat{z}_{jt}}
  \\ \label{Eq:cross3} =& \sum_{k=1}^n \Big[ \Big(\frac{\partial^2 \eta_{kt}({u}_{r+1})}{\partial z_{kt}^2 } - 
 \frac{\partial^2 \eta_{kt}({u}_{r})}{\partial z_{kt}^2 } \Big) \cdot \mathbf{H}_{kit}\mathbf{H}_{kjt} + \Big(\frac{\partial \eta_{kt}(u_{r+1})}{\partial z_{kt}} - \frac{\partial \eta_{kt}(u_{r})}{\partial z_{kt}}\Big) \cdot \frac{\partial \mathbf{H}_{kit}}{\partial \hat{z}_{jt}} \Big] \equiv 0.
\end{flalign}

Therefore, if $\mathbf{s}_{kt}$ and $\mathring{\mathbf{s}}_{kt}$, for $k=1,2,...,n$, are linearly independent, $\mathbf{H}_{kit}\mathbf{H}_{kjt}$ has to be zero for all $k$ and $i\neq j$. Then as shown in the proof of Theorem \ref{Theo1}, $\hat{\mathbf{z}}_t$ must be a permuted component-wise invertible transformation of $\mathbf{z}_t$. 
\end{proof}

\begin{theorem}[Identifiability under Observation Changes]
Suppose $\mathbf{x}_t = \mathbf{g}(\mathbf{z}_t)$ and that the conditional distribution $p(z_{k,t} \,|\, \mathbf{u})$ may change across $m$ values of the context variable $\mathbf{u}$, denoted by $u_1$, $u_2$, ..., $u_m$. Suppose the components of $\mathbf{z}_t$ are mutually independent conditional on $\mathbf{u}$ in each context. Assume that the components of $\hat{\mathbf{z}}_t$ produced by Eq.~\ref{eq:invert} are also mutually independent conditional on $\hat{\mathbf{z}}_{t-1}$. 
If the $2n$ function vectors $\mathbf{s}_{k,t}$ and $\mathring{\mathbf{s}}_{k,t}$, with $k=1,2,...,n$, are linearly independent, then $\hat{\mathbf{z}}_t$ is a permuted invertible component-wise transformation of $\mathbf{z}_t$.

\end{theorem}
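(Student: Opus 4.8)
The plan is to run the argument for Theorem~2 almost verbatim, the essential change being that the observation-change latents carry \emph{no} temporal parents, so all identifying power must come from variation across the $m$ contexts rather than from derivatives with respect to the lagged states. First I would combine $\mathbf{x}_t=\mathbf{g}(\mathbf{z}_t)$ with the estimated demixing $\hat{\mathbf{z}}_t=\mathbf{f}(\mathbf{x}_t)$ into the invertible map $\mathbf{h}=\mathbf{g}^{-1}\circ\mathbf{f}^{-1}$ with $\mathbf{z}_t=\mathbf{h}(\hat{\mathbf{z}}_t)$ and Jacobian $\mathbf{H}_t$. As in the proof of Theorem~1, conditioning on $\hat{\mathbf{z}}_{t-1}$ is equivalent to conditioning on $\mathbf{x}_{t-1}$ and hence on $\mathbf{z}_{t-1}$; crucially, the entries of $\mathbf{H}_t$ depend only on $\hat{\mathbf{z}}_t$, neither on the past nor on $\mathbf{u}$, because $\mathbf{g}$ and $\mathbf{f}$ are fixed across domains. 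One should first confirm the conditioning: for observation-change latents with no temporal parents, conditioning the estimated model on $\hat{\mathbf{z}}_{t-1}$ reduces to conditioning on $\mathbf{u}$, so that $\log p(\hat{\mathbf{z}}_t\,|\,\cdot\,;\mathbf{u})=\sum_k \eta_{kt}(\mathbf{u})+\log|\mathbf{H}_t|$ with $\eta_{kt}(\mathbf{u})=\log p(z_{kt}\,|\,\mathbf{u})$.

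Next, from the assumed conditional independence of the components of $\hat{\mathbf{z}}_t$ I would write $\tfrac{\partial^2 \log p(\hat{\mathbf{z}}_t\,|\,\hat{\mathbf{z}}_{t-1};\mathbf{u})}{\partial \hat{z}_{it}\,\partial \hat{z}_{jt}}=0$ for $i\neq j$ and expand it through the change of variables exactly as in the proof of Theorem~2, obtaining $\sum_{k}\big(\tfrac{\partial^2 \eta_{kt}(\mathbf{u})}{\partial z_{kt}^2}\mathbf{H}_{kit}\mathbf{H}_{kjt}+\tfrac{\partial \eta_{kt}(\mathbf{u})}{\partial z_{kt}}\tfrac{\partial \mathbf{H}_{kit}}{\partial \hat{z}_{jt}}\big)-\tfrac{\partial^2 \log|\mathbf{H}_t|}{\partial \hat{z}_{it}\,\partial \hat{z}_{jt}}=0$. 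The decisive structural difference from Theorems~1 and~2 is that $\eta_{kt}(\mathbf{u})$ here does not depend on $\mathbf{z}_{t-1}$, so differentiating with respect to $z_{l,t-1}$ --- the device used to annihilate $\log|\mathbf{H}_t|$ in the stationary and changing-dynamics proofs --- yields only the trivial identity $0=0$; in particular $\mathbf{v}_{k,t}(u_r)$ and $\mathring{\mathbf{v}}_{k,t}(u_r)$ all vanish. The only usable constraints therefore arise by cancelling the $\mathbf{u}$-independent term $\log|\mathbf{H}_t|$ through differences across adjacent contexts, giving for each $r$ the system $\sum_{k}\big(\Delta^2_{r}\,\mathbf{H}_{kit}\mathbf{H}_{kjt}+\Delta_{r}\,\tfrac{\partial \mathbf{H}_{kit}}{\partial \hat{z}_{jt}}\big)=0$.

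Finally I would read these $m-1$ equations, for each fixed pair $i\neq j$, as a homogeneous linear system in the $2n$ unknowns $\{\mathbf{H}_{kit}\mathbf{H}_{kjt},\,\partial \mathbf{H}_{kit}/\partial \hat{z}_{jt}\}_{k=1}^{n}$, whose coefficient vectors are exactly the surviving (domain-difference) blocks of $\mathbf{s}_{k,t}$ and $\mathring{\mathbf{s}}_{k,t}$. Since the within-context blocks $\mathbf{v}_{k,t}(u_r),\mathring{\mathbf{v}}_{k,t}(u_r)$ are zero, the hypothesized linear independence of the $2n$ vectors $\mathbf{s}_{k,t},\mathring{\mathbf{s}}_{k,t}$ collapses to linear independence of these difference blocks, which forces $\mathbf{H}_{kit}\mathbf{H}_{kjt}=0$ for every $k$ and every $i\neq j$. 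Hence each row of $\mathbf{H}_t$ has a single nonzero entry, and the invertibility of $\mathbf{h}$ upgrades this to a permutation composed with a component-wise invertible transformation, which is the assertion.

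I expect the main obstacle to be justifying that discarding the within-context derivative equations still leaves enough constraints. Because the elimination of $\log|\mathbf{H}_t|$ can now proceed \emph{only} through the $m-1$ finite differences across domains, the argument implicitly requires at least $2n+1$ contexts for the $2n$ difference vectors to be capable of being linearly independent, so the ``sufficient variability'' demanded of $\mathbf{u}$ is genuinely stronger here than in the stationary case. The delicate bookkeeping step is to verify that the reduced vectors $\mathbf{s}_{k,t},\mathring{\mathbf{s}}_{k,t}$ (with their $\mathbf{v}$-blocks set to zero) are precisely the coefficient vectors multiplying $\mathbf{H}_{kit}\mathbf{H}_{kjt}$ and $\partial \mathbf{H}_{kit}/\partial \hat{z}_{jt}$ in the difference system, so that the stated linear-independence hypothesis is exactly the condition under which the system admits only the trivial solution.
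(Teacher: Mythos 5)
Your proposal is correct and follows essentially the same route as the paper, whose own proof of this theorem is just the remark that, since $\mathbf{z}_t$ (and hence $\hat{\mathbf{z}}_t$) is independent of the history, the conditioning on $\hat{\mathbf{z}}_{t-1}$ in the Theorem~2 argument can be dropped and only the cross-context difference equations survive. You actually work this out in more detail than the paper does — in particular your observations that the within-context blocks $\mathbf{v}_{k,t}(u_r),\mathring{\mathbf{v}}_{k,t}(u_r)$ vanish identically and that the linear-independence hypothesis therefore implicitly requires $m\ge 2n+1$ contexts are correct and left unstated in the original.
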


\begin{proof}
As in the proof of Theorem S2, because $\mathbf{z}_t$ is not dependent on the history $\mathbf{z}_{t-1}$ so are the components of $\hat{\mathbf{z}}_t$, the conditioning on $\hat{\mathbf{z}}_t$ in Eq.~\ref{Eq:cross2} and the following equations can be removed because of the independence. This directly leads to the same conclusion as in Theorem S2. 

\end{proof}

\begin{corollary}[Identifiability under Modular Distribution Shifts] Assume the data generating process in Eq.~\ref{eq:model}. If the three partitioned latent components $\mathbf{z}_t = (\mathbf{z}_t^{\text{fix}}, \mathbf{z}_t^{\text{chg}}, \mathbf{z}_t^{\text{obs}})$ respectively satisfy the conditions in \textbf{Theorem} 1, \textbf{Theorem} 2, and \textbf{Theorem} 3, then $\mathbf{z}_{t}$ must be an invertible, component-wise transformation of a permuted version of $\hat{\mathbf{z}}_t$. 
 
\end{corollary}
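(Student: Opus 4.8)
The plan is to run the same Jacobian-annihilation argument used in Theorems 1--3, but on the full $n$-dimensional latent space at once, and to show that the linear-independence hypotheses inherited from the three blocks combine into a single global condition strong enough to force the Jacobian into generalized-permutation form. First I would compose the two generative maps, writing $\mathbf{z}_t = \mathbf{h}(\hat{\mathbf{z}}_t)$ with $\mathbf{h} \triangleq \mathbf{g}^{-1}\circ\hat{\mathbf{g}}^{-1}$ invertible and letting $\mathbf{H}_t$ denote its Jacobian, exactly as in the proof of Theorem~\ref{Theo1}. Since the estimated components $\hat{\mathbf{z}}_t$ are assumed mutually independent conditional on history and/or domain, the cross second-order derivative $\partial^2 \log p(\hat{\mathbf{z}}_t \mid \hat{\mathbf{z}}_{t-1}, \mathbf{u})/(\partial \hat{z}_{it}\partial \hat{z}_{jt})$ vanishes for every $i\neq j$, and the change-of-variables identity $\log p(\hat{\mathbf{z}}_t\mid\hat{\mathbf{z}}_{t-1},\mathbf{u}) = \sum_{k=1}^n \eta_{kt} + \log|\mathbf{H}_t|$ converts this into a single master equation $\sum_{k=1}^n\big(\tfrac{\partial^2\eta_{kt}}{\partial z_{kt}^2}\mathbf{H}_{kit}\mathbf{H}_{kjt} + \tfrac{\partial\eta_{kt}}{\partial z_{kt}}\tfrac{\partial\mathbf{H}_{kit}}{\partial\hat{z}_{jt}}\big) = \tfrac{\partial^2\log|\mathbf{H}_t|}{\partial\hat{z}_{it}\partial\hat{z}_{jt}}$, where the sum now runs over all three blocks and $\eta_{kt}$ is the per-block conditional log-density appropriate to each coordinate.

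Next I would extract the block-specific constraints by exploiting the fact that the three blocks draw their variability from structurally disjoint sources. Differentiating the master equation with respect to a history coordinate $z_{l,t-1}$ kills the right-hand side (entries of $\mathbf{H}_t$ depend only on $\hat{\mathbf{z}}_t$, not on $\mathbf{z}_{t-1}$) and also kills every observation-block summand, whose $\eta_{kt}$ does not involve $\mathbf{z}_{t-1}$, leaving exactly the $\mathbf{v}_{k,t}(u_r),\mathring{\mathbf{v}}_{k,t}(u_r)$ terms for the fixed and changing blocks. Taking differences of the master equation across consecutive domains $u_{r},u_{r-1}$ kills every fixed-block summand, whose $\eta_{kt}$ does not depend on $\mathbf{u}$, and kills $\log|\mathbf{H}_t|$ (the mixing is domain-invariant), leaving the $\Delta^2_r,\Delta_r$ terms for the changing and observation blocks. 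Reassembling the two families per component $k$ reproduces precisely the stacked function vectors $\mathbf{v}_{k,t},\mathring{\mathbf{v}}_{k,t}$ (fixed block, Theorem~1), $\mathbf{s}_{k,t},\mathring{\mathbf{s}}_{k,t}$ (changing block, Theorem~2), and their domain-difference-only analogues (observation block, Theorem~3).

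I would then argue that the global collection of these $2n$ stacked vectors is linearly independent. The per-block hypotheses give linear independence within each block; independence across blocks follows because any putative linear dependence must vanish separately in the history-variability coordinates, where observation-block entries are identically zero, and in the domain-difference coordinates, where fixed-block entries are identically zero, so the fixed, changing, and observation contributions cannot cancel one another. Global linear independence forces $\mathbf{H}_{kit}\mathbf{H}_{kjt}=0$ for every $k$ and every $i\neq j$, i.e.\ each row of $\mathbf{H}_t$ has at most one nonzero entry; invertibility of $\mathbf{h}$ upgrades this to a generalized permutation matrix, whence $\mathbf{z}_t$ is a permuted, component-wise invertible transformation of $\hat{\mathbf{z}}_t$.

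The main obstacle, and the step deserving the most care, is this cross-block independence argument: one must verify that the global Jacobian cannot mix a coordinate living in one block with a coordinate living in another. Concretely this requires instantiating the master equation with $i$ and $j$ drawn from different blocks and confirming that the same annihilation still applies. The disjointness of the two variability channels (history versus domain) is exactly what prevents leakage, but it has to be established rather than assumed, since a priori $\mathbf{h}$ is an arbitrary invertible map on the joint space and nothing structural forbids block mixing until the combined linear-independence condition is invoked.
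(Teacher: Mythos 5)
Your proposal follows essentially the same route as the paper's own proof: factorize the conditional log-density over the three conditionally independent blocks, run the Jacobian-annihilation argument of Theorems 1--3 on the full latent space, and conclude $\mathbf{H}_{kit}\mathbf{H}_{kjt}=0$ from linear independence of the combined $2n$ stacked function vectors. If anything you are more explicit than the paper at the one delicate point---the cross-block linear independence---which the paper's proof simply asserts in a single sentence by invoking the global condition over all $k=1,\dots,n$; your disjoint-variability-channel argument is a reasonable (if still informal) attempt to derive that global condition from the per-block hypotheses.
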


\begin{proof}
Because the three partitioned subspaces $(\mathbf{z}_t^{\text{fix}}, \mathbf{z}_t^{\text{chg}}, \mathbf{z}_t^{\text{obs}})$ are conditional independent given the history and domain index, it is straightforward to factorize the joint conditional log density into three components. By using the proof in Theorem 1, 2, and 3, we can directly derive the same quantity as in Eq. 15 or Eq. 24. Therefore, if $\mathbf{s}_{kt}$ and $\mathring{\mathbf{s}}_{kt}$, for $k=1,2,...,n$, are linearly independent, $\mathbf{H}_{kit}\mathbf{H}_{kjt}$ has to be zero for all $k$ and $i\neq j$. Then as shown in the proof of Theorem \ref{Theo1}, $\hat{\mathbf{z}}_t$ must be a permuted component-wise invertible transformation of $\mathbf{z}_t$. 
\end{proof}
% \subsection{Discussion of Assumed Conditions}
\subsection{Comparisons with Existing Nonlinear ICA Theories}
\label{ap:compar}

We compare our established theory with (1) \underline{PCL}~\citep{hyvarinen2017nonlinear}, (2) \underline{SlowVAE}~\citep{klindt2020towards}, (3) \underline{i-VAE}~\citep{khemakhem2020variational},  (4) \underline{GCL}~\citep{hyvarinen2019nonlinear} and (5) \underline{LEAP}~\citep{yao2021learning} in terms of their mathematical formulation and assumptions. 

\paragraph{PCL \cite{hyvarinen2017nonlinear}} The formulation of the underlying processes in PCL is in \Eqref{eq:comp0}:
\begin{equation}
\small
\label{eq:comp0}
\log p(z_{i,t}|z_{i,t-1})=G(z_{i,t}-\rho z_{i,t-1}) \quad {\rm or} \quad
\log p(z_{i,t}|z_{i,t-1})=-\lambda \left(z_{i,t} - r(z_{i,t-1})\right)^2+{\rm const},
\end{equation}
where $G$ is a non-quadratic function corresponding to the log-pdf of innovations, $\rho<1$ is the regression coefficient, $r$ is some nonlinear, strictly monotonic regression, and $\lambda$ is a positive precision parameter. 

PCL is applicable to \textbf{stationary environments} only when the sources $z_{it}$ are mutually independent (see Assumption 1 of Theorem 1 in PCL) and follow functional and distribution assumptions in \Eqref{eq:comp0}. Our formulation allows latent variables to have arbitrary, nonparametric time-delayed causal relations in between without functional form or distribution assumptions.

\paragraph{SlowVAE \cite{klindt2020towards}} The formulation of the underlying sources in SlowVAE is in \Eqref{eq:comp1}:
\begin{equation}
\label{eq:comp1}
 p(\mathbf{z}_t|\mathbf{z}_{t-1})=\prod \limits_{i=1}^d\frac{\alpha \lambda}{2 \Gamma (1/\alpha)}\exp{-(\lambda|z_{i,t}-z_{i,t-1}|^{\alpha})} \quad with \quad \alpha < 2,
\end{equation}
where the latent processes have independent, identity transitions with generalized Laplacian noises.

SlowVAE established identifiability under stationary, mutually independent processes (similar to PCL \citep{hyvarinen2017nonlinear}), in which time-delayed causal influences are not allowed. Furthermore, it assumes that the transition function of each independent process is an identity function and the process noise has generalized Laplacian distribution. Our \textbf{Theorem 1} includes both SlowVAE and PCL as special cases, in the sense that (1) we remove the functional and distributional assumptions to allow the latent processes to have nonparametric causal influences in between, and (2) our \textbf{Corollay 2}, which is an illustrative example of \textbf{Theorem 1}, further completes to \Eqref{eq:comp1} by allowing \textbf{linear time-delayed transitions} in the latent process with \textbf{non-Gaussian noises}. 

\paragraph{i-VAE \cite{khemakhem2020variational}}
 Similar to TCL~\citep{hyvarinen2016unsupervised} and GIN~\citep{sorrenson2020disentanglement}, i-VAE exploits the nonstationarity brought by class labels on the distribution of latent variables. As one can see from \Eqref{eq:comp2}, the latent variables are conditionally independent, without causal relations in between while all of our theorems consider (time-delayed) causal relations between latent variables. In addition, iVAE assumes the modulation of class labels on latent distributions is limited within the exponential family distribution. On the contrary, our nonparametric conditions (Theorems 1,2,3) allow any kind of modulation caused by fixed, changing transition dynamics or observation changes without those strong assumptions on the distribution of latent variables or noise distribution (i.e., SlowVAE \cite{klindt2020towards}). 
\begin{equation}
\label{eq:comp2}
p_{T,\lambda}(\mathbf{z}|\mathbf{u})=\prod \limits_i\frac{Q_i(z_i)}{z_i(\mathbf{u})}\exp{[\sum^k_{j=1}T_{i,j}(z_i)\lambda_{i,j}(\mathbf{u})]}
\end{equation}
%  In terms of architecture innovations, to remove distributional and functional form constraints of iVAE, we design a novel causal transition prior network for nonparametric transitions by injecting the IN condition inside reparameterization trick, resulting in an efficient scoring mechanism of transition prior which only needs to compute the determinant of a low-triangular Jacobian matrix. This module was never seen in previous work.

\paragraph{GCL \citep{hyvarinen2019nonlinear}} The formulation of the underlying sources in GCL is in Eq~\ref{eq:gcl}, which is also described by Eqs.~4,15 in the original paper \citep{hyvarinen2019nonlinear}:

\begin{equation}
\label{eq:gcl}
p(\mathbf{z}_{t}|\mathbf{z}_{t-1})= \prod_{i=1}^d p_i(z_{i,t} | z_{i,t-1}),
\end{equation}
where the latent processes are free from the exponential family distribution assumptions but still constrained within mutually-independent processes. On the contrary, our work considers \textbf{causally-related} latent space in which cross causal relations between latent variables can be recovered. Additionally, we want to mention that we provide an antenna tube in the schema of the proof in \textbf{Theorems} 1-2-3, which is a more direct way of using sufficient variability conditions than \cite{hyvarinen2019nonlinear}.

\paragraph{LEAP \cite{yao2021learning}} LEAP in Eq.~\ref{eq:np-gen}  considers one special case of nonstationarity caused by changes in noise distributions, while our work can allow changing causal relations over context. Furthermore, because LEAP assumes all latent processes are changed across contexts, it doesn't use or benefit from the fixed time-delayed causal relations for identifiability. On the contrary, our work exploits the modular distribution changes from the fixed causal dynamics, changing dynamics, and observation changes, and hence our identifiability conditions are generally weaker than \citep{yao2021learning}.

\begin{equation}\label{eq:np-gen}
   \underbrace{ \mathbf{x}_t = g(\mathbf{z}_t) }_{\text{Nonlinear mixing}}, \quad \underbrace{z_{it} = f_i\left(\{z_{j, t-\tau} \vert z_{j, t-\tau} \in \mathbf{Pa}(z_{it}) \}, \epsilon_{it}  \right)}_{\text{Nonparametric transition}} \; with \underbrace{\epsilon_{it} \sim p_{\epsilon_i \vert \mathbf{u}}}_{\text{Nonstationary noise}}.
\end{equation}

% \color{blue}
\subsection{Discussion of the Assumptions}\label{ap:assumption}

We first explain and justify each critical assumption in the proposed conditions. We then discuss how restrictive or mild the conditions are in real applications.

\subsubsection{Linear Independence Condition}

Our proposed linear independence condition is a combination of stationary identifiability conditions (\Eqref{eq:v}) in each context $u_r$, plus the identifiability conditions for nonrecurrent influences (\Eqref{eq:vns}). The condition is essential to make each row of the Jacobian matrix $\mathbf{H}_t$ of the indeterminacy function of the learned latent space in \Eqref{eq:lind-ap} to have only one non-zero entry, thus making the learned latent variables identifiable up to permutation and component-wise invertible transformations.

In \textbf{stationary environments}, this condition essentially states that, during the generation of $\mathbf{z}_t$, if either of the two conditions is satisfied, then the linear independence condition holds in general. 

\textbf{(1)} If the history information $\mathbf{z}_{\text{Hx}} = \{\mathbf{z}_{t-\tau}\}_{\tau=1}^{L}$ up to maximum time lag $L$, and the process noise $\epsilon_t$ are coupled in a nontrivial way for generating $\mathbf{z}_t$ (e.g., heterogeneous noise process in \Eqref{eq:heteo-ap}), such that $\mathbf{z}_{\text{Hx}}$ can modulate the variance or higher-order statistics of the conditional distribution $p(\mathbf{z}_t | \mathbf{z}_{\text{Hx}})$, then the linear independence condition generally holds; 

\begin{equation} \label{eq:heteo-ap}
    z_{k,t} = q_k(\mathbf{z}_{t-1}) + \frac{1}{b_k(\mathbf{z}_{t-1})}\epsilon_{k,t}.
\end{equation}

\textbf{(2)} If the latent transition is an additive noise model (then (1) is violated) but the process noise is non-Gaussian, it will be extremely hard for the linear independence condition to be violated. Roughly speaking, for a randomly chosen conditional density function $p(z_{k,t}\,|\,\mathbf{z}_{t-1})$ in which $z_{k,t}$ is not independent from $\mathbf{z}_{t-1}$ (i.e., there is temporal dependence in the latent processes) and which does not follow an additive noise model with Gaussian noise, the chance for its specific second- and third-order partial derivatives to be linearly dependent is slim.

In \textbf{nonstationary environments}, this condition was introduced in GCL~\citep{hyvarinen2019nonlinear}, namely, ``sufficient variability'', to extend the modulated exponential families \citep{hyvarinen2016unsupervised} to general modulated distributions. Essentially, the condition says that the nonstationary regimes $\mathbf{u}$ must have a sufficiently complex and diverse effect on the transition distributions. In other words, if the underlying distributions are composed of relatively many domains of data, the condition generally holds true. For instance, in the linear Auto-Regressive (AR) model with Gaussian innovations where only the noise variance changes, the condition reduces to the statement in \citep{matsuoka1995neural} that the variance of each noise term fluctuates somewhat independently of each other in different nonstationary regimes. Then the condition is easily attained if the variance vector of noise terms in any regime is not a linear combination of variance vectors of noise terms in other regimes.

We further illustrate the condition using the example of modulated conditional exponential families in \citep{hyvarinen2019nonlinear}. Let the log-pdf $q(\z_t \vert \{\z_{t-\tau}\}, \mathbf{u})$ be a conditional exponential family distribution of order $k$ given nonstationary regime $\mathbf{u}$ and history $\mathbf{z}_{\text{Hx}}=\{\z_{t-\tau}\}$:
\begin{equation}
q(z_{it} \vert \mathbf{z}_{\text{Hx}}, \mathbf{u}) = q_i(z_{it}) + \sum_{j=1}^k q_{ij}(z_{it}) \lambda_{ij}(\mathbf{z}_{\text{Hx}}, \mathbf{u}) - \log Z(\mathbf{z}_{\text{Hx}}, \mathbf{u}),
\end{equation}

where $q_i$ is the base measure, $q_{ij}$ is the function of the sufficient statistic, $\lambda_{ij}$ is the natural parameter, and $\log Z$ is the log-partition. Loosely speaking, the sufficient variability holds if the modulation of by $\mathbf{u}$ on the conditional  distribution $q(z_{it} \vert \mathbf{z}_{\text{Hx}}, \mathbf{u})$ is not too simple in the following sense:
\begin{enumerate}

\item  Higher order of $k$ ($k>1$) is required. If $k=1$, the sufficient variability cannot hold;

\item The modulation impacts $\lambda_{ij}$ by $\mathbf{u}$ must be linearly independent across regimes $\mathbf{u}$. The sufficient statistics functions $q_{ij}$ cannot be all linear, i.e., we require higher-order statistics.

\end{enumerate}

Further details of this example can be found in Appendix B of \citep{hyvarinen2019nonlinear}. In summary, we need the modulation by $\mathbf{u}$ to have diverse (i.e., distinct influences) and complex impacts on the underlying data generation process.

\paragraph{Applicability}

By combining the stationary and nonstationary conditions, our proposed  identifiability condition is generally mild, in the sense that if there is at least one regime $r$ out of the $m$ contexts which satisfies the stationary identifiability conditions, OR, if the overall nonstationary influences are diverse and complex, thus satisfying the nonstationary identifiability conditions, the latent temporal causal processes are identifiable. For stationary conditions, the only situation where we find the latent processes unidentifiable is when the latent temporal transition is described by a Gaussian additive noise model, which violates both (1) and (2). However, for real-world data, it is very unlikely for the process noise to be perfectly Gaussian. Nonstationarity seems to be prominent in many kinds of temporal data. For example, nonstationary variances are seen in EEG/MEG, and natural video, and are closely related to changes in volatility in financial time series \citep{hyvarinen2016unsupervised}. The data that most likely satisfy the nonstationary condition is a collection of multiple trials/segments of data with different temporal dynamics in between.

\subsubsection{Independent Noise Condition}

The IN condition was introduced in the Structural Equation Model (SEM), which represents effect $Y$ as a function of direct causes $X$ and noise $E$: 
\begin{equation}
    Y = f(X, E) \quad with \quad \underbrace{X \ind E}_{\text{IN condition}}.
\end{equation}

If $X$ and $Y$ do not have a common cause, as seen from the causal sufficiency assumption of structural equation models in Chapter 1.4.1 of Pearl's book \citep{pearl2000models}, the IN condition states that the unexplained noise variable $E$ is statistically independent of cause $X$. IN is a direct result of assuming causal sufficiency in SEM. The main idea for the proof is that if IN is violated, then by the common cause principle~\citep{reichenbach1956direction}, there exist hidden confounders that cause their dependence, thus violating the causal sufficiency assumption. Furthermore, the noise terms in different variables are mutually independent for a causally sufficient system with acyclic causal relations. The main idea is that when the noise terms are dependent, it is customary to encode such dependencies by augmenting the graph with hidden confounder variables \citep{pearl2000models}, which means that the system is not causally sufficient.% The formal definitions are given below.

This paper assumes that the underlying latent processes form a casually-sufficient system without latent causal confounders. Then, the process noise terms $\epsilon_{it}$ are mutually independent, and moreover, the process noise terms $\epsilon_{it}$ are independent of direct cause/parent nodes $\mathbf{Pa}(z_{it})$ because of time information (the causal graph is acyclic because of the temporal precedence constraint).

\paragraph{Applicability}
Loosely speaking, if there are no latent causal confounders in the (latent) causal processes and the sampling frequency is high enough to observe the underlying dynamics, then the IN condition assumed in this paper is satisfied in a causally-sufficient system and, moreover, there is no instantaneous causal influence (because of the high enough resolution). At the same time, we acknowledge that there exist situations where the resolution is low and there appears to be instantaneous dependence. However, several pieces of work deal with causal discovery from measured time series in such situations; see. e.g., \cite{granger1987implications, Subsampling_ICML15, Danks14, Aggre_UAI17}.
In case there are instantaneous causal relations among latent causal processes, one would need additional sparsity or minimality conditions to recover the latent processes and their relations, as demonstrated in \cite{Silva06,Adams_21}. How to address the issue of instantaneous dependency or instantaneous causal relations in the latent processes will be one line of our future work.

\subsubsection{Causal Influences between Observed Variables}

Although causal discovery between observed variables is not the main focus of our work, our model can discover causal relations between observed variables as a special case, thanks to the nonlinear mixing function assumed in this paper. In our formulation in \Eqref{eq:model}, we assume the observations $x_t$ are nonlinear, invertible mixtures of latent processes $z_t$. However, if in the data generating process, one observed variable $x_i$ has direct causal edges with the other observed variable $x_j$, the mixing function that generates $x_i$ and $x_j$ will just be reduced to identity mappings of $z_i$ and $z_j$, which is a special case of the nonlinear invertible mixing function in \Eqref{eq:model}. 

\subsection{Extension to Multiple Time Lags}\label{sec:mlag-ap}

 For the sake of simplicity, we consider one lag for the latent processes in Section 3 (and only in Section 3). Our identifiability proof can actually be applied for arbitrary lags directly. For instance, in the stationary case in \Eqref{eq:v}, one can simply re-define $\eta_{kt} \triangleq \log p(z_{k, t} | \mathbf{z}_{Hx})$, where  $\mathbf{z}_{Hx}$ denotes the lagged latent variables up to maximum time lag $L$. We plug it into \Eqref{eq:v}, and take derivatives with regard to $z_{1, t-\tau}, …, z_{n, t-\tau}$, which can be any latent temporal variables at lag $\tau$, instead of $z_{1, t-1}, …, z_{n, t-1}$. If there exists one $\tau$ (out of the $L$ lags) that satisfies the condition, then the stationary latent processes are identifiable. Similarly, for \Eqref{eq:vns}, one can simply re-define $\eta\_{kt}(u_r)  \triangleq \log p(z_{k, t} | \mathbf{z}_{Hx}, u_r)$ and plug it into \Eqref{eq:vns}. No extra changes are needed.
 
\color{black}

\section{Experiment Settings}\label{ap:experiment}

\subsection{Datasets}
\subsubsection{Synthetic Dataset Generation}\label{ap:synthetic}

We consider three representative simulation settings to validate the identifiability results under fixed causal dynamics, changing causal dynamics, and modular distribution shift which contains fixed dynamics, changing dynamics, and global changes together in the latent processes. For synthetic datasets with fixed and changing causal dynamics, we set latent size $n = 8$. For the modular shift dataset, we add one dimension for global observation changes. The lag number of the process is set to $L = 2$. The mixing function $g$ is a random three-layer MLP with LeakyReLU units.

\paragraph{Fixed Causal Dynamics}

For the fixed causal dynamics. We generate 100,000 data points according to Eq. \eqref{eq:heteo}, where the latent size is $n=8$, lag number of the process is $L = 2$.
We apply a 2-layer MLP with LeakyReLU as the state transition function. The process noise are sampled from i.i.d. Gaussian distribution ($\sigma=0.1$). The process noise terms are coupled with the history information through multiplication with the average value of all the time-lagged latent variables. 

\paragraph{Changing Causal Dynamics}

We use a Gaussian additive noise model with changes in the influencing strength as the latent processes. To add changes, we vary the values of the first layer of the MLP across the 20 segments and generate 7,500 samples for each segment. The entries of the kernel matrix of the first layer are uniformly distributed between $[-1,1]$ in each domain. 

\paragraph{Modular Distribution Shifts}

The latent space of this dataset is partitioned into 6 fixed dynamics components under the heterogeneous noise model, 2 changing components with changing causal dynamics and 1 component modulated by domain index only. The fixed and changing dynamics components follow the same generating procedures above. The global change component is sampled from i.i.d Gaussian distribution whose mean and variance are modulated by domain index. In particular, distribution mean terms are uniformly sampled between $[-1,1]$ and variance terms are uniformly sampled between $[0.01, 1]$. 

\subsubsection{Real-world Dataset}\label{ap:real}
% \subsection{Evaluation Metrics}
\paragraph{Modified Cartpole} The Cartpole problem \citep{huang2021adarl} ``consists of a cart and a vertical pendulum attached to the cart using a passive pivot joint. The cart can move left or right. The task is to prevent the vertical pendulum from falling by putting a force on the cart to move it left or right. The action space consists of two actions: moving left or right.'' 

\begin{figure}[h]
 \centering
 \includegraphics[width=0.75\textwidth]{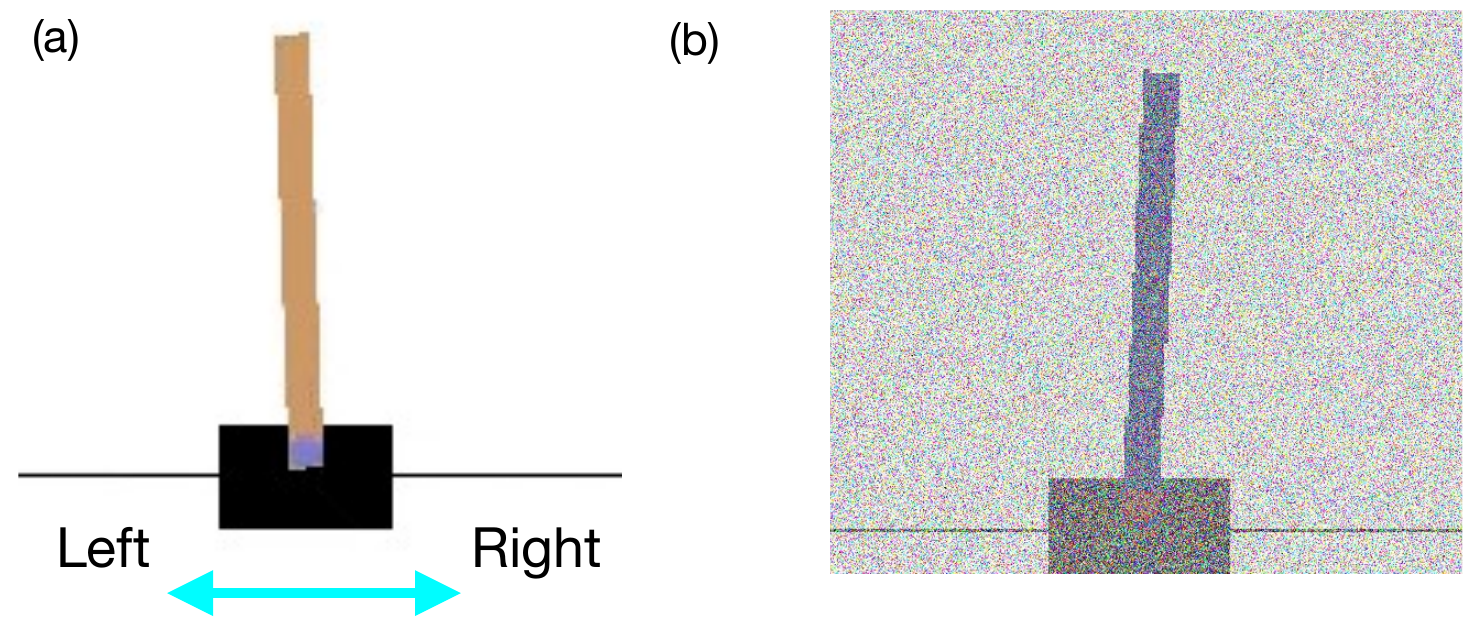}
 \caption{``Visual examples of Cartpole game and change factors. (a) Cartpole game; (b) Modified Cartpole game with Gaussian noise on the image. The light blue arrows are added to show the direction in which the agent can move.'' Figure source: \citep{huang2021adarl}.
 \label{Figure: visual_cartpole}}
\end{figure} 
 
The original dataset \citep{huang2021adarl} introduces ``two change factors respectively for the state transition dynamics $\theta^{\text{dyn}}_k$: varying gravity and varying mass of the cart, and a change factor in the observation function $\theta^{\text{obs}}_k$ that is the image noise level.
Fig.~\ref{Figure: visual_cartpole} gives a visual example of Cartpole game, and the image with Gaussian noise. The images of the varying gravity and mass look exactly like the original image.
Specifically, in the gravity case, we consider source domains with gravity $g= \{5, 10, 20, 30, 40\}$. We take into account both interpolation (where the gravity in the target domain is in the support of that in source domains) with $g = \{15\}$,  and extrapolation (where it is out of the support w.r.t. the source domains) with $g = \{55\}$. 
Similarly, we consider source domains where the mass of the cart is $m = \{0.5, 1.5, 2.5, 3.5, 4.5\}$, while in target domains it is $m=\{1.0, 5.5\}$. In terms of changes on the observation function, we add Gaussian noise on the images with variance $\sigma = \{0.25, 0.75, 1.25, 1.75, 2.25 \}$ in source domains, and $\sigma = \{0.5, 2.75\}$ in target domains. The detailed settings in both source and target domains are in Table~\ref{Table: cartpole_setting}.''

 \begin{table}[h]
    \centering
    \begin{tabular}{c|c|c|c}
    \toprule
        & Gravity & Mass & Noise \\ \hline
    Source domains   &  $\{5, 10, 20, 30, 40\}$ & $\{0.5, 1.5, 2.5, 3.5, 4.5\}$ & $\{0.25, 0.75, 1.25, 1.75, 2.25\}$ \\ \hline
    Interpolation set  &  $\{15\}$ & $\{1.0\}$ & $\{0.5\}$ \\ \hline
    Extrapolation set  &  $\{55\}$ & $\{5.5\}$ & $\{2.75\}$ \\ \bottomrule
    \end{tabular}
    \caption{``The settings of source and target domains for modified Cartpole experiments'' \citep{huang2021adarl}.}
    \label{Table: cartpole_setting}
\end{table}
% \iffalse
% \begin{figure}[h]
%     \centering
%     \includegraphics[width=0.85\textwidth]{cart_theta_sample.pdf}
%     \caption{Learned $\boldsymbol{\theta_k}$ for change factors in target domains with different $N_{target}$. \textcolor{red}{to be updated}}
%     \label{Fig: theta_cartpole_sample}
% \end{figure}
% \fi

% \begin{figure}[t]
%     \centering
%     \vspace{-5mm}
%     \includegraphics[width=0.85\textwidth]{learnt_theta_cartpole.pdf}
%     \caption{The estimated $\thetaSp$ for the three change factors: gravity (A), mass (B), and noise level (C) for $N_{target}=50$. Results with other sample sizes are in Appendix.
%     \label{Fig: theta_cartpole}.}
% \end{figure}

% \paragraph{CMU-Mocap}
%  CMU MoCap (\url{http://mocap.cs.cmu.edu/}) is an open-source human motion capture dataset with various motion capture recordings (e.g., walk, jump, basketball, etc.) performed by over 140 subjects. In this work, we fit our model on 12 trials of ``walk'' recordings (Subject 7). Skeleton-based measurements have 62 observed variables corresponding to the locations of joints (e.g., head, foot, shoulder, wrist, throat, etc.) of the human body at each time step.

\paragraph{CMU-Mocap}
 CMU MoCap (\url{http://mocap.cs.cmu.edu/}) is an open-source human motion capture dataset with various motion capture recordings (e.g., walk, jump, basketball, etc.) performed by over 140 subjects. In this work, we fit our model on 11 trials of ``walk'' recordings (Subject \#8). Skeleton-based measurements have 62 observed variables corresponding to the locations of joints (e.g., head, foot, shoulder, wrist, throat, etc.) of the human body at each time step.
 
\subsection{Mean Correlation Coefficient}
MCC is a standard metric for evaluating the recovery of latent factors in ICA literature. MCC first calculates the  absolute values of the correlation coefficient between every ground-truth factor against every estimated latent variable. Pearson correlation coefficients or Spearman's rank correlation coefficients can be used depending on whether componentwise invertible nonlinearities exist in the recovered factors. The possible permutation is adjusted by solving a linear sum assignment problem in polynomial time on the computed correlation matrix. 
\section{Implementation Details}\label{ap:implememtation}

\subsection{Modular Prior Likelihood Derivation}\label{ap:derive}

Let us start with an illustrative example of stationary latent causal processes consisting of two time-delayed latent variables, i.e., $\mathbf{z}_t = [z_{1,t}, z_{2,t}]$ with maximum time lag $L=1$, i.e., $z_{i,t} = f_i(\mathbf{z}_{t-1}, \epsilon_{i,t})$ with mutually independent noises. Let us write this latent process as a transformation map $\mathbf{f}$ (note that we overload the notation $f$ for transition functions and for the transformation map):

\begin{equation}
    \begin{bmatrix}
    z_{1,t-1} \\
    z_{2,t-1} \\
    z_{1,t} \\
    z_{2,t} \\
    \end{bmatrix} 
    =\mathbf{f} \left(
    \begin{bmatrix}
    z_{1,t-1} \\
    z_{2,t-1} \\
    \epsilon_{1,t} \\
    \epsilon_{2,t}
    \end{bmatrix}
    \right).
\end{equation}

By applying the change of variables formula to the map $\mathbf{f}$, we can evaluate the joint distribution of the latent variables $p(z_{1,t-1}, z_{2,t-1}, z_{1,t},z_{2,t})$ as:
\begin{equation}\label{eq:cvt-1}
p(z_{1,t-1}, z_{2,t-1}, z_{1,t},z_{2,t}) = p(z_{1,t-1}, z_{2,t-1}, \epsilon_{1,t},\epsilon_{2,t}) / \left|\det \mathbf{J}_\mathbf{f}\right|,
\end{equation}
where $\mathbf{J}_\mathbf{f}$ is the Jacobian matrix of the map $\mathbf{f}$, which is naturally a low-triangular matrix:
$$
\mathbf{J}_\mathbf{f} = 
\begin{bmatrix}
1 & 0 & 0 & 0\\
0 & 1 & 0 & 0\\
\frac{\partial z_{1,t}}{\partial z_{1,t-1}} & \frac{\partial z_{1,t}}{\partial z_{2,t-1}} & \frac{\partial z_{1,t}}{\partial \epsilon_{1,t}} & 0 \\ 
\frac{\partial z_{2,t}}{\partial z_{1,t-1}} & \frac{\partial z_{2,t}}{\partial z_{2,t-1}} & 0 & \frac{\partial z_{2,t}}{\partial \epsilon_{2,t}}
\end{bmatrix}.
$$
Given that this Jacobian is triangular, we can efficiently compute its determinant as $\prod_i \frac{\partial z_{i,t}}{\partial \epsilon_{i,t}}$. Furthermore, because the noise terms are mutually independent, and hence $\epsilon_{i,t} \perp \epsilon_{j,t}$ for $j \neq i$ and $\epsilon_t \perp \mathbf{z}_{t-1}$, we can write the RHS of Eq.~\ref{eq:cvt-1} as:

\begin{equation}\label{eq:example}
\begin{aligned}
    p(z_{1,t-1}, z_{2,t-1}, z_{1,t},z_{2,t}) &= p(z_{1,t-1}, z_{2,t-1}) \times p(\epsilon_{1,t},\epsilon_{2,t}) / \left|\det \mathbf{J}_\mathbf{f}\right| \quad (\text{because }\epsilon_t \perp \mathbf{z}_{t-1})\\
    &= p(z_{1,t-1}, z_{2,t-1}) \times \prod_i p(\epsilon_{i,t}) / \left|\det \mathbf{J}_\mathbf{f}\right| \quad (\text{because }\epsilon_{1,t} \perp \epsilon_{2,t})
\end{aligned}    
\end{equation}

Finally, by canceling out the marginals of the lagged latent variables $p(z_{1,t-1}, z_{2,t-1})$ on both sides, we can evaluate the transition prior likelihood as:

\begin{equation}\label{eq:example-likelihood}
p( z_{1,t},z_{2,t} \vert z_{1,t-1}, z_{2,t-1}) = \prod_i p(\epsilon_{i,t}) / \left|\det \mathbf{J}_\mathbf{f}\right| = \prod_i p(\epsilon_{i,t}) \times \left|\det \mathbf{J}_\mathbf{f}^{-1}\right|.
\end{equation}

Now we generalize this example and derive the modular prior likelihood below. 

\paragraph{Fixed Causal Dynamics}  Let $\{f^{-1}_{s}\}_{s=1,2,3...}$ be a set of learned inverse fixed dynamics transition functions that take the estimated latent causal variables in the fixed dynamics subspace and lagged latent variables, and output the noise terms, i.e., $\hat{\epsilon}_{s,t} = f^{-1}_{s}\left(\hat{z}_{s,t}^{\text{fix}}, \{\hat{\mathbf{z}}_{t-\tau}\} \right)$. 

Design transformation $\mathbf{A} \rightarrow \mathbf{B}$ with  low-triangular Jacobian as follows:
\vspace{-2mm}
\begin{align}
\small
\underbrace{
\begin{bmatrix}
\hat{\mathbf{z}}_{t-L},
\hdots,
\hat{\mathbf{z}}_{t-1},
\hat{\mathbf{z}}_{t}^{\text{fix}}
\end{bmatrix}^{\top}
}_{\mathbf{A}}
\textrm{~mapped to~}
\underbrace{
\begin{bmatrix}
\hat{\mathbf{z}}_{t-L},
\hdots,
\hat{\mathbf{z}}_{t-1},
\hat{\mathbf{\epsilon}}_{s,t}
\end{bmatrix}^{\top}
}_{\mathbf{B}},
~
with~
\mathbf{J}_{\mathbf{A} \rightarrow \mathbf{B}} = 
\begin{pmatrix}
\mathbb{I}_{nL} & 0 \\
* & \text{diag}\left(\frac{\partial f^{-1}_{s,i}}{\partial \hat{z}_{it}^{\text{fix}}}\right)
\end{pmatrix}.
\end{align}
Similar to Eq.~\ref{eq:example-likelihood}, we can obtain the joint distribution of the estimated fixed dynamics subspace as:
\vspace{-0.5mm}
\begin{align}
&\log p(\mathbf{A}) = \underbrace{\log p\left(\hat{\mathbf{z}}_{t-L}, \hdots, \hat{\mathbf{z}}_{t-1}\right) + \sum_{i=1}^n \log p(\hat{\epsilon}_{s,t})}_\text{Because of mutually independent noise assumption} + \log \left(\lvert \det \left(\mathbf{J}_{\mathbf{A} \rightarrow \mathbf{B}}\right) \rvert \right) \label{eq:np-joint}.\\
&\log p\left(\hat{\mathbf{z}}_t^{\text{fix}} \vert \{\hat{\mathbf{z}}_{t-\tau}\}_{\tau=1}^L\right) = \sum_{i=1}^n \log p(\hat{\epsilon}_{s,t})+ \sum_{i=1}^n \log \Big| \frac{\partial f^{-1}_{s}}{\partial \hat{z}_{s,t}}\Big|  
\end{align}

\paragraph{Changing Causal Dynamics} The differences from fixed dynamics are that the learned inverse changing dynamics transition functions take additional learned change factors of the context as input arguments to out the noise terms, i.e, $\hat{\epsilon}_{c,t} = f^{-1}_{c}\left(\hat{z}_{c,t}^{\text{chg}}, \{\hat{\mathbf{z}}_{t-\tau}\}, \mathbf{u}_k \right) =  f^{-1}_{c}\left(\hat{z}_{c,t}^{\text{chg}}, \{\hat{\mathbf{z}}_{t-\tau}\}, \boldsymbol{\theta}_k^{\text{dyn}} \right)$. 
\begin{align}
\log p\left(\hat{\mathbf{z}}_t^{\text{chg}} \vert \{\hat{\mathbf{z}}_{t-\tau}\}_{\tau=1}^L, \mathbf{u}_k\right) = \sum_{i=1}^n \log p(\hat{\epsilon}_{c,t} \vert \mathbf{u}_k)+ \sum_{i=1}^n \log \Big| \frac{\partial f^{-1}_{c}}{\partial \hat{z}_{c,t}}\Big|
\end{align}

\paragraph{Observation Changes} The global observation changes are captured by the learned inverse $f_o^{-1}$,which takes the estimated latent subspace and the learned change factors for global observation $\boldsymbol{\theta}_k^{\text{obs}}$ of context $k$, and output random noise, i.e, $\hat{\epsilon}_{o,t} = f^{-1}_{o}\left(\hat{z}_{o,t}^{\text{obs}}, \mathbf{u}_k \right) =  f^{-1}_{c}\left(\hat{z}_{c,t}^{\text{chg}}, \boldsymbol{\theta}_k^{\text{obs}} \right)$.
\begin{align}
\log p\left(\hat{\mathbf{z}}_t^{\text{obs}} \vert \mathbf{u}_k\right) = \sum_{i=1}^n \log p(\hat{\epsilon}_{o,t} \vert \mathbf{u}_k)+ \sum_{i=1}^n \log \Big| \frac{\partial f^{-1}_{o}}{\partial \hat{z}_{o,t}}\Big|.
\end{align}
\subsection{Comparisons with AdaRL \cite{huang2021adarl}}

In terms of the implementation, in our work, we enforce the independent noise or conditional independence condition explicitly in \Eqref{eq:np-trans} (derived in Appendix \ref{ap:derive}) for the identifiability of the latent processes. But disentanglement is not the main goal of AdaRL so it used Mixture Density Network (MDN) to approximate the transition prior. Our framework is simpler than AdaRL in the inference module (we use only $\mathbf{x}_t$ to infer $\mathbf{z}_t$) and the loss function (we don’t have the prediction branch or the sparsity loss), thanks to the nonlinear ICA formulation in \Eqref{eq:model}. Our identifiability conditions do not rely on the sparsity constraints in the underlying data generating process.

\subsection{Network Architecture}\label{ap:arch}
We summarize our network architecture below and describe it in detail in Table~\ref{tab:arch-details} and Table~\ref{tab:arch-cnndetails}.

\begin{table}[ht]
\caption{ Architecture details. BS: batch size, T: length of time series, i\_dim: input dimension, z\_dim: latent dimension, LeakyReLU: Leaky Rectified Linear Unit.}
\label{tab:arch-details}
\resizebox{\textwidth}{!}{%
\begin{tabular*}{1.05\textwidth}{@{\extracolsep{\fill}}|lll|}

\toprule
\textbf{Configuration} & \textbf{Description} &  \textbf{Output} \\
\toprule
\toprule
\textbf{1. MLP-Encoder} &  Encoder for Synthetic Data & \\
\toprule
Input: $\x_{1:T}$ & Observed time series & BS $\times$ T $\times$ i\_dim \\
Dense & 128 neurons, LeakyReLU & BS $\times$ T $\times$ 128\\
Dense & 128 neurons, LeakyReLU & BS $\times$ T $\times$ 128 \\
Dense & 128 neurons, LeakyReLU & BS $\times$ T $\times$ 128 \\
Dense & Temporal embeddings & BS $\times$ T $\times$ z\_dim \\
\toprule
\toprule
\textbf{2. MLP-Decoder} & Decoder for Synthetic Data & \\
\toprule
Input: $\hat{\z}_{1:T}$ & Sampled latent variables & BS $\times$ T $\times$ z\_dim \\
Dense & 128 neurons, LeakyReLU & BS $\times$ T $\times$ 128 \\
Dense & 128 neurons, LeakyReLU & BS $\times$ T $\times$ 128 \\
Dense & i\_dim neurons, reconstructed $\mathbf{\hat{x}}_{1:T}$ & BS $\times$ T $\times$ i\_dim \\
\toprule
\toprule
\textbf{5. Factorized Inference Network} & Bidirectional Inference Network & \\
\toprule
Input & Sequential embeddings & BS $\times$ T $\times$ z\_dim \\
% GRUInference & Bidirectional inference & BS $\times$ T $\times$ 2$\ast$z\_dim\\
% TemporalDynamics & Use past $\{\hat{\z}_{t-\tau}\}$ to infer posteriors of $\mathbf{z}_{t}$  & BS $\times$ T $\times$ 2$\ast$z\_dim\\
% ResidualBlock & Skip-connection of the two inferences & BS $\times$ T $\times$ 2$\ast$z\_dim\\
Bottleneck & Compute mean and variance of posterior & $\mathbf{\mu}_{1:T}, \mathbf{\sigma}_{1:T}$ \\
Reparameterization & Sequential sampling & $\hat{\z}_{1:T}$ \\
\toprule
\toprule
% \textbf{6. Causal Process Prior (VAR)} & Linear Transition Prior Network& \\
% \toprule
% Input & Sampled latent variable sequence $\hat{\z}_{1:T}$ & BS $\times$ T $\times$ z\_dim \\
% InverseTransition & Compute estimated residuals $\hat{\epsilon}_{it}$ & BS $\times$ T $\times$ z\_dim \\
% SplineFlow & Score the likelihood of residuals & BS \\
% \toprule
% \toprule
\textbf{6. Modular Prior} & Nonlinear Transition Prior Network & \\
\toprule
Input & Sampled latent variable sequence $\hat{\z}_{1:T}$ & BS $\times$ T $\times$ z\_dim \\
InverseTransition & Compute estimated residuals $\hat{\epsilon}_{it}$ & BS $\times$ T $\times$ z\_dim \\
JacobianCompute & Compute $\log \left(\lvert \det \left(\mathbf{J}\right) \rvert \right)$ & BS\\
\bottomrule
\end{tabular*}
}
\end{table}

\begin{table}[ht]
\caption{ Architecture details on CNN encoder and decoder. BS: batch size, T: length of time series, h\_dim: hidden dimension, z\_dim: latent dimension, F: number of filters, (Leaky)ReLU: (Leaky) Rectified Linear Unit.}
\label{tab:arch-cnndetails}
\resizebox{\textwidth}{!}{%
\begin{tabular*}{1.05\textwidth}{@{\extracolsep{\fill}}|lll|}
\toprule
\textbf{Configuration} & \textbf{Description} &  \textbf{Output} \\
\toprule
\toprule
\textbf{3.1.1 CNN-Encoder} & Feature Extractor & \\
\toprule
Input: $\x_{1:T}$ & RGB video frames & BS $\times$ T $\times$ 3 $\times$ 64 $\times$ 64 \\
Conv2D & F: 32, BatchNorm2D, LeakyReLU & BS $\times$ T $\times$ 32 $\times$ 64 $\times$ 64 \\
Conv2D & F: 32, BatchNorm2D, LeakyReLU & BS $\times$ T $\times$ 32 $\times$ 32 $\times$ 32 \\
Conv2D & F: 32, BatchNorm2D, LeakyReLU & BS $\times$ T $\times$ 32 $\times$ 16 $\times$ 16 \\
Conv2D & F: 64, BatchNorm2D, LeakyReLU & BS $\times$ T $\times$ 64 $\times$ 8 $\times$ 8 \\
Conv2D & F: 64, BatchNorm2D, LeakyReLU & BS $\times$ T $\times$ 64 $\times$ 4 $\times$ 4 \\
Conv2D & F: 128, BatchNorm2D, LeakyReLU & BS $\times$ T $\times$ 128 $\times$ 1 $\times$ 1 \\
Dense & F: 2 * z\_dim = dimension of hidden embedding & BS $\times$ T $\times$ 2 * z\_dim \\
\toprule
% \toprule
% \textbf{3.1.2 CNN-Encoder} & Keypoint Predictor & \\
% \toprule
% Input: $\x_{1:T}$ & RGB video frames & BS $\times$ T $\times$ 3 $\times$ 64 $\times$ 64 \\
% Conv2D & F: 16, BatchNorm2D, LeakyReLU & BS $\times$ T $\times$ 16 $\times$ 64 $\times$ 64 \\
% Conv2D & F: 16, BatchNorm2D, LeakyReLU & BS $\times$ T $\times$ 16 $\times$ 64 $\times$ 64 \\
% Conv2D & F: 32, BatchNorm2D, LeakyReLU & BS $\times$ T $\times$ 32 $\times$ 32 $\times$ 32 \\
% Conv2D & F: 32, BatchNorm2D, LeakyReLU & BS $\times$ T $\times$ 32 $\times$ 32 $\times$ 32 \\
% Conv2D & F: 64, BatchNorm2D, LeakyReLU & BS $\times$ T $\times$ 64 $\times$ 16 $\times$ 16 \\
% Conv2D & F: 5 = number of objects & BS $\times$ T $\times$ 5 $\times$ 16 $\times$ 16\\
% Conv2D & SpatialSoftmax, lim=[-1,1,-1,1] & BS $\times$ T $\times$ 5 $\times$ 2 \\
% \toprule
% \toprule
% \textbf{3.2 KiTTiMask-Encoder} & Mask Encoder & \\
% \toprule
% Input: $\x_{1:T}$ & Semantic-segmented video frames & BS $\times$ T $\times$ 1 $\times$ 64 $\times$ 64 \\
% Conv2D & F: 32, BatchNorm2D, ReLU & BS $\times$ T $\times$  32 $\times$ 32 $\times$ 32 \\
% Conv2D & F: 32, BatchNorm2D, ReLU & BS $\times$ T $\times$  32 $\times$ 16 $\times$ 16 \\
% Conv2D & F: 64, BatchNorm2D, ReLU & BS $\times$ T $\times$  64 $\times$ 8 $\times$ 8 \\
% Conv2D & F: 64, BatchNorm2D, ReLU & BS $\times$ T $\times$  64 $\times$ 4 $\times$ 4 \\
% Conv2D & F: h\_dim, BatchNorm2D, ReLU & BS $\times$  T $\times$ h\_dim $\times$ 1 $\times$ 1 \\
% Dense & 2$\ast$z\_dim neurons, features $\mathbf{\hat{x}}_{1:T}$ & BS $\times$ T $\times$ 2$\ast$z\_dim \\
% \toprule
\toprule
\textbf{4.1 CNN-Decoder} & Video Reconstruction & \\
\toprule
Input: $\z_{1:T}$ & Sampled latent variable sequence & BS $\times$ T $\times$ z\_dim \\
Dense & F: 128 , LeakyReLU & BS $\times$ T $\times$ 128 $\times$ 1 $\times$ 1\\
ConvTranspose2D & F: 64, BatchNorm2D, LeakyReLU & BS $\times$ T $\times$ 64 $\times$ 4 $\times$ 4 \\
ConvTranspose2D & F: 64, BatchNorm2D, LeakyReLU & BS $\times$ T $\times$ 64 $\times$ 8 $\times$ 8 \\
ConvTranspose2D & F: 32, BatchNorm2D, LeakyReLU & BS $\times$ T $\times$ 32 $\times$ 16 $\times$ 16 \\
ConvTranspose2D & F: 32, BatchNorm2D, LeakyReLU & BS $\times$ T $\times$ 32 $\times$ 32 $\times$ 32 \\
ConvTranspose2D & F: 32, BatchNorm2D, LeakyReLU & BS $\times$ T $\times$ 32 $\times$ 64 $\times$ 64 \\
ConvTranspose2D & F: 3, estimated scene $\mathbf{\hat{x}}_{1:T}$ & BS $\times$ T $\times$ 3 $\times$ 64 $\times$ 64 \\
\bottomrule
\end{tabular*}
}
\end{table}

\subsection{Hyperparameter and Training}

\paragraph{Hyperparameter Selection}

The hyperparameters include $\beta$, which is the weight of KLD terms, as well as the latent size $n$ and maximum time lag $L$. We use the ELBO loss to select the best pair of $\beta$ because low ELBO loss always leads to high MCC. We always set a larger latent size than the true latent size. This is critical in real-world datasets because restricting the latent size will hurt the reconstruction performances and over-parameterization makes the framework robust to assumption violations. For the maximum time lag $L$, we set it by the rule of thumb. For instance, we use $L=2$ for temporal datasets with a latent physics process (e.g, cartpole, cmu-mocap).

\paragraph{Training Details} 
 The models were implemented in \texttt{PyTorch} 1.8.1. The VAE network is trained using AdamW optimizer for a maximum of 50 epochs and early stops if the validation ELBO loss does not decrease for five epochs. A learning rate of 0.002 and a mini-batch size of 64 are used.  We have used three random seeds in each experiment and reported the mean performance with standard deviation averaged across random seeds.

% \paragraph{Training Stability}
% \xrev{We have used several tricks to improve training stability: (1) we use a slightly larger latent size than the true latent size for real-world datasets in order to make sure the meaning latent variables are among the recovered latent; (2) we use AdamW optimizer as a regularizer to prevent training from being interrupted by overflow or underflow of variance terms of VAE; (3) we use a large learning rate (0.002) during training. }

\paragraph{Computing Hardware} 

We used a machine with the following CPU specifications: Intel(R) Core(TM) i7-7700K CPU @ 4.20GHz; 8 CPUs, four physical cores per CPU, a total of 32 logical CPU units. The machine has two GeForce GTX 1080 Ti  GPUs with 11GB GPU memory.
 
\paragraph{Reproducibility}

We've included the code for the framework and all experiments in the supplement. We plan to release our code under the MIT License after the paper review period. % \begin{itemize}
% \item Section~\ref{app:exp}: we give extensive details on the experimental setup, as well as additional experiments;
% \item Section~\ref{app:estimation}: we discuss the estimation algorithms we used with \icebeem and how they can be extended to the conditional setting;
% \item Section~\ref{app:nongenmain}: we prove the identifiability of \icebeem and its universal approximation capability;
% \item Section~\ref{app:ebmimca}: we show how \icebeem estimates IMCA;
% \item Section~\ref{app:imca}: we provide a thorough theoretical analysis of the IMCA framework and draw parallels to the identifiability results in nonlinear ICA.
% \end{itemize}

% \section{FAQs}\label{ap:faqs}

% \textbf{Q: Why we would assume that the latent temporal processes have nonparametric relations? State space models have a long history in the systems identification and machine learning community and performed identification and video prediction on much more complex datasets then the very simple ones the authors showcase in their submission. Does the improvement mainly come from using all feature maps?}  \textbf{A: No}.

% \textbf{Q: Does the improvement mainly come from using all feature maps?}  \textbf{A: No}.
\section{Additional Experiment Results}

\begin{figure}[ht]
    \centering
    \includegraphics[width=0.95\linewidth]{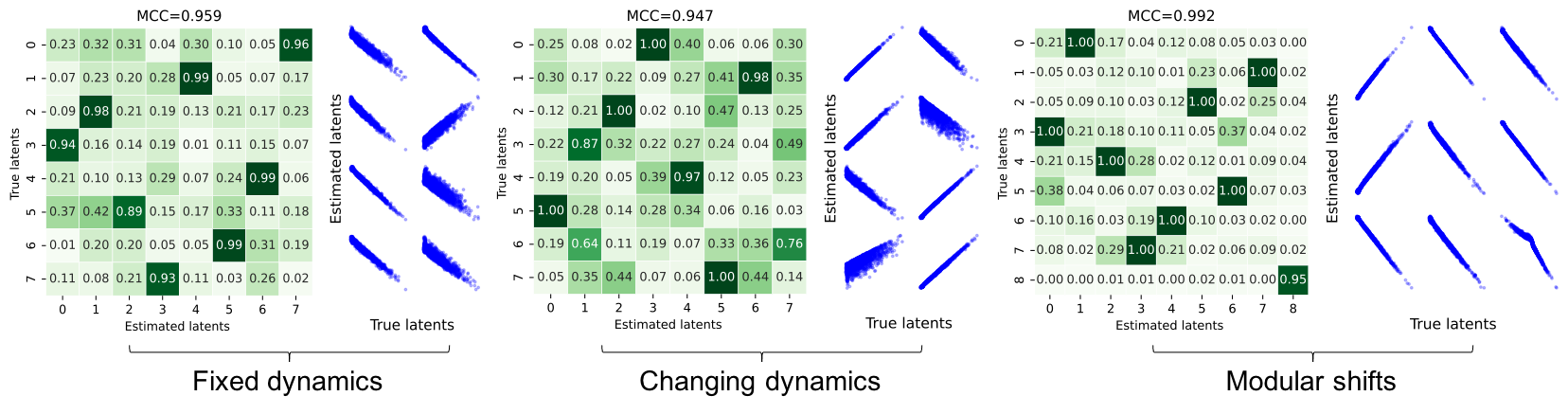}
    \caption{Results for three synthetic datasets: in each block, the left shows the MCC for causally-related and the left are scatterplots between estimated and true factors.  \label{fig:simulated}}
    \vspace{-0.5cm}
\end{figure}

\end{document}